\def\isarxiv{1}
\newtheorem{theorem}{Theorem}[section]
\newtheorem{lemma}[theorem]{Lemma}
\newtheorem{definition}[theorem]{Definition}
\newtheorem{proposition}[theorem]{Proposition}
\newtheorem{corollary}[theorem]{Corollary}
\newtheorem{problem}[theorem]{Problem}
\newcommand{\wh}{\widehat}
\newcommand{\ov}{\overline}
\newcommand{\R}{\mathbb{R}}
\renewcommand{\hat}{\wh}
\DeclareMathOperator*{\E}{{\mathbb{E}}}
\DeclareMathOperator{\poly}{poly}
\DeclareMathOperator{\tr}{tr}
\newcommand{\minip}{\mathsf{Min}\text{-}\mathsf{IP}} 
\newcommand{\maxip}{\mathsf{MaxIP}}
\newcommand{\lsh}{$\mathsf{LSH}$}
\newcommand{\ann}{\mathsf{ANN}}
\definecolor{mygreen}{RGB}{80,180,0}
\definecolor{b2}{RGB}{51,153,255}
\title{Breaking the Linear Iteration Cost Barrier for Some Well-known Conditional Gradient Methods Using 
MaxIP Data-structures\footnote{A preliminary version of this paper appeared in Thirty-Fifth Conference on Neural Information Processing Systems (NeurIPS 2021)}}
\author{
Anshumali Shrivastava\thanks{\texttt{anshumali@rice.edu}. Rice University and ThirdAI Corp.}
\and
Zhao Song\thanks{\texttt{zsong@adobe.com}. Adobe Research.}
\and
Zhaozhuo Xu\thanks{\texttt{zx22@rice.edu}. Rice University.}
}
\author{%
  Zhaozhuo Xu \\
   Rice University\\
  \texttt{zx22@rice.edu} \\
   \And
   Zhao Song \\
   Adobe Research \\
   \texttt{zsong@adobe.com} \\
   \AND
   Anshumali Shrivastava \\
   Rice University and ThirdAI Corp.\\
   \texttt{anshumali@rice.edu} \\
}
\date{}
\begin{document}

\ifdefined\isarxiv
\begin{titlepage}
  \maketitle
  \begin{abstract}
  Conditional gradient methods (CGM) are widely used in modern machine learning. CGM's overall running time usually consists of two parts: the number of iterations and the cost of each iteration. Most efforts focus on reducing the number of iterations as a means to reduce the overall running time. In this work, we focus on improving the per iteration cost of CGM. The bottleneck step in most CGM is maximum inner product search ($\maxip$), which requires a linear scan over the parameters.  In practice, approximate $\maxip$ data-structures are found to be helpful heuristics. However, theoretically, nothing is known about the combination of approximate $\maxip$ data-structures and CGM. In this work, we answer this question positively by providing a formal framework to combine the locality sensitive hashing type approximate $\maxip$ data-structures with CGM algorithms.  As a result, we show the first algorithm, where the cost per iteration is sublinear in the number of parameters, for many fundamental optimization algorithms, e.g., Frank-Wolfe, Herding algorithm, and policy gradient.

  \end{abstract}
  \thispagestyle{empty}
\end{titlepage}

\newpage
\else
\maketitle
  \begin{abstract}
  
  \end{abstract}
\fi

\section{Introduction}
Conditional gradient methods (CGM), such as Frank-Wolfe and its variants, are well-known optimization approaches that have been extensively used in modern machine learning. For example, CGM has been applied to kernel methods~\cite{blo12,tlr21}, structural learning~\cite{ljsp13} and online learning~\cite{fgm17,lhy21,hk12}. 

\paragraph{Running Time Acceleration in Optimization:}
Recent years have witnessed the success of large-scale machine learning models trained on vast amounts of data. In this learning paradigm, the computational overhead of most successful models is dominated by the optimization process~\cite{dclt18,bmr+20}. Therefore, reducing the running time of the optimization algorithm is of practical importance. The total running time in optimization can be decomposed into two components: (1) the number of iterations towards convergence, (2) the cost spent in each iteration. Reducing the number of iterations requires a better understanding of the geometric proprieties of the problem at hand and the invention of better potential functions to analyze the progress of the algorithm~\cite{j13,gh15,rsps16,rhsps16,ahhl17,lzc+19,wpd+20}. Reducing the cost spent per iteration usually boils down to designing problem-specific discrete data-structures.  In the last few years, we have seen
a remarkable growth of using data-structures to reduce iteration cost~\cite{cls19,lsz19,blss20,dly21,b20,bln+20,jklps20,jswz21,bll+21,sy21,bpsw21,syz21_neurips,hjs+21,szz21}.

\paragraph{MaxIP Data-structures for Iteration Cost Reduction:}
A well-known strategy in optimization, with CGM, is to perform a greedy search over the weight vectors~\cite{j13,gh15,ahhl17,lsz19,fw56} or training samples~\cite{ldh+17,ldl+18} in each iteration. In this situation, the cost spent in each iteration is linear in the number of parameters. In practical machine learning, recent works~\cite{cxs19,cmfgts20,clp+21,dmzs21,xcl+21} formulate this linear cost in iterative algorithms as an approximate maximum inner product search problem ($\maxip$). They speed up the amortized cost per iteration via efficient data-structures from recent advances in approximate $\maxip$~\cite{sl14,sl15_uai,sl15_www,ns15,gkcs16,yhld17,mb18,ztxl19,tzxl19,gslgsck20}. In approximate $\maxip$ data-structures, locality sensitive hashing ({\lsh}) achieves promising performance with efficient random projection based preprocessing strategies~\cite{sl14,sl15_uai,sl15_www,ns15}. Such techniques are widely used in practice for cost reduction in optimization. \cite{cxs19}  proposes an {\lsh} based gradient sampling approach that reduces the total empirical running time of the adaptive gradient descent. \cite{cmfgts20} formulates the forward propagation of deep neural network as a $\maxip$ problem and uses {\lsh} to select a subset of neurons for backpropagation. Therefore, the total running time of neural network training could be reduced to sublinear in the number of neurons. \cite{dmzs21} extends this idea with system-level design for further acceleration, and \cite{clp+21} modifies the {\lsh} with learning and achieves promising acceleration in attention-based language models. \cite{xcl+21} formulates the greedy step in iterative machine teaching (IMT) as a $\maxip$ problem and scale IMT to large datasets with {\lsh}. \cite{syz21_neurips} shows how to use the exact $\maxip$ data-structure to speedup the training of one-hidden layer over-parameterized ReLU neural network.

\paragraph{Challenges of Sublinear Iteration Cost CGM:}
Despite the practical success of cost-efficient iterative algorithms with approximate $\maxip$ data-structure, the theoretical analysis of its combination with CGM is not well-understood. In this paper, we focus on this combination and target answering the following questions: (1) how to transform the iteration step of CGM algorithms into an approximate $\maxip$ problem? (2) how does the approximate error in $\maxip$ affect CGM in the total number of iterations towards convergence? (3) how to adapt approximate $\maxip$ data structure for iterative CGM algorithms?

\paragraph{Our Contributions:}
We propose a theoretical formulation for combining approximate $\maxip$ and convergence guarantees of CGM. In particular, we start with the popular Frank-Wolfe algorithm over the convex hull where the direction search in each iteration is a $\maxip$ problem. Next, we propose a sublinear iteration cost Frank-Wolfe algorithm using {\lsh} type $\maxip$ data-structures. We then analyze the trade-off of approximate $\maxip$ and its effect on the number of iterations needed by CGM to converge. We show that the approximation error caused by {\lsh} only leads to a constant multiplicative factor increase in the number of iterations. As a result, we retain the sub-linearly of {\lsh}, with respect to the number of parameters, and at the same time retain the same asymptotic convergence as CGMs. 

We summarize our complete contributions as follows.

\begin{itemize}[nosep]
    \item We give the first theoretical CGM formulation that achieves provable sublinear time cost per iteration. We also extend this result into Frank-Wolfe algorithm, Herding algorithm, and policy gradient method.
    \item We propose a pair of efficient transformations that formulates the direction search in Frank-Wolfe algorithm as a projected approximate $\maxip$ problem.
    \item We present the theoretical results that the proposed sublinear Frank-Wolfe algorithm asymptotically preserves the same order in the number of iterations towards convergence. Furthermore, we analyze the trade-offs between the saving in iteration cost and the increase in the number of iterations to accelerate total running time.
    \item We identify the problems of {\lsh} type approximate $\maxip$ for cost reduction in the popular CGM methods and propose corresponding solutions.
\end{itemize}

The following sections are organized as below: Section~\ref{sec:related} introduces the related works on data-structures and optimization, Section~\ref{sec:alg_intro} introduces our algorithm associated with the main statements convergence, Section~\ref{sec:prof_overview} provides the proof sketch of the main statements, Section~\ref{sec:conclude} concludes the paper.

\section{Related work}\label{sec:related}
\subsection{Maximum Inner Product Search for Machine Learning}
Maximum Inner Product Search ($\maxip$) is a fundamental problem with applications in machine learning. Given a query $x\in \R^d$ and an $n$-vector dataset $Y \subset \R^d$, $\maxip$ targets at searching for $y\in Y$ that maximizes the inner product $x^\top y$.  The naive $\maxip$ solution takes $O(dn)$ by comparing $x$ with each $y\in Y$. To accelerate this procedure, various algorithms are proposed to reduce the running time of $\maxip$~\cite{sl14,sl15_uai,ns15,sl15_www,gkcs16,yhld17,yldcc18,dyh19,mb18,zlw+18,ztxl19,tzxl19,gslgsck20}. We could categorize the $\maxip$ approaches into two categories: reduction methods and non-reduction methods. The reduction methods use transformations that transform approximate $\maxip$ into approximate nearest neighbor search ($\ann$) and solve it with $\ann$ data-structures. One of the popular data-structure is locality sensitive hashing~\cite{im98,diim04}. 
\begin{definition}[Locality Sensitive Hashing]\label{def:lsh}
Let $\ov{c}>1$ denote a parameter. Let $p_1, p_2\in (0,1)$ denote two parameters and $p_1 > p_2 $. We say a function family $\mathcal{H}$ is $(r,\ov{c} \cdot r,p_1,p_2)$-sensitive if and only if, for any vectors $x,y \in \R^d$, for any $h$ chosen uniformly at random from $\mathcal{H}$, we have:
\begin{itemize}
    \item if $\| x-y\|_2 \leq r$, then $\Pr_{h\sim {\cal H}} [ h(x)=h(y) ] \geq p_1$,
    \item if $ \|x-y\|_2 \geq \ov{c} \cdot r$, then $\Pr_{h\sim {\cal H}} [ h(x)=h(y) ] \leq p_2$.
\end{itemize}
\end{definition}

Here we define the {\lsh} functions for euclidean distance. {\lsh} functions could be used for search in cosine~\cite{c02,ll19} or Jaccard similarity~\cite{llz19,ll21}. \cite{sl14} first shows that $\maxip$ could be solved by $\ell_2$ {\lsh} and asymmetric transformations. After that, \cite{sl15_uai,ns15,sl15_www,yldcc18} propose a series of methods to solve $\maxip$ via {\lsh} functions for other distance measures. Besides {\lsh}, graph-based $\ann$ approaches~\cite{yhld17} could also be used after reduction.

On the other hand, the non-reduction method directly builds data-structures for approximate $\maxip$. \cite{gkcs16,gslgsck20} use quantization to approximate the inner product distance and build codebooks for efficient approximate $\maxip$. \cite{yhld17,dyh19} propose a greedy algorithm for approximate $\maxip$ under computation budgets. \cite{mb18,ztxl19,tzxl19} directly construct navigable graphs that achieve state-of-the-art empirical performance.

Recently, there is a remarkable growth in applying data-structures for machine learning~\cite{biw19,znv+20,bdi+20}. Following the paradigm, approximate $\maxip$ data-structures have been applied to overcome the efficiency bottleneck of various machine learning algorithms. \cite{yhld17}  formulates the inference of a neural network with a wide output layer as a $\maxip$ problem and uses a graph-based approach to reduce the inference time. In the same inference task, \cite{lxj+20} proposes a learnable {\lsh} data-structure that further improves the inference efficiency with less energy consumption. In neural network training, \cite{cmfgts20,clp+21,dmzs21} uses approximate $\maxip$ to retrieve interested neurons for backpropagation. In this way, the computation overhead of gradient updates in neural networks could be reduced. In linear regression and classification models, \cite{cxs19} uses approximate $\maxip$ data-structures to retrieve the samples with large gradient norm and perform standard gradient descent, which improves the total running time for stochastic gradient descent. \cite{xcl+21} proposes a scalable machine teaching algorithm that enables iterative teaching in large-scale datasets. In bandit problem, \cite{yrkpds21} proposes an LSH based algorithm that solves linear bandits problem with sublinear time complexity.

Despite the promising empirical results, there is little theoretical analysis on approximate $\maxip$ for machine learning. We summarize the major reasons as: (1) Besides {\lsh}, the other approximate $\maxip$ data-structures do not provide theoretical guarantees on time and space complexity. (2) Current approaches treat data-structures and learning dynamics separately. There is no joint analysis on the effect of approximate $\maxip$ for machine learning.

\subsection{Projection-free Optimization}

 Frank-Wolfe algorithm~\cite{fw56} is a projection-free optimization method with wide applications in convex~\cite{j13,gh15} and non-convex optimizations~\cite{rsps16,rhsps16}. The procedure of Frank-Wolfe algorithm could be summarized as two steps: (1) given the gradient, find a vector in the feasible domain that has the maximum inner product, (2) update the current weight with the retrieved vector. Formally, given a function $g:\R^d\rightarrow \R$ over a convex set $S$, starting from an initial weight $w^0$, the Frank-Wolfe algorithm updates the weight with learning rate $\eta$ follows:
 \begin{align*}
    s^{t} \leftarrow & ~ \arg\min_{s \in S } \langle s, \nabla g(w^t) \rangle\\
    w^{t+1} \leftarrow & ~ (1-\eta_t) \cdot w^t + \eta_t \cdot s^t.
\end{align*}

 Previous literature focuses on reducing the number of iterations for Frank-Wolfe algorithm over specific domains such as ${\ell}_p$ balls~\cite{j13,gh15,ahhl17,lzc+19}.  There exists less work discussing the reduction of iteration cost in the iterative procedure of Frank-Wolfe algorithm.  In this work, we focus on the Frank-Wolfe algorithm over the convex hull of a finite feasible set. This formulation is more general and it includes recent Frank-Wolfe applications in probabilistic modeling~\cite{blo12,tlr21}, structural learning~\cite{ljsp13} and policy optimization~\cite{lhy21}. 

\section{Our Sublinear Iteration Cost Algorithm}\label{sec:alg_intro}

In this section, we formally present our results on the sublinear iteration cost CGM algorithms. We start with the preliminary definitions of the objective function. Then, we present the guarantees on the number of iteration and cost per iterations for our sublinear CGM algorithms to converge.
\subsection{Preliminaries}
We provide the notations and settings for this paper. 
We start with basic notations for this paper. For a positive integer $n$, we use $[n]$ to denote the set $\{1,2,\cdots, n\}$. For a vector $x$, we use $\| x \|_2 := ( \sum_{i=1}^n x_i^2 )^{1/2}$ to denote its $\ell_2$ norm.

We say a function is convex if
\begin{align*}
L(x)\geq L(y)+\langle \nabla L(y),x-y \rangle.
\end{align*}

We say a function is $\beta$-smooth if
\begin{align*}
L(y)\leq L(x)+\langle \nabla L(x),y-x \rangle+\frac{\beta}{2}\| y-x \|^2_2 .
\end{align*}

Given a set $A=\{x_i\}_{i\in [n]} \subset \R^d$, we say its convex hull $\mathcal{B}(A)$ is the collection of all finite linear combinations $y$ that satisfies $y=\sum_{i\in [n]} a_i\cdot x_i$,
where $a_i\in [0,1]$ for all $i\in[n]$ and $\sum_{i\in [n]}a_i= 1$.  Let $D_{\max}$ denote the maximum diameter of ${\cal B}(A)$ so that $\|x-y\|_2\leq D_{\max}$ for all $(x,y)\in {\cal B}(A)$. We present the detailed definitions in Appendix~\ref{sec:preli}.

Next, we present the settings of our work. Let $S\subset \R^D$ denote a $n$-point finite set. Given a convex and $\beta$-smooth function $g : \R^d \rightarrow \R$ defined over the convex hull ${\cal} B(S)$, our goal is to find a $w\in {\cal} B(S)$ that minimizes $g(w)$. Given large $n$ in the higher dimension, the dominant complexity of iteration cost lies in finding the $\maxip$ of $\nabla g(w)$ with respect to $S$. In this setting, the fast learning rate of Frank-Wolfe in $\ell_p$ balls~\cite{j13,ahhl17,lzc+19} can not be achieved. We present the detailed problem setting of the Frank-Wolfe algorithm in Appendix~\ref{sec:alg}.

\subsection{Our Results}

We present our main results with comparison to the original algorithm in Table~\ref{tab:slow_compare}. From the table, we show that with near-linear preprocessing time, our algorithms maintain the same number of iterations towards convergence while reducing the cost spent in each iteration to be sublinear in the number of possible parameters.

\begin{table}[h]
    \centering
    \begin{tabular}{|l|l|l|l|l|l|l|} \hline
        &  {\bf Statement} & {\bf Preprocess} & {\bf \#Iters} & {\bf Cost per iter}  \\ \hline \hline
        Frank-Wolfe  & \cite{j13} & 0 & $O(\beta D_{\max}^2/\epsilon)$ & $O(dn+{\cal T}_{g})$  \\ \hline
        Ours&  Theorem~\ref{thm:frank_wolfe_lsh_informal}  & $ dn^{1+o(1)} $ & $O(\beta D_{\max}^2/\epsilon)$ & $O(  dn^{\rho} +{\cal T}_{g})$ \\ \hline \hline
        Herding & \cite{blo12} & 0 & $O( D_{\max}^2/\epsilon)$ & $O(dn)$  \\ \hline
        Ours  &Theorem~\ref{thm:herding_lsh_informal} & $dn^{1+o(1)}$ & $O(D_{\max}^2/\epsilon)$ & $O(  dn^{\rho} )$ \\ \hline \hline
        Policy gradient & \cite{lhy21} & 0 & $O(\frac{\beta D_{\max}^2}{\epsilon^2(1-\gamma)^3\mu_{\min}^2})$ & $O(dn+{\cal T}_Q)$  \\ \hline
        Ours & Theorem~\ref{thm:policy_gradient_lsh_informal} & $dn^{1+o(1)}$ & $O(\frac{ \beta D_{\max}^2}{\epsilon^2(1-\gamma)^3\mu_{\min}^2})$ & $O(  dn^{\rho} +{\cal T}_Q)$ \\ \hline
    \end{tabular}
    \caption{Comparison between classical algorithm and our sublinear time algorithm. We compare our algorithm with Frank-Wolfe in: (1) ``Frank-Wolfe'' denotes Frank-Wolfe algorithm~\cite{j13} for convex functions over a convex hull. Let ${\cal T}_g$ denote the time for evaluating the gradient for any parameter.  (2) ``Herding'' denotes kernel Herding algorithm~\cite{blo12} (3) ``Policy gradient'' denotes the projection free policy gradient method~\cite{lhy21}. Let ${\cal T}_Q$ denote the time for evaluating the policy gradient for any parameter. Let $\gamma\in(0,1)$ denote the discount factor. Let $\mu_{\min}$ denote the minimum probability density of a state. Note that $n$ is the number of possible parameters.   $n^{o(1)}$ is smaller than $n^{c}$ for any constant $c>0$. Let $\rho \in (0,1)$ denote a fixed parameter determined by {\lsh} data-structure. The  failure probability of our algorithm is $1/\poly(n)$. $\beta$ is the smoothness factor. $D_{\max}$ denotes the maximum diameter of the convex hull.}
    \label{tab:main_compare}
\end{table}

Next, we introduce the statement of our sublinear iteration cost algorithms. We start by introducing our result for improving the running time of Frank-Wolfe.
\begin{theorem}[Sublinear time Frank-Wolfe, informal of Theorem~\ref{thm:frank_wolfe_lsh_formal}]\label{thm:frank_wolfe_lsh_informal}
Let $g : \R^d \rightarrow \R$ denote a convex and $\beta$-smooth function. Let the complexity of calculating $\nabla g(x)$ to be ${\cal T}_{g}$. Let  $S \subset \R^d$ denote a set of $n$ points. Let ${\cal B} \subset \R^d$ denote the convex hull of $S$ with maximum diameter $D_{\max}$. Let $\rho \in (0,1)$ denote a fixed parameter.   
For any parameters $\epsilon, \delta$, there is an iterative algorithm (Algorithm~\ref{alg:frank_wolfe_lsh_formal}) that takes $  O(dn^{1+o(1)})$ time in pre-processing, takes $T = O({\beta D_{\max}^2}/{\epsilon})$ iterations and $O(dn^{\rho}+{\cal T}_{g})$ cost per iteration, starts from a random $w^0$ from ${\cal B}$ as its initialization point,
and outputs $w^T \in \R^d$ from ${\cal B}$ such that
\begin{align*}
   g(w^T) - \min_{w\in \cal{B}} g(w) \leq \epsilon, 
\end{align*}
holds with probability at least $1-1/\poly(n)$. \end{theorem}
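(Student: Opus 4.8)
The plan is to turn the exact linear-minimization step of Frank--Wolfe into a single \emph{projected} approximate $\maxip$ query, to answer that query with an $\ell_2$-{\lsh} data structure whose preprocessing is $dn^{1+o(1)}$ and whose query time is $dn^{\rho}$, and finally to rerun the Frank--Wolfe convergence analysis with this inexact oracle and check that the approximation costs only a constant factor in the iteration count. Write $w^\ast\in\arg\min_{w\in{\cal B}}g(w)$ and $h_t:=g(w^t)-g(w^\ast)$.

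\textbf{From linear minimization to projected $\maxip$.} Because ${\cal B}$ is the convex hull of the finite set $S$, a minimizer of $\langle s,\nabla g(w^t)\rangle$ over ${\cal B}$ is attained at some $s\in S$, so it suffices to search $S$. The quantity controlling Frank--Wolfe progress is the gap term $\langle\nabla g(w^t),w^t-s\rangle$, and I would design a pair of asymmetric transformations — $\phi$ applied to the query pair $(\nabla g(w^t),w^t)$ and $\psi$ applied once, in preprocessing, to each $s\in S$ — so that $\langle\phi(\nabla g(w^t),w^t),\psi(s)\rangle$ is an increasing affine function of $\langle\nabla g(w^t),w^t-s\rangle$, with all transformed vectors of bounded norm and ambient dimension $d+O(1)$ (so that the {\lsh} exponent $\rho\in(0,1)$ and the $n^{o(1)}$ preprocessing factor are preserved). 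Then one call to a $(c,\tau)$-approximate $\maxip$ data structure on $\psi(S)$ with query $\phi(\nabla g(w^t),w^t)$ returns, with probability at least $1-\delta$, a vertex $\wh s^{\,t}\in S$ with
\begin{align*}
\langle\nabla g(w^t),\,w^t-\wh s^{\,t}\rangle\ \ge\ c\cdot\max_{s\in S}\langle\nabla g(w^t),\,w^t-s\rangle\ \ge\ c\cdot h_t,
\end{align*}
the last step being convexity, since the exact maximum is the Frank--Wolfe duality gap and upper bounds $h_t$. The one delicate parameter is the threshold $\tau$: the structure certifies nothing when $\max_s\langle\nabla g(w^t),w^t-s\rangle<\tau$, but that event already forces $h_t=O(\tau)$, so I would take $\tau=\Theta(\epsilon)$ and terminate (or fall back to a single exact scan) in that case.

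\textbf{Convergence with the inexact oracle.} Substituting $\wh s^{\,t}$ into $w^{t+1}=(1-\eta_t)w^t+\eta_t\wh s^{\,t}$ and using $\beta$-smoothness with $\|w^t-\wh s^{\,t}\|_2\le D_{\max}$ gives the usual descent estimate
\begin{align*}
h_{t+1}\ \le\ h_t-\eta_t\,\langle\nabla g(w^t),w^t-\wh s^{\,t}\rangle+\tfrac{\eta_t^2\beta}{2}D_{\max}^2\ \le\ (1-c\,\eta_t)\,h_t+\tfrac{\eta_t^2\beta}{2}D_{\max}^2 .
\end{align*}
With the standard schedule $\eta_t=\Theta(1/t)$ a routine induction yields $h_T=O\big(\beta D_{\max}^2/(c^2 T)\big)$; since $c$ is a constant fixed by the {\lsh} family, taking $T=O(\beta D_{\max}^2/\epsilon)$ gives $h_T\le\epsilon$ — the same asymptotic iteration count as exact Frank--Wolfe. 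Per iteration the cost is $\mathcal{T}_g$ to form $\nabla g(w^t)$, $O(d)$ to maintain $w^t$, apply $\phi$, and evaluate $\langle\nabla g(w^t),w^t\rangle$, and $O(dn^{\rho})$ for the $\maxip$ query, i.e.\ $O(dn^{\rho}+\mathcal{T}_g)$; the preprocessing is the $O(dn^{1+o(1)})$ cost of transforming $S$ and building the hash tables.

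\textbf{The main obstacle.} {\lsh} data structures are analyzed against a \emph{fixed} query, but here $w^t$ — hence the query $\nabla g(w^t)$ — depends on the random outputs of the structure at earlier steps, so the queries are adaptive; this is exactly the difficulty the paper flags. I would resolve it by projecting the gradient query into a fixed bounded region (radius controlled by $D_{\max}$ and the smoothness-induced bound on $\|\nabla g\|$ over ${\cal B}$), building the structure with enough independent repetitions that it succeeds simultaneously for every point of a fine $\gamma$-net of that region, and transferring the guarantee from a net point to the true query via Lipschitz continuity of $\phi$ and of the hash maps. A union bound over the $T$ iterations — with $\delta$ taken small enough, which costs only a further logarithmic factor — then yields the claimed $1-1/\poly(n)$ overall success probability. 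Making this net/quantization argument go through while still keeping $\rho\in(0,1)$ and $n^{o(1)}$ preprocessing is where the real work lies; everything else is the bookkeeping above.
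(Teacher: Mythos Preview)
Your proposal is correct and follows essentially the same route as the paper: the same asymmetric transformations reducing the Frank--Wolfe linear-minimization to a projected $(c,\tau)$-$\maxip$ on the unit sphere, the same two-case split on whether the $\maxip$ threshold $\tau$ is met (with $\tau=\Theta(\epsilon)$ certifying $h_t\le\epsilon$ when it is not), the same one-line recursion $h_{t+1}\le(1-c\eta_t)h_t+\tfrac{\beta}{2}\eta_t^2 D_{\max}^2$ solved with $\eta_t=\Theta(1/(ct))$ to give $h_T=O(\beta D_{\max}^2/(c^2T))$, and the same net/quantization fix for adaptive queries. The only cosmetic difference is that the paper snaps each query to a lattice center and absorbs the resulting additive $\lambda$ error into the recursion, whereas you phrase it as a union bound over a $\gamma$-net plus Lipschitz transfer; these are equivalent.
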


Next, we show our main result for the Herding algorithm. Herding algorithm is widely applied in kernel methods~\cite{cws12}. \cite{blo12} shows that the Herding algorithm is equivalent to a conditional gradient method with the least-squares loss function. Therefore, we extend our results and obtain the following statement.

\begin{theorem}[Sublinear time Herding algorithm, informal version of Theorem~\ref{thm:herding_lsh_formal}]\label{thm:herding_lsh_informal}
Let ${\cal X}\subset \R^d$ denote a feature set and $\Phi: \R^d \rightarrow \R^k$ denote a mapping.   Let $D_{\max}$ denote the maximum diameter of $\Phi({\cal X})$ and ${\cal B}$ be the convex hull of $\Phi({\cal X})$. Given a distribution $p(x)$ over ${\cal X}$, we denote $\mu= \E_{x\sim p(x)}[\Phi(x)]$. Let $\rho\in(0,1)$ denote a fixed parameter.   For any parameters $\epsilon, \delta$, there is an iterative algorithm (Algorithm~\ref{alg:herding_formal})  that takes $  O(dn^{1+o(1)})$ time in pre-processing, takes $T = O({D_{\max}^2}/{\epsilon})$ iterations and $O( dn^{\rho} )$ cost per iteration, starts from a random $w^0$ from ${\cal B}$ as its initialization point,
and outputs $w^T \in \R^k$ from ${\cal B}$ such that
\begin{align*}
   \frac{1}{2} \|w^T-\mu\|_2^2 - \min_{w\in \cal{B}} \frac{1}{2} \|w-\mu\|_2^2 \leq \epsilon, 
\end{align*}
holds with probability at least $1-1/\poly(n)$.
\end{theorem}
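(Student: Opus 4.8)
The plan is to obtain Theorem~\ref{thm:herding_lsh_informal} as a corollary of the sublinear-time Frank-Wolfe guarantee in Theorem~\ref{thm:frank_wolfe_lsh_informal}, using the classical equivalence between the Herding algorithm and conditional gradient due to~\cite{blo12}. First I would fix the objective $g(w) := \frac{1}{2}\|w-\mu\|_2^2$ on the convex hull $\mathcal{B}$ of $\Phi(\mathcal{X})$, where $\mu = \E_{x\sim p(x)}[\Phi(x)]$. This $g$ is convex, has gradient $\nabla g(w) = w - \mu$, and, since its Hessian is the identity, is $1$-smooth; so when we invoke Theorem~\ref{thm:frank_wolfe_lsh_informal} the smoothness constant is $\beta = 1$, which is exactly why the iteration count specializes from $O(\beta D_{\max}^2/\epsilon)$ down to $O(D_{\max}^2/\epsilon)$.

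Second, I would check that one Herding step equals one Frank-Wolfe step on $g$ over $\mathcal{B}$: the direction search $\arg\min_{s\in\Phi(\mathcal{X})}\langle s,\nabla g(w^t)\rangle = \arg\min_{s}\langle s, w^t-\mu\rangle = \arg\max_{s}\langle s, \mu - w^t\rangle$ is precisely a maximum inner product search with query $\mu - w^t$ against the point set $\Phi(\mathcal{X})$, and the convex-combination weight update is identical. Thus we are in the exact setting that Algorithm~\ref{alg:frank_wolfe_lsh_formal} handles, and I would instantiate it (this is Algorithm~\ref{alg:herding_formal}) with feasible set $S = \Phi(\mathcal{X})$, so $n = |\mathcal{X}|$, ambient dimension that of the embedding $\Phi(\cdot)$, objective $g$ as above, and the projected approximate-$\maxip$ transformation of Section~\ref{sec:alg_intro} applied to the queries $\mu - w^t$. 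Theorem~\ref{thm:frank_wolfe_lsh_informal} then directly yields $O(D_{\max}^2/\epsilon)$ iterations, $O(dn^{1+o(1)})$ preprocessing, per-iteration cost $O(dn^{\rho} + {\cal T}_g)$, failure probability $1/\poly(n)$, and an output $w^T\in\mathcal{B}$ with $g(w^T) - \min_{w\in\mathcal{B}} g(w) \le \epsilon$, which is the stated accuracy guarantee.

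Third, I would account for ${\cal T}_g$ in this special case to remove it from the final bound. The vector $\mu$ is fixed and can be formed once by averaging the $n$ feature vectors, costing $O(dn)$, which is absorbed into the $O(dn^{1+o(1)})$ preprocessing budget; afterwards $\nabla g(w) = w - \mu$ is just an $O(d)$ subtraction, so ${\cal T}_g = O(d)$ and the ``$+{\cal T}_g$'' term is swallowed by $O(dn^{\rho})$, matching the table entry with no residual gradient term. I would also verify the small sanity check that every query $\mu - w^t$ presented to the {\lsh} structure has norm at most $D_{\max}$ (since $w^t\in\mathcal{B}$ and $\mu\in\mathcal{B}$), so the data structure built over $\Phi(\mathcal{X})$ in preprocessing covers all queries that can arise during the run.

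The genuine difficulty, namely showing that answering each direction search with an \emph{approximate} $\maxip$ result from {\lsh} instead of the exact $\arg\max$ only inflates the Frank-Wolfe progress bound by a constant factor and hence preserves the asymptotic iteration count, is already discharged inside the proof of Theorem~\ref{thm:frank_wolfe_lsh_informal}, so nothing new is required here beyond the reduction and the ${\cal T}_g$ bookkeeping above. The only point I would stay alert to is whether the informal statement silently needs $\mu\in\mathcal{B}$ for $\min_{w\in\mathcal{B}}\tfrac12\|w-\mu\|_2^2$ to behave as expected; since $\mu$ is an average of points of $\Phi(\mathcal{X})$ this holds (with closure if $\mathcal{X}$ is infinite), so that term is simply $0$ and can be carried along harmlessly.
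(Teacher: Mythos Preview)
Your proposal is correct and follows essentially the same route as the paper: reduce Herding to Frank--Wolfe on $g(w)=\tfrac12\|w-\mu\|_2^2$ via the equivalence of~\cite{blo12}, verify that $g$ is convex and $1$-smooth (the paper does this explicitly as Lemma~\ref{lemma:1smooth}), invoke Theorem~\ref{thm:frank_wolfe_lsh_informal} with $\beta=1$, and observe that ${\cal T}_g=O(d)$ so the gradient term is absorbed into $O(dn^{\rho})$. Your additional sanity checks (precomputing $\mu$, $\mu\in\mathcal{B}$, bounded query norms) go slightly beyond what the paper spells out but are harmless.
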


Finally, we present our result for policy gradient.  Policy gradient~\cite{slhd+14} is a popular algorithm with wide applications in robotics~\cite{ps06} and recommendation~\cite{cdc+19}. \cite{lhy21} proposes a provable Frank-Wolfe method that maximizes the reward functions with policy gradient. However, the optimization requires a linear scan over all possible actions, which is unscalable in complex environments. We propose an efficient Frank-Wolfe algorithm with per iteration cost sublinear in the number of actions. Our statement is presented below.
\begin{theorem}[Sublinear time policy gradient, informal version of Theorem~\ref{thm:policy_gradient_lsh_formal}]\label{thm:policy_gradient_lsh_informal}
Let ${\cal T}_{Q}$ denote the time for computing the policy graident. Let $D_{\max}$ denote the maximum diameter of action space and $\beta$ is a constant. Let $\gamma\in (0,1)$. Let $\rho\in(0,1)$ denote a fixed parameter.  Let $\mu_{\min}$ denote the minimal density of states in ${\cal S}$. 
There is an iterative algorithm (Algorithm~\ref{alg:policy_gradient_lsh_formal}) that spends $  O(dn^{1+o(1)})$ time in preprocessing, takes $O(\frac{ \beta D_{\max}^2}{\epsilon^2(1-\gamma)^3\mu_{\min}^2})$ iterations and $O( dn^{\rho} +{\cal T}_{Q})$ cost per iterations, starts from a random point $\pi^0_{\theta}$ as its initial point,
and outputs $\pi^T_{\theta}$ that has the average gap $\sqrt{\sum_{s\in {\cal S}}g_{T}(s)^2}< \epsilon$ holds 
with probability at least $1-1/\poly(n)$, where $g_{T}(s)$ is defined in Eq.~\eqref{eq:rl_action_maxip}..
\end{theorem}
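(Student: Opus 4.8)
The plan is to reduce the per-iteration bottleneck of the projection-free policy gradient method of \cite{lhy21} --- the greedy search over the $n$ actions at each visited state --- to a projected approximate $\maxip$ query, and then feed it into the same {\lsh} $\maxip$ data-structure used in Theorem~\ref{thm:frank_wolfe_lsh_informal}. Concretely, in \cite{lhy21} each iteration updates, for every state $s$, the action distribution $\pi_\theta(\cdot\mid s)$ by a Frank-Wolfe step over the polytope spanned by the $n$ action-feature vectors in $S$; the linear-minimization oracle at $s$ returns $\arg\max_{a}\langle \phi(a), v^t_s\rangle$, where $v^t_s$ is the (occupancy-weighted) policy gradient direction at $s$ --- exactly a $\maxip$ instance over the \emph{fixed} point set $S$ with query $v^t_s$. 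First I would invoke the pair of transformations behind Theorem~\ref{thm:frank_wolfe_lsh_informal} to rewrite this as a projected approximate $\maxip$, so that a single data-structure built on the transformed $S$ in $O(dn^{1+o(1)})$ preprocessing time answers each query in $O(dn^{\rho})$ time and returns an action $\widehat a$ with $\langle \phi(\widehat a), v^t_s\rangle \geq \overline c\cdot\max_{a}\langle \phi(a), v^t_s\rangle$ for a constant $\overline c\in(0,1)$ controlled by the {\lsh} parameters (Definition~\ref{def:lsh}); since the queries all lie in the convex hull $\mathcal B$, their norms are bounded by $D_{\max}$ and the projection step loses only a constant.

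\paragraph{Convergence under an approximate oracle.}
Next I would redo the convergence analysis of \cite{lhy21} while tracking the multiplicative slack $\overline c$. The oracle enters only through the per-state Frank-Wolfe progress inequality: writing $g_t(s)$ for the per-state gap of Eq.~\eqref{eq:rl_action_maxip}, an exact oracle gives a one-step decrease proportional to $g_t(s)$ minus the usual $O(\beta\eta_t^2 D_{\max}^2)$ curvature term, whereas the $\overline c$-approximate oracle gives a decrease proportional to $\overline c\, g_t(s)$. Aggregating this modified inequality over the states with the stationary/occupancy weights, exactly as in \cite{lhy21}, the distribution-mismatch coefficient $1/\mu_{\min}$ and the horizon factors $1/(1-\gamma)$ appear unchanged; telescoping over $t$ and optimizing $\eta_t$ shows that the squared average gap $\sum_{s}g_T(s)^2$ decays at the same rate, with the only effect of $\overline c$ being to multiply the iteration count by a constant depending on $\overline c$ alone. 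Choosing the {\lsh} parameters so that $\overline c=\Theta(1)$ keeps $\rho\in(0,1)$ and yields the claimed $O\!\big(\beta D_{\max}^2/(\epsilon^2(1-\gamma)^3\mu_{\min}^2)\big)$ iteration bound at $O(dn^{\rho}+\mathcal T_Q)$ cost per iteration, the $\mathcal T_Q$ term absorbing the policy-gradient ($Q$-value) evaluations, which are identical to \cite{lhy21}.

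\paragraph{Failure probability and adaptivity.}
The subtle point --- and the step I expect to be the main obstacle --- is that the {\lsh} guarantee is ``correct on a fixed query with constant probability'', while Frank-Wolfe generates the queries $v^1_s,\dots,v^T_s$ adaptively from the data-structure's own random answers, so a naive union bound over a pre-specified query set is not valid. I would handle this exactly as in Theorem~\ref{thm:frank_wolfe_lsh_informal}: amplify the per-query success probability to $1-1/\poly(n)$ by taking $O(\log n)$ independent copies of the hash tables, and then either (i) union bound over the $T=\poly(n)$ iterations and the $|\mathcal S|$ states, or (ii) build the structure to succeed simultaneously on a $\poly(n)$-size $\epsilon_0$-net of the bounded set of admissible query directions and argue that rounding $v^t_s$ to its nearest net point perturbs the returned inner product by at most an additive $O(\epsilon_0 D_{\max})$, which folds into $\overline c$. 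Either route gives overall failure probability $1/\poly(n)$, and the remaining ingredients --- $\beta$-smoothness of $g$ restricted to $\mathcal B$, boundedness via $D_{\max}$, and the per-state Frank-Wolfe recursion --- are inherited from \cite{lhy21} and from the argument behind Theorem~\ref{thm:frank_wolfe_lsh_informal}.
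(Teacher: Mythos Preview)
Your proposal is correct and follows essentially the same route as the paper: you reduce the per-state linear-maximization oracle in \cite{lhy21} to a projected approximate $\maxip$ via the transformations of Corollary~\ref{coro:reg_ip_transform} (the paper gives the specific instantiation in Corollary~\ref{coro:reg_ip_transform_rl}), obtain $\hat g_k(s)\ge \overline c\,g_k(s)$, and then push this multiplicative slack through the per-state progress lemma of \cite{lhy21} (the paper quotes it as Lemma~\ref{lem:bound_j_mu}, which already has the step-size optimized so the bound is directly in $\sum_s \hat g_k(s)^2$, hence the $\overline c^2$ factor in the iteration count). Your discussion of the adaptive-query issue matches the paper's two treatments --- replication with $\kappa=\Theta(\log(T/\delta))$ copies in Theorem~\ref{thm:policy_gradient_lsh_formal} and the lattice/$\epsilon$-net quantization in Section~\ref{sec:adaptive}; the only small slips are terminological (in this ACMDP the policy $\pi_\theta(s)$ is a deterministic action in $\R^d$, and the per-state query is the action-gradient $\nabla_a Q(s,\pi^k_\theta(s)\mid\pi^k_\theta)$ itself rather than an occupancy-weighted policy gradient), and they do not affect the argument.
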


\section{Proof Overview}\label{sec:prof_overview}
We present the overview of proofs in this section. We start with introducing the efficient $\maxip$ data-structures. Next, we show how to transform the direction search in a conditional gradient approach into a $\maxip$ problem. Finally, we provide proof sketches for each main statement in Section~\ref{sec:alg_intro}. The detailed proofs are presented in the supplement material.

\subsection{Approximate \texorpdfstring{$\maxip$}{~} Data-structures}\label{sec:lsh_intro}
We present the {\lsh} data-structures for approximate $\maxip$ in this section. The detailed description is presented in Appendix~\ref{sec:preli}.
We use the reduction-based approximate $\maxip$ method with {\lsh} data-structure to achieve sublinear iteration cost. Note that we choose this method due to its clear theoretical guarantee on the retrieval results. It is well-known that an {\lsh} data-structures is used for approximate nearest neighbor problem. The following definition of approximate nearest neighbor search is very standard in literature~\cite{am93,im98,diim04,ainr14,ailrs15,ar15,iw18,alrw17,air18,dirw19,ccd+20}.

\begin{definition}[Approximate Nearest  Neighbor ($\ann$)]\label{def:ann:informal}
Let $\ov{c} >1$ and $r \in (0,2)$ denote two parameters.  Given an $n$-vector set $Y \subset \mathbb{S}^{d-1}$ on a unit sphere, the objective of the $(\ov{c},r)$-Approximate Nearest Neighbor ($\ann$) is to construct a data structure that, for any query $x \in \mathbb{S}^{d-1}$ such that $\min_{y\in Y}\| y - x \|_2 \leq r$, it returns a vector $z$ from $Y$ that satisfies $\| z - x \|_2 \leq \ov{c} \cdot r$.
\end{definition}

In the iterative-type optimization algorithm, the cost per iteration could be dominated by the Approximate $\maxip$ problem (Definition~\ref{def:amaxip}), which is the dual problem of the $(\ov{c},r)$-$\ann$.

\begin{definition}[Approximate $\maxip$]\label{def:amaxip}
Let $c \in (0,1)$ and $\tau \in (0,1)$ denote two parameters.
Given an $n$-vector dataset $Y \subset \mathbb{S}^{d-1}$ on a unit sphere, the objective of the $(c,\tau)$-{$\maxip$} is to construct a data structure that, given a query $x \in \mathbb{S}^{d-1}$ such that $\max_{y\in Y}\langle x , y \rangle \geq \tau$, it retrieves a vector $z$ from $Y$ that satisfies $\langle x , z \rangle \geq c \cdot \max_{y \in Y} \langle x,y \rangle$.
\end{definition}

Next, we present the the primal-dual connection between $\ann$ and approximate $\maxip$. Given to unit vectors $x,y\in \R^d$ with both norm equal to $1$, $\|x-y\|_2^2= 2 - 2\langle x , y\rangle$. Therefore, we could  maximizing  $\langle x , y\rangle$ by minimizing $\|x-y\|_2^2$.  Based on this connection, we present how to solve $(c,\tau)$-{$\maxip$} using $(\ov{c},r)$-{$\ann$}.  We start with showing how to solve $(\ov{c},r)$-{$\ann$} with {\lsh}.

\begin{theorem}[Andoni, Laarhoven, Razenshteyn and Waingarten~\cite{alrw17}]\label{thm:ar17}
Let $\ov{c} > 1$ and $r \in (0,2)$ denote two parameters. One can solve $(\ov{c},r)$-$\ann$ on a unit sphere in query time $O(d \cdot n^{\rho})$ using preprocessing time $O(dn^{1+o(1)})$ and space $O(n^{1+o(1)} + d n)$, where $\rho = \frac{2}{\ov{c}^2} -\frac{1}{\ov{c}^4}+o(1)$.
\end{theorem}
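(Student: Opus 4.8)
This is a known result, so the plan is to reproduce the data-dependent locality-sensitive hashing construction of Andoni--Razenshteyn and Andoni--Laarhoven--Razenshteyn--Waingarten; the stated guarantee is exactly the near-linear-space, fast-query corner of their time--space trade-off, where $\rho = 2/\ov{c}^2 - 1/\ov{c}^4$ is recovered by forcing the update exponent to zero in the relation $\ov{c}^2\sqrt{\rho_q} + (\ov{c}^2-1)\sqrt{\rho_u} = \sqrt{2\ov{c}^2-1}$. Since the instance already lives on $\mathbb{S}^{d-1}$, no initial reduction is needed, and the argument splits into two parts: (i) solving the problem when the dataset is \emph{pseudo-random}, meaning every pair of distinct data points is nearly orthogonal (the way i.i.d.\ uniform points on the sphere behave), using a purely data-independent filter family; and (ii) reducing an arbitrary instance to the pseudo-random case by iteratively peeling off dense low-diameter clusters.

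For part (i) the tool is the spherical-cap (Gaussian) filter family: sample i.i.d.\ $g_1,\dots,g_N\sim\N(0,I_d)$ and assign a point $x$ to every bucket $i$ with $\langle g_i,x\rangle\ge\eta$, where the threshold $\eta$ is tuned so that each point occupies only $n^{o(1)}$ buckets (this is what forces the space to stay near-linear). For $x,y$ on the sphere one has $\langle x,y\rangle = 1-\tfrac12\|x-y\|_2^2$, and the probability that $x$ and $y$ both pass a fixed filter is a two-dimensional Gaussian tail integral whose logarithm, as $\eta\to\infty$, is $(1+o(1))$ times $-\eta^2/(1+\langle x,y\rangle)$. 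Forming $\rho = \ln(1/p_1)/\ln(1/p_2)$ with $p_1$ the collision probability at distance $r$ and $p_2$ that at distance $\ov{c}\,r$, and then optimizing the threshold (equivalently, the branching factor of the induced filtering tree) subject to the near-linear-space constraint, yields $\rho = 2/\ov{c}^2 - 1/\ov{c}^4 + o(1)$; iterating the filtering to depth about $\log_{1/p_2} n$ makes the expected number of far points inspected per query $n^{o(1)}$ while the true near neighbor is retained with constant probability, which is then amplified.

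For part (ii), the plan is the cluster-removal recursion. If some spherical cap whose radius is bounded away from $\sqrt2$ contains at least $n^{\delta}$ of the points, recenter that cap at the origin and rescale: the cluster becomes a fresh near-full-sphere instance, but the gap between the near distance $r$ and the far distance $\ov{c}\,r$ is effectively widened, so recursing on the cluster is strictly easier; one also recurses on the complement. When no such dense cap exists, the point set is pseudo-random at the scale that matters and part (i) applies with the exponent above. Since each recursion step either shrinks the relative diameter by a constant factor or removes all but an $n^{-o(1)}$ fraction of the points, the recursion depth is $n^{o(1)}$, and the per-level overheads multiply to $n^{o(1)}$ in preprocessing time, space, and query time; a final $O(\log n)$ independent repetitions boost the success probability to $1-1/\poly(n)$, still absorbed in the $n^{o(1)}$ factors. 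I expect part (ii) to be the main obstacle --- making the recenter/rescale step precise, checking that the effective approximation ratio improves at every level, and carrying the bookkeeping through the recursion is the delicate core --- together with the sharp two-dimensional Gaussian estimate in part (i) needed to pin down the exact constant in $\rho$ rather than a weaker $1/\ov{c}^2$-type bound; the remaining steps (building the filter tree, the union bound over failures, and the amplification) are routine.
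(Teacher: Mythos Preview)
The paper does not prove this theorem at all: it is stated purely as a citation of \cite{alrw17} and used as a black box, both in the main text (Theorem~\ref{thm:ar17}) and in the appendix (Theorem~\ref{thm:ar17:formal}). There is therefore no ``paper's own proof'' to compare against --- the authors simply import the guarantee and move on to Corollary~\ref{coro:maxip_lsh_formal}.

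Your proposal is a faithful high-level outline of the actual Andoni--Laarhoven--Razenshteyn--Waingarten argument (spherical-cap/Gaussian filters on pseudo-random instances, plus the data-dependent dense-cluster-removal recursion to reduce to that case, with the near-linear-space corner of the trade-off giving $\rho = 2/\ov{c}^2 - 1/\ov{c}^4 + o(1)$). For the purposes of \emph{this} paper, however, none of that machinery needs to be reproduced; citing the result is exactly what the paper does and is the appropriate level of detail here.
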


Next, we solve $(c,\tau)$-{$\maxip$} by solving  $(\ov{c},r)$-{$\ann$} using Theorem~\ref{thm:ar17}. We have 

\begin{corollary}[An informal statement of Corollary~\ref{coro:maxip_lsh_formal}]\label{coro:maxip_lsh_informal}
Let $c \in (0,1)$ and $\tau \in(0,1)$ denote two parameters. One can solve $(c,\tau)$-{$\maxip$} on a unit sphere ${\cal S}^{d-1}$ in query time $O(d \cdot n^{\rho})$, where $\rho\in(0,1)$, using {\lsh} with both  preprocessing time and space in $O(dn^{1+o(1)})$.
\end{corollary}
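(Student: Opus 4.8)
The plan is to reduce $(c,\tau)$-$\maxip$ to a small number of $(\ov c, r)$-$\ann$ instances and invoke Theorem~\ref{thm:ar17}. First, recall the identity $\|x-y\|_2^2 = 2 - 2\ip{x}{y}$ for unit vectors $x,y\in\mathbb S^{d-1}$, which is monotone: maximizing $\ip{x}{y}$ over $y\in Y$ is equivalent to minimizing $\|x-y\|_2$. Hence a query $x$ with $\max_{y\in Y}\ip{x}{y}\ge\tau$ is exactly a query with $\min_{y\in Y}\|x-y\|_2\le\sqrt{2-2\tau}$, and returning $z$ with $\|x-z\|_2\le\ov c\cdot r$ translates to $\ip{x}{z}\ge 1-\tfrac{\ov c^2 r^2}{2}$. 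The subtlety is that an $\ann$ guarantee is additive in the squared-distance/inner-product domain, whereas $\maxip$ asks for a multiplicative guarantee $\ip{x}{z}\ge c\cdot\max_{y}\ip{x}{y}$; a single $\ann$ call with one fixed radius only certifies the target when $\max_y\ip{x}{y}$ is close to the threshold $\tau$.

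The standard fix, which I would carry out next, is a geometric sequence of $\ann$ data structures. For $i = 0,1,\dots,O(\log(1/\tau))$ set $\tau_i = \tau\cdot(1+\lambda)^i$ for a suitably chosen constant $\lambda>0$ (depending on $c$), and build an $(\ov c_i, r_i)$-$\ann$ structure on $Y$ with $r_i = \sqrt{2-2\tau_i}$ and $\ov c_i$ chosen so that $1-\tfrac{\ov c_i^2 r_i^2}{2} \ge c\,\tau_i$; this forces $\ov c_i^2 = \tfrac{2(1-c\tau_i)}{2-2\tau_i} = \tfrac{1-c\tau_i}{1-\tau_i}$, which is a constant strictly larger than $1$ for $c,\tau\in(0,1)$. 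On a query $x$, we binary-search (or linearly scan) over $i$ to find the largest index with a nonempty answer; the $\ann$ guarantee at that level returns $z$ with $\ip{x}{z}\ge c\tau_i$, while the emptiness at level $i+1$ certifies $\max_{y}\ip{x}{y} < \tau_{i+1} = (1+\lambda)\tau_i$, so by choosing $\lambda$ as a function of $c$ we get $\ip{x}{z}\ge c\tau_i \ge c'\max_y\ip{x}{y}$; reparametrizing $c'\to c$ gives the claim. Each $\ann$ call costs $O(d\,n^{\rho_i})$ with $\rho_i = \tfrac{2}{\ov c_i^2}-\tfrac{1}{\ov c_i^4}+o(1)<1$; taking $\rho = \max_i \rho_i$ (a constant in $(0,1)$ since there are only $O(\log(1/\tau))$ levels, each with a constant $\ov c_i$), and absorbing the $O(\log(1/\tau)) = n^{o(1)}$ factor, the total query time is $O(d\,n^{\rho})$ and the total preprocessing time and space are $O(\log(1/\tau)\cdot dn^{1+o(1)}) = O(dn^{1+o(1)})$.

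The main obstacle is purely bookkeeping: one must verify that $\ov c_i>1$ uniformly (it is, since $1-c\tau_i > 1-\tau_i > 0$), that the resulting $\rho = \tfrac{2}{\ov c^2}-\tfrac{1}{\ov c^4}+o(1)$ stays bounded away from $1$, and that the additive-to-multiplicative conversion via the geometric grid is tight enough to recover exactly the factor $c$ after reparametrization. No step is deep; the formal version (Corollary~\ref{coro:maxip_lsh_formal}) will just spell out the constants and the $O(\log(1/\tau))$ overhead explicitly, and the failure probability $1/\poly(n)$ is inherited from the standard amplification of the $\ann$ data structure of Theorem~\ref{thm:ar17} by taking $O(\log n)$ independent repetitions.
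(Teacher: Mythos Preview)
Your core reduction matches the paper's exactly: both invoke $\|x-y\|_2^2 = 2 - 2\ip{x}{y}$ on the sphere and solve for $\ov c^{\,2} = (1-c\tau)/(1-\tau)$, then plug into Theorem~\ref{thm:ar17}. The difference is that the paper's formal proof (Corollary~\ref{coro:maxip_lsh_formal}) stops after a \emph{single} $(\ov c, r)$-$\ann$ instance with $r=\sqrt{2-2\tau}$, which --- as you correctly observe --- only yields $\ip{x}{z}\ge c\tau$, not the relative guarantee $\ip{x}{z}\ge c\cdot\max_{y}\ip{x}{y}$ demanded by Definition~\ref{def:amaxip}. The paper does not address the additive-to-multiplicative conversion inside the corollary; in effect it is absorbed downstream in the Frank--Wolfe analysis (the Case~1/Case~2 split in Theorem~\ref{thm:frank_wolfe_lsh_formal}, where Case~1 handles $\tau > \max_s\ip{\psi(s)}{\phi(w^t)}$ by a separate argument, and Case~2 really only uses the threshold inequality). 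Your geometric grid of $O(\log(1/\tau))$ $\ann$ instances is the standard way to deliver the full multiplicative guarantee at the data-structure level, and the extra factor is absorbed in $n^{o(1)}$.

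So your proposal is correct and actually more self-contained than the paper's argument on this point; the paper's route is shorter but leans on how the data structure is later used. One minor caveat in your write-up: the ``emptiness at level $i{+}1$'' certificate is informal, since Definition~\ref{def:ann:informal} makes no promise when its precondition fails. The cleaner phrasing is to scan from small $i$ upward, take any level at which the $\ann$ call succeeds, and use $\tau_i \le \max_y\ip{x}{y}$ (which holds at the correct level by the promise $\max_y\ip{x}{y}\ge\tau$) together with the geometric spacing to bound $\ip{x}{z}\ge c\tau_i \ge \frac{c}{1+\lambda}\max_y\ip{x}{y}$.
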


In our work, we consider a generalized form of approximate $\maxip$, denoted as projected approximate $\maxip$.
\begin{definition}[Projected approximate $\maxip$]\label{def:proj_approximate_maxip_informal}
Let $\phi, \psi: \R^d \rightarrow \R^k$ denote two transforms. Given an $n$-vector dataset $Y \subset \R^d $ so that $\psi(Y) \subset \mathbb{S}^{k-1}$, the goal of the $(c,\phi, \psi,\tau)$-{$\maxip$} is to construct a data structure that, given a query $x\in \R^d$ and $\phi(x) \in \mathbb{S}^{k-1}$ such that $\max_{y\in Y}\langle \phi(x) , \psi(y) \rangle \geq \tau$, it retrieves a vector $z \in Y$ that satisfies $\langle \phi(x) , \psi(z) \rangle \geq c \cdot (\phi, \psi)\text{-}\maxip (x,Y)$.
\end{definition}

For details of space-time trade-offs, please refer to Appendix~\ref{sec:data_structure}. The following sections show how to use projected approximate $\maxip$ to accelerate the optimization algorithm by reducing the cost per iteration.

\subsection{Efficient Transformations}\label{sec:transform_intro}

We have learned from Section~\ref{sec:lsh_intro} that $(c,\tau)$-{$\maxip$} on a unit sphere ${\cal S}^{d-1}$ using  {\lsh} for {$\ann$}. Therefore, the next step is to transform the direction search procedure in iterative optimization algorithm into a $\maxip$ on a unit sphere. To achieve this, we formulate the direction search as a projected approximate $\maxip$ (see Definition~\ref{def:projected_maxip}). We start  with presenting a pair of transformation $\phi_0,\psi_0:\R^{d} \rightarrow \R^{d+1}$ such that, given a function $g : \R^d \rightarrow \R$, for any $x,y$ in a convex set $\mathcal{K}$, we have
 
\begin{align}\label{eq:asym_trans_direct}
\phi_0 (x) := &  [\nabla g(x) ^\top, x^\top\nabla g(x)]^\top, ~~~
\psi_0(y) := [ -y^\top,1]^\top.
\end{align}

In this way, we show that
\begin{align}\label{eq:asym_trans_direct_res}
    \langle y-x,\nabla g(x) \rangle =&~-\langle  \phi_0(x) , \psi_0(y) \rangle,\notag\\
    \arg\min_{y\in Y} \langle y-x,\nabla g(x) \rangle=&~\arg\max_{y\in Y} \langle  \phi_0(x) , \psi_0(y) \rangle
\end{align}

Therefore, we could transform the direction search problem into a $\maxip$ problem.

Next, we present a standard transformations~\cite{ns15} that connects the $\maxip$ to $\ann$ in unit sphere. For any $x,y\in \R^d$, we propose transformation $\phi_1,\psi_1:\R^{d} \rightarrow \R^{d+2}$ such that
\begin{align}\label{eq:asym_trans_mips}
  \phi_1(x) =&~ \begin{bmatrix} (D_{x}^{-1}x)^\top & 0 & \sqrt{1-\|x D_{x}^{-1}\|_2^2}  \end{bmatrix}^\top\notag\\
  \psi_1(y) =&~ \begin{bmatrix} (D_{y}^{-1}y)^\top & \sqrt{1-\|yD_{y}^{-1}\|_2^2} & 0 \end{bmatrix}^\top
\end{align}

Here $D_x$, $D_y$ are some constant that make sure both $x/D_x$ and $y/D_y$ have norms less than $1$. Under these transformations, both $\phi_1(x)$ and $\psi_1(y)$ have norm $1$ and $\arg\max_{y\in Y} \langle \phi_1(x),\psi_1(y)\rangle=\arg\max_{y\in Y} \langle x,y\rangle$.

Combining transformations in Eq.~\eqref{eq:asym_trans_direct} and Eq.~\eqref{eq:asym_trans_mips}, we obtain query transform $\phi:\R^d\rightarrow \R^{d+3}$ with form $\phi(x)=\phi_1(\phi_0(x))$ and data transform $\phi:\R^d\rightarrow \R^{d+3}$ with form $\psi(y)=\psi_1(\psi_0(y))$. Using $\phi$ and $\psi$, we transform the  direction search problem in optimization into a $\maxip$ in unit sphere. Moreover, given a set $Y\subset \R^d$ and a query $x\in \R^d$, the solution $z$ of $(c,\phi,\psi,\tau)$-$\maxip$ over $(x,Y)$ has the propriety that $\langle z-x,\nabla g(x)\rangle\leq c\cdot \min_{y\in Y} \langle y-x,\nabla g(x) \rangle$. Thus, we could approximate the direction search with {\lsh} based {$\maxip$} data-structure.

Note that only $\maxip$ problem with positive inner product values could be solved by {\lsh}. We found the direction search problem naturally satisfies this condition. We show that if $g$ is convex, given a set $S \subset \R^d$, we have $\min_{s \in S } \langle \nabla g(x) ,  s-x \rangle\leq 0$ for any $x\in {\cal B}(S)$, where ${\cal B}$ is the convex hull of $S$. Thus, $\max_{y\in Y} \langle  \phi_0(x) , \psi_0(y) \rangle$ is non-negative following Eq.~\eqref{eq:asym_trans_direct_res}.

\subsection{Proof of Theorem~\ref{thm:frank_wolfe_lsh_informal}}

We present the proof sketch for Theorem~\ref{thm:frank_wolfe_lsh_informal} in this section.  We refer the readers to Appendix~\ref{sec:converge} for the detailed proofs.

Let $g : \R^d \rightarrow \R$ denote a convex and $\beta$-smooth function. Let the complexity of calculating $\nabla g(x)$ to be ${\cal T}_{g}$. Let  $S \subset \R^d$ denote a set of $n$ points, and ${\cal B} \subset \R^d$ be the convex hull of $S$ with maximum diameter $D_{\max}$. Let $\phi,\psi:\R^d\rightarrow \R^{d+3}$ denote the tranformations defined in Section~\ref{sec:transform_intro}. Starting from a random vector $w^0\in {\cal B}(S)$. Our sublinear Frank-Wolfe algorithm follows the update following rule that each step
\begin{align*}
    s^{t} \leftarrow & ~  (c,\phi,\psi,\tau)\text{-}\maxip\ \text{of } w^t\ \text{with respect to}\ S \\
    w^{t+1} \leftarrow & ~ w^t +  \eta \cdot (s^t-w^t)
\end{align*}

We start with the upper bounding $\langle s^t-w^t,\nabla g(w^t) \rangle$. Because $s^t$ is the $(c,\phi,\psi,\tau)$-$\maxip$ of $w^t$ with respect to $S$, we have
\begin{align}\label{eq:c_approx_intro}
    \langle s^t-w^t,\nabla g(w^t) \rangle \leq c \min_{s\in S} \langle s-w^t,\nabla g(w^t)
    \leq c \langle w^*-w^t,\nabla g(w^t) \rangle
\end{align}

For convenient of the proof, for each $t$, we define $ h_t = g(w^t) - g(w^*)$. Next, we upper bound $h_{t+1}$ as
\begin{align}\label{eq:bound_h_intro}
    h_{t+1}
    \leq&~ g(w^t)+\eta_t\langle s^t-w^t,\nabla g(w^t) \rangle+\frac{\beta}{2} \eta_t^2\|s^t-w^t\|_2^2 -g(w^*)\notag\\
    \leq&~ g(w^t)+c\eta_t\langle w^*-w^t,\nabla g(w^t) \rangle+\frac{\beta}{2} \eta_t^2\|s^t-w^t\|_2^2 -g(w^*) \notag\\
    \leq&~ g(w^t)+c\eta_t\langle w^*-w^t,\nabla g(w^t) \rangle+\frac{\beta D_{\max}^2}{2}\eta_t^2 -g(w^*) \notag\\
     \leq&~ (1-\eta_t)g(w^t)+c\eta_t g(w^*)+\frac{\beta D_{\max}^2}{2}\eta_t^2 -g(w^*) \notag\\
    =&~ (1-c\eta_t)h_{t}+\frac{\beta D_{\max}^2}{2}\eta_t^2 \notag\\
\end{align}
where the first step follows from the definition of $\beta$-smoothness, the second step follows from Eq.~\eqref{eq:c_approx_intro}, the third step follows from the definition of $D_{\max}$, the forth step follows from the convexity of $g$.

Let $\eta=\frac{2}{c(t+2)}$ and $A_t=\frac{t(t+1)}{2}$. Combining them with Eq.\eqref{eq:bound_h_intro}, we show that 
\begin{align*}
    A_{t+1}h_{t+1}-A_{t}h_{t}=&~c^{-2}\frac{t+1}{t+2}\beta D_{\max}^2\\
    <&~ c^{-2}\beta D_{\max}^2
\end{align*}

Using induction from $1$ to $t$, we show that
\begin{align*}
    A_{t}h_{t}<c^{-2}t\beta D_{\max}^2
\end{align*}

Taken $A_t=\frac{t(t+1)}{2}$ into consideration, we have
\begin{align*}
    h_t<\frac{2\beta D_{\max}^2}{c^{2}(t+1)}
\end{align*}

Given constant approximation ratio $c$, $t$ should be in $O(\frac{\beta D_{\max}^2
}{\epsilon})$ so that $h_t\leq \epsilon$. Thus, we complete the proof.

 {\bf Cost Per Iteration}
After we take $O(dn^{1+o(1)})$ preprocessing time, the cost per iteration consists three pairs: (1) it takes ${\cal T}_{g}$ to compute $\nabla g(w^t)$, (2) it takes $O(d)$ to perform transform $\phi$ and $\psi$, (3) it takes $O(dn^{\rho})$ to retrieve $s^t$ from {\lsh}. Thus, the final cost per iteration would be $O(dn^{\rho}+{\cal T}_{g})$. 

Next, we show how to extend the proof to Herding problem.  Following~\cite{blo12}, we start with defining function $g=\frac{1}{2}\|w^T-\mu\|_2^2$. We show that this function $g$ is a convex  and $1$-smooth function.  Therefore, the Herding algorithm is equivalent to the Frank-Wolfe Algorithm over function $g$. Using the proof of  Theorem~\ref{thm:frank_wolfe_lsh_informal} with $\beta=1$, we show that it takes $T = O({D_{\max}^2}/{\epsilon})$ iterations and $O( dn^{\rho} )$ cost per iteration to reach the $\epsilon$-optimal solution. Similar to Theorem~\ref{thm:frank_wolfe_lsh_informal}, we show that the cost per iteration would be $O(dn^{\rho})$ as it takes $O(d)$ to compute $\nabla g(w^t)$.

\subsection{Proof of Theorem~\ref{thm:policy_gradient_lsh_informal}}
We present the proof sketch for Theorem~\ref{thm:policy_gradient_lsh_informal} in this section.   We refer the readers to Appendix~\ref{sec:policy} for the detailed proofs.

In this paper, we focus on the action-constrained  Markov Decision Process (ACMDP). In this setting, we are provided with a state ${\cal S}\in \R^{k}$ and action space ${\cal A}\in \R^{d}$. However, at each step $t\in \mathbb{N}$, we could only access a finite $n$-vector set of actions ${\cal C}(s) \subset {\cal A}$. Let us assume the ${\cal C}(s)$ remains the same in each step. Let us denote $D_{\max}$ as the maximum diameter of ${\cal A}$.

When you play with this ACMDP, the policy you choose is defined as $\pi_{\theta}(s):{\cal S} \rightarrow {\cal A}$ with parameter $\theta$. Meanwhile, there exists a reward function $r: {\cal S}\times {\cal A}  \in [0,1]$. Then, we define the Q function as below,
\begin{align*}
    Q(s,a|\pi_{\theta})&=\E\Big[\sum_{t=0}^{\infty} \gamma^t r(s_t,a_t)| s_0=s,a_0=a,\pi_{\theta}\Big].
\end{align*}
where $\gamma\in (0,1)$ is a discount factor.

Given a state distribution $\mu$, the objective of policy gradient is to maximize $J(\mu,\pi_{\theta})=\E_{s\sim \mu,a\sim \pi_{\theta}}[Q(s,a|\pi_{\theta})]$ via policy gradient \cite{slhd+14} denoted as:
\begin{align*}
    \nabla_{\theta} J(\mu,\pi_{\theta})= \E_{s\sim d_{\mu}^{\pi}}\Big[\nabla_{\theta}\pi_{\theta}(s) \nabla_{a}Q(s,\pi_{\theta}(s)|\pi_{\theta})|\Big].
\end{align*}

\cite{lhy21} propose an iterative algorithm that perform $\maxip$ at each iteration $k$ over actions to find
\begin{align}\label{eq:rl_action_maxip}
g_{k}(s)=\max_{a \in {\cal C}(s)} \langle a^k_s-\pi^k_{\theta}(s), \nabla_{a} Q(s,\pi^k_{\theta}(s)|\pi^k_{\theta})) \rangle.
\end{align}

In this work, we approximate Eq.~\eqref{eq:rl_action_maxip} using $(c,\phi,\psi,\tau)$-$\maxip$. Here define $\phi: {\cal S}\times \R^d \rightarrow \R^{d+1}$ and $\psi: \R^d \rightarrow \R^{d+1}$ as follows:
\begin{align*}
\phi (s,\pi^k_{\theta}) := & ~ [ \nabla_{a} Q(s,\pi^k_{\theta}(s)|\pi^k_{\theta})^\top, (\pi^k_{\theta})^\top Q(s,\pi^k_{\theta}(s)|\pi^k_{\theta})]^\top ,
\psi(a) := [ a^\top,-1]^\top.
\end{align*}
Then, we have $g_{k}(s) =\langle  \phi(s,\pi^k_{\theta}) , \psi(a) \rangle$. Note that we still require transformations in Eq.~\eqref{eq:asym_trans_mips} to generate unit vectors. 

Next, we show that if we retrieve an action  $\hat{a^k_s}$ using $(c,\phi,\psi,\tau)$-$\maxip$, the gap $\hat{g_{k}}(s)$ would be lower bounded by
\begin{align}\label{eq:g_hat_g_k_intro}
    \hat{g_{k}}(s)=&~\langle \hat{a^k_s}-\pi^k_{\theta}(s), \nabla_{a} Q(s,\pi^k_{\theta}(s)|\pi^k_{\theta})) \rangle\notag\\
    \geq &~ cg_k(s)
\end{align}

Combining Eq.~\eqref{eq:g_hat_g_k_intro} the standard induction in \cite{lhy21}, we upper bound $\sum_{s\in {\cal S}}g_{T}(s)^2$ as
\begin{align}
    \sum_{s\in {\cal S}}g_{T}(s)^2\leq &~ \frac{1}{T+1}\frac{2\beta D_{\max}^2}{c^2(1-\gamma)^3\mu_{\min}^2}.
\end{align}
where $\mu_{\min}$ denotes the minimal density of sates in ${\cal S}$ and $\beta$ is the smoothness factor.

In this way, given a constant factor $c$, if we would like to minimize the gap  $\sum_{s\in {\cal S}}g_{T}(s)^2< \epsilon^2$, $T$ should be $O(\frac{\beta D_{\max}^2}{\epsilon^2(1-\gamma)^3\mu_{\min}^2})$.

{\bf Cost Per Iteration}
After we take $O(dn^{1+o(1)})$ preprocessing time, the cost per iteration consists three pairs: (1) it takes ${\cal T}_{Q}$ to compute policy gradient, (2) it takes $O(d)$ to perform transform $\phi$ and $\psi$, (3) it takes $O(dn^{\rho})$ to retrieve actions from {\lsh}. Thus, the final cost per iteration would be $O(dn^{\rho}+{\cal T}_{Q})$.

\subsection{Quantization for Adaptive Queries}
In optimization, the gradient computed in every iteration is not independent of each other. This would generate a problem for $\maxip$ data-structures. If we use a vector containing the gradients as a query for $\maxip$ data-structures, the query failure probability in each iteration is not independent. Therefore, the total failure probability could not be union bounded. As previous $\maxip$ data-structures focus on the assumptions that queries are independent, the original failure analysis could not be directly applied.

This work uses a standard query quantization method to handle the adaptive query sequence in optimization. Given the known query space, we quantize it by lattices~\cite{gs88}. This quantization is close to the Voronoi diagrams. In this way, each query is located into a cell with a center vector. Next, we perform a query using the center vector in the cell. Therefore, the failure probability of the $\maxip$ query sequence is equivalent to the probability that any center vector in the cell fails to retrieve its approximate $\maxip$ solution.  As the centers of cells are independent, we could union bound this probability. On the other hand, as the maximum diameter of the cell is $\lambda$, this query quantization would introduce a $\lambda$ additive error in the inner product retrieved.  We refer the readers to Appendix~\ref{sec:adaptive} for the detailed quantization approach.

\subsection{Optimizing Accuracy-Efficiency Trade-off}
In this work, we show that by {\lsh} based $\maxip$ data-structure, the cost for direction search is $O(dn^\rho)$, where $\rho\in (0,1)$. In Section~\ref{sec:sublinear_fw} of the supplementary material, we show that $\rho$ is a function of constant $c$ and parameter $\tau$ in approximate $\maxip$ (see Definition~\ref{def:amaxip}). Moreover, we also show in Section~\ref{sec:sublinear_fw} that {\lsh} results in only a constant multiplicative factor increase in the number of iterations. Considering the cost per iteration and the number of iterations, we show that when our algorithms stop at the $\epsilon$-optimal solution, {\lsh} could achieve acceleration in the overall running time. Therefore, we could set $c$ and $\tau$ parameters to balance the accuracy-efficiency trade-off of CGM to achieve the desired running time.

\section{Concluding Remarks}\label{sec:conclude}
In this work, we present the first Frank-Wolfe algorithms that achieve sublinear linear time cost per iteration. We also extend this result into Herding algorithm and policy gradient methods. We formulate the direction search in Frank-Wolfe algorithm as a projected approximate maximum inner product search problem with a pair of efficient transformations. Then, we use locality sensitive hashing data-structure to reduce the iteration cost into sublinear over the number of possible parameters. Our theoretical analysis shows that the sublinear iteration cost Frank-Wolfe algorithm preserves the same order in the number of iterations towards convergence. Moreover, we analyze and optimize the trade-offs between saving iteration cost and increasing the number of iterations to achieve sublinear total running time. Furthermore, we identify the problems of existing $\maxip$ data-structures for cost reduction in iterative optimization algorithms and propose the corresponding solutions. We hope this work can be the starting point of future study on sublinear iteration cost algorithms for optimization.

\section*{Acknowledgements}
This work was supported by National Science
Foundation IIS-1652131, BIGDATA-1838177, AFOSR-YIP
FA9550-18-1-0152, ONR DURIP Grant, and the ONR
BRC grant on Randomized Numerical Linear Algebra. The authors would like to thank Beidi Chen for the helpful discussion on optimization. The authors would like to thank Lichen Zhang for valuable discussions about data-structures.

\ifdefined\isarxiv
 \bibliographystyle{alpha}
\else
 \bibliographystyle{unsrt}
\fi
\bibliography{ref}

\newcommand{\etalchar}[1]{$^{#1}$}
\begin{thebibliography}{AZHHL17}

\bibitem[AIL{\etalchar{+}}15]{ailrs15}
Alexandr Andoni, Piotr Indyk, TMM Laarhoven, Ilya Razenshteyn, and Ludwig
  Schmidt.
\newblock Practical and optimal lsh for angular distance.
\newblock In {\em Advances in Neural Information Processing Systems (NIPS)},
  pages 1225--1233. Curran Associates, 2015.

\bibitem[AINR14]{ainr14}
Alexandr Andoni, Piotr Indyk, Huy~L Nguyen, and Ilya Razenshteyn.
\newblock Beyond locality-sensitive hashing.
\newblock In {\em Proceedings of the twenty-fifth annual ACM-SIAM symposium on
  Discrete algorithms}, pages 1018--1028. SIAM, 2014.

\bibitem[AIR18]{air18}
Alexandr Andoni, Piotr Indyk, and Ilya Razenshteyn.
\newblock Approximate nearest neighbor search in high dimensions.
\newblock {\em arXiv preprint arXiv:1806.09823}, 7, 2018.

\bibitem[ALRW17]{alrw17}
Alexandr Andoni, Thijs Laarhoven, Ilya Razenshteyn, and Erik Waingarten.
\newblock Optimal hashing-based time-space trade-offs for approximate near
  neighbors.
\newblock In {\em Proceedings of the Twenty-Eighth Annual ACM-SIAM Symposium on
  Discrete Algorithms (SODA)}, pages 47--66. SIAM, 2017.

\bibitem[AM93]{am93}
Sunil Arya and David~M Mount.
\newblock Approximate nearest neighbor queries in fixed dimensions.
\newblock In {\em SODA}, volume~93, pages 271--280. Citeseer, 1993.

\bibitem[AR15]{ar15}
Alexandr Andoni and Ilya Razenshteyn.
\newblock Optimal data-dependent hashing for approximate near neighbors.
\newblock In {\em Proceedings of the forty-seventh annual ACM symposium on
  Theory of computing (STOC)}, pages 793--801, 2015.

\bibitem[AZHHL17]{ahhl17}
Zeyuan Allen-Zhu, Elad Hazan, Wei Hu, and Yuanzhi Li.
\newblock Linear convergence of a frank-wolfe type algorithm over trace-norm
  balls.
\newblock In {\em NIPS}, 2017.

\bibitem[BDI{\etalchar{+}}20]{bdi+20}
Arturs Backurs, Yihe Dong, Piotr Indyk, Ilya Razenshteyn, and Tal Wagner.
\newblock Scalable nearest neighbor search for optimal transport.
\newblock In {\em International Conference on Machine Learning}, pages
  497--506. PMLR, 2020.

\bibitem[BIW19]{biw19}
Arturs Backurs, Piotr Indyk, and Tal Wagner.
\newblock Space and time efficient kernel density estimation in high
  dimensions.
\newblock {\em NeurIPS}, 2019.

\bibitem[BLJO12]{blo12}
Francis Bach, Simon Lacoste-Julien, and Guillaume Obozinski.
\newblock On the equivalence between herding and conditional gradient
  algorithms.
\newblock {\em arXiv preprint arXiv:1203.4523}, 2012.

\bibitem[BLL{\etalchar{+}}21]{bll+21}
Jan van~den Brand, Yin~Tat Lee, Yang~P Liu, Thatchaphol Saranurak, Aaron
  Sidford, Zhao Song, and Di~Wang.
\newblock Minimum cost flows, mdps, and $l_1$-regression in nearly linear time
  for dense instances.
\newblock In {\em Proceedings of the 53rd Annual ACM SIGACT Symposium on Theory
  of Computing (STOC)}, pages 859--869, 2021.

\bibitem[BLN{\etalchar{+}}20]{bln+20}
Jan van~den Brand, Yin-Tat Lee, Danupon Nanongkai, Richard Peng, Thatchaphol
  Saranurak, Aaron Sidford, Zhao Song, and Di~Wang.
\newblock Bipartite matching in nearly-linear time on moderately dense graphs.
\newblock In {\em 2020 IEEE 61st Annual Symposium on Foundations of Computer
  Science (FOCS)}, pages 919--930. IEEE, 2020.

\bibitem[BLSS20]{blss20}
Jan van~den Brand, Yin~Tat Lee, Aaron Sidford, and Zhao Song.
\newblock Solving tall dense linear programs in nearly linear time.
\newblock In {\em Proceedings of the 52nd Annual ACM SIGACT Symposium on Theory
  of Computing (STOC)}, pages 775--788, 2020.

\bibitem[BMR{\etalchar{+}}20]{bmr+20}
Tom~B Brown, Benjamin Mann, Nick Ryder, Melanie Subbiah, Jared Kaplan, Prafulla
  Dhariwal, Arvind Neelakantan, Pranav Shyam, Girish Sastry, Amanda Askell,
  et~al.
\newblock Language models are few-shot learners.
\newblock {\em arXiv preprint arXiv:2005.14165}, 2020.

\bibitem[BPSW21]{bpsw21}
Jan van~den Brand, Binghui Peng, Zhao Song, and Omri Weinstein.
\newblock Training (overparametrized) neural networks in near-linear time.
\newblock In {\em 12th Innovations in Theoretical Computer Science Conference
  (ITCS)}, 2021.

\bibitem[Bra20]{b20}
Jan van~den Brand.
\newblock A deterministic linear program solver in current matrix
  multiplication time.
\newblock In {\em Proceedings of the Fourteenth Annual ACM-SIAM Symposium on
  Discrete Algorithms (SODA)}, pages 259--278. SIAM, 2020.

\bibitem[CCD{\etalchar{+}}20]{ccd+20}
Hao Chen, Ilaria Chillotti, Yihe Dong, Oxana Poburinnaya, Ilya Razenshteyn, and
  M~Sadegh Riazi.
\newblock $\{$SANNS$\}$: Scaling up secure approximate k-nearest neighbors
  search.
\newblock In {\em 29th $\{$USENIX$\}$ Security Symposium ($\{$USENIX$\}$
  Security 20)}, pages 2111--2128, 2020.

\bibitem[CDC{\etalchar{+}}19]{cdc+19}
Haokun Chen, Xinyi Dai, Han Cai, Weinan Zhang, Xuejian Wang, Ruiming Tang,
  Yuzhou Zhang, and Yong Yu.
\newblock Large-scale interactive recommendation with tree-structured policy
  gradient.
\newblock In {\em Proceedings of the AAAI Conference on Artificial
  Intelligence}, volume~33, pages 3312--3320, 2019.

\bibitem[Cha02]{c02}
Moses~S Charikar.
\newblock Similarity estimation techniques from rounding algorithms.
\newblock In {\em Proceedings of the thiry-fourth annual ACM symposium on
  Theory of computing}, pages 380--388, 2002.

\bibitem[Che18]{c18}
Lijie Chen.
\newblock On the hardness of approximate and exact (bichromatic) maximum inner
  product.
\newblock In {\em 33rd Computational Complexity Conference (CCC)}, 2018.

\bibitem[CLP{\etalchar{+}}21]{clp+21}
Beidi Chen, Zichang Liu, Binghui Peng, Zhaozhuo Xu, Jonathan~Lingjie Li, Tri
  Dao, Zhao Song, Anshumali Shrivastava, and Christopher Re.
\newblock {MONGOOSE}: A learnable {LSH} framework for efficient neural network
  training.
\newblock In {\em International Conference on Learning Representations (ICLR)},
  2021.

\bibitem[CLS19]{cls19}
Michael~B Cohen, Yin~Tat Lee, and Zhao Song.
\newblock Solving linear programs in the current matrix multiplication time.
\newblock In {\em STOC}, 2019.

\bibitem[CMF{\etalchar{+}}20]{cmfgts20}
Beidi Chen, Tharun Medini, James Farwell, sameh gobriel, Charlie Tai, and
  Anshumali Shrivastava.
\newblock Slide : In defense of smart algorithms over hardware acceleration for
  large-scale deep learning systems.
\newblock In I.~Dhillon, D.~Papailiopoulos, and V.~Sze, editors, {\em
  Proceedings of Machine Learning and Systems}, pages 291--306, 2020.

\bibitem[CWS12]{cws12}
Yutian Chen, Max Welling, and Alex Smola.
\newblock Super-samples from kernel herding.
\newblock {\em arXiv preprint arXiv:1203.3472}, 2012.

\bibitem[CXS19]{cxs19}
Beidi Chen, Yingchen Xu, and Anshumali Shrivastava.
\newblock Fast and accurate stochastic gradient estimation.
\newblock In H.~Wallach, H.~Larochelle, A.~Beygelzimer, F.~d\textquotesingle
  Alch\'{e}-Buc, E.~Fox, and R.~Garnett, editors, {\em Advances in Neural
  Information Processing Systems (NeurIPS)}, 2019.

\bibitem[DCLT18]{dclt18}
Jacob Devlin, Ming-Wei Chang, Kenton Lee, and Kristina Toutanova.
\newblock Bert: Pre-training of deep bidirectional transformers for language
  understanding.
\newblock {\em arXiv preprint arXiv:1810.04805}, 2018.

\bibitem[DIIM04]{diim04}
Mayur Datar, Nicole Immorlica, Piotr Indyk, and Vahab~S Mirrokni.
\newblock Locality-sensitive hashing scheme based on p-stable distributions.
\newblock In {\em Proceedings of the twentieth annual symposium on
  Computational geometry (SoCG)}, pages 253--262, 2004.

\bibitem[DIRW19]{dirw19}
Yihe Dong, Piotr Indyk, Ilya Razenshteyn, and Tal Wagner.
\newblock Learning space partitions for nearest neighbor search.
\newblock In {\em International Conference on Learning Representations}, 2019.

\bibitem[DLY21]{dly21}
Sally Dong, Yin~Tat Lee, and Guanghao Ye.
\newblock A nearly-linear time algorithm for linear programs with small
  treewidth: A multiscale representation of robust central path.
\newblock In {\em Proceedings of the 53rd Annual ACM SIGACT Symposium on Theory
  of Computing (STOC)}. arXiv preprint arXiv:2011.05365, 2021.

\bibitem[DMZS21]{dmzs21}
Shabnam Daghaghi, Nicholas Meisburger, Mengnan Zhao, and Anshumali Shrivastava.
\newblock Accelerating slide deep learning on modern cpus: Vectorization,
  quantizations, memory optimizations, and more.
\newblock {\em Proceedings of Machine Learning and Systems}, 3, 2021.

\bibitem[DYH19]{dyh19}
Qin Ding, Hsiang-Fu Yu, and Cho-Jui Hsieh.
\newblock A fast sampling algorithm for maximum inner product search.
\newblock In {\em The 22nd International Conference on Artificial Intelligence
  and Statistics (AISTATS)}, pages 3004--3012. PMLR, 2019.

\bibitem[FGM17]{fgm17}
Robert~M Freund, Paul Grigas, and Rahul Mazumder.
\newblock An extended frank--wolfe method with “in-face” directions, and
  its application to low-rank matrix completion.
\newblock {\em SIAM Journal on optimization}, 27(1):319--346, 2017.

\bibitem[FW{\etalchar{+}}56]{fw56}
Marguerite Frank, Philip Wolfe, et~al.
\newblock An algorithm for quadratic programming.
\newblock {\em Naval research logistics quarterly}, 3(1-2):95--110, 1956.

\bibitem[GH15]{gh15}
Dan Garber and Elad Hazan.
\newblock Faster rates for the frank-wolfe method over strongly-convex sets.
\newblock In {\em International Conference on Machine Learning (ICML)}, pages
  541--549, 2015.

\bibitem[GKCS16]{gkcs16}
Ruiqi Guo, Sanjiv Kumar, Krzysztof Choromanski, and David Simcha.
\newblock Quantization based fast inner product search.
\newblock In {\em Artificial Intelligence and Statistics (AISTATS)}, pages
  482--490. PMLR, 2016.

\bibitem[GS88]{gs88}
Jerry~D Gibson and Khalid Sayood.
\newblock Lattice quantization.
\newblock {\em Advances in electronics and electron physics}, 72:259--330,
  1988.

\bibitem[GSL{\etalchar{+}}20]{gslgsck20}
Ruiqi Guo, Philip Sun, Erik Lindgren, Quan Geng, David Simcha, Felix Chern, and
  Sanjiv Kumar.
\newblock Accelerating large-scale inference with anisotropic vector
  quantization.
\newblock In {\em International Conference on Machine Learning (ICML)}, pages
  3887--3896. PMLR, 2020.

\bibitem[HJS{\etalchar{+}}21]{hjs+21}
Baihe Huang, Shunhua Jiang, Zhao Song, Runzhou Tao, and Ruizhe Zhang.
\newblock Solving sdp faster: A robust ipm framework and efficient
  implementation.
\newblock {\em manuscript}, 2021.

\bibitem[HK12]{hk12}
Elad Hazan and Satyen Kale.
\newblock Projection-free online learning.
\newblock In {\em 29th International Conference on Machine Learning, ICML
  2012}, pages 521--528, 2012.

\bibitem[IM98]{im98}
Piotr Indyk and Rajeev Motwani.
\newblock Approximate nearest neighbors: towards removing the curse of
  dimensionality.
\newblock In {\em Proceedings of the thirtieth annual ACM symposium on Theory
  of computing (STOC)}, pages 604--613, 1998.

\bibitem[IW18]{iw18}
Piotr Indyk and Tal Wagner.
\newblock Approximate nearest neighbors in limited space.
\newblock In {\em Conference On Learning Theory}, pages 2012--2036. PMLR, 2018.

\bibitem[Jag13]{j13}
Martin Jaggi.
\newblock Revisiting frank-wolfe: Projection-free sparse convex optimization.
\newblock In {\em International Conference on Machine Learning (ICML)}, pages
  427--435, 2013.

\bibitem[JKL{\etalchar{+}}20]{jklps20}
Haotian Jiang, Tarun Kathuria, Yin~Tat Lee, Swati Padmanabhan, and Zhao Song.
\newblock A faster interior point method for semidefinite programming.
\newblock In {\em FOCS}, 2020.

\bibitem[JL84]{jl84}
William~B Johnson and Joram Lindenstrauss.
\newblock Extensions of lipschitz mappings into a hilbert space.
\newblock {\em Contemporary mathematics}, 26(189-206):1, 1984.

\bibitem[JSWZ21]{jswz21}
Shunhua Jiang, Zhao Song, Omri Weinstein, and Hengjie Zhang.
\newblock Faster dynamic matrix inverse for faster lps.
\newblock In {\em Proceedings of the 53rd Annual ACM SIGACT Symposium on Theory
  of Computing (STOC)}. arXiv preprint arXiv:2004.07470, 2021.

\bibitem[LDH{\etalchar{+}}17]{ldh+17}
Weiyang Liu, Bo~Dai, Ahmad Humayun, Charlene Tay, Chen Yu, Linda~B Smith,
  James~M Rehg, and Le~Song.
\newblock Iterative machine teaching.
\newblock In {\em International Conference on Machine Learning}, pages
  2149--2158. PMLR, 2017.

\bibitem[LDL{\etalchar{+}}18]{ldl+18}
Weiyang Liu, Bo~Dai, Xingguo Li, Zhen Liu, James Rehg, and Le~Song.
\newblock Towards black-box iterative machine teaching.
\newblock In {\em International Conference on Machine Learning}, pages
  3141--3149. PMLR, 2018.

\bibitem[LHY{\etalchar{+}}21]{lhy21}
Jyun-Li Lin, Wei Hung, Shang-Hsuan Yang, Ping-Chun Hsieh, and Xi~Liu.
\newblock Escaping from zero gradient: Revisiting action-constrained
  reinforcement learning via frank-wolfe policy optimization.
\newblock {\em Proceedings of the Thirty-Seventh Conference on Uncertainty in
  Artificial Intelligence (UAI)}, 2021.

\bibitem[LJJSP13]{ljsp13}
Simon Lacoste-Julien, Martin Jaggi, Mark Schmidt, and Patrick Pletscher.
\newblock Block-coordinate frank-wolfe optimization for structural svms.
\newblock In {\em International Conference on Machine Learning (ICML)}, pages
  53--61, 2013.

\bibitem[LL19]{ll19}
Xiaoyun Li and Ping Li.
\newblock Random projections with asymmetric quantization.
\newblock {\em Advances in Neural Information Processing Systems},
  32:10858--10867, 2019.

\bibitem[LL21]{ll21}
Xiaoyun Li and Ping Li.
\newblock Rejection sampling for weighted jaccard similarity revisited.
\newblock In {\em Proceedings of the AAAI Conference on Artificial Intelligence
  (AAAI)}, 2021.

\bibitem[LLZ19]{llz19}
Ping Li, Xiaoyun Li, and Cun~Hui Zhang.
\newblock Re-randomized densification for one permutation hashing and bin-wise
  consistent weighted sampling.
\newblock {\em Advances in Neural Information Processing Systems}, 32, 2019.

\bibitem[LSZ19]{lsz19}
Yin~Tat Lee, Zhao Song, and Qiuyi Zhang.
\newblock Solving empirical risk minimization in the current matrix
  multiplication time.
\newblock In {\em COLT}, 2019.

\bibitem[LXJ{\etalchar{+}}20]{lxj+20}
Zichang Liu, Zhaozhuo Xu, Alan Ji, Jonathan Li, Beidi Chen, and Anshumali
  Shrivastava.
\newblock Climbing the wol: Training for cheaper inference.
\newblock {\em arXiv preprint arXiv:2007.01230}, 2020.

\bibitem[LZC{\etalchar{+}}19]{lzc+19}
Qi~Lei, Jiacheng Zhuo, Constantine Caramanis, Inderjit~S Dhillon, and
  Alexandros~G Dimakis.
\newblock Primal-dual block generalized frank-wolfe.
\newblock {\em Advances in Neural Information Processing Systems (NeurIPS)},
  32:13866--13875, 2019.

\bibitem[MB18]{mb18}
Stanislav Morozov and Artem Babenko.
\newblock Non-metric similarity graphs for maximum inner product search.
\newblock {\em Advances in Neural Information Processing Systems (NeurIPS)},
  31:4721--4730, 2018.

\bibitem[NS15]{ns15}
Behnam Neyshabur and Nathan Srebro.
\newblock On symmetric and asymmetric lshs for inner product search.
\newblock In {\em International Conference on Machine Learning (ICML)}, pages
  1926--1934. PMLR, 2015.

\bibitem[PS06]{ps06}
Jan Peters and Stefan Schaal.
\newblock Policy gradient methods for robotics.
\newblock In {\em 2006 IEEE/RSJ International Conference on Intelligent Robots
  and Systems}, pages 2219--2225. IEEE, 2006.

\bibitem[RHS{\etalchar{+}}16]{rhsps16}
Sashank~J Reddi, Ahmed Hefny, Suvrit Sra, Barnabas Poczos, and Alex Smola.
\newblock Stochastic variance reduction for nonconvex optimization.
\newblock In {\em International conference on machine learning (ICML)}, pages
  314--323, 2016.

\bibitem[RSPS16]{rsps16}
Sashank~J Reddi, Suvrit Sra, Barnab{\'a}s P{\'o}czos, and Alex Smola.
\newblock Stochastic frank-wolfe methods for nonconvex optimization.
\newblock In {\em 2016 54th Annual Allerton Conference on Communication,
  Control, and Computing (Allerton)}, pages 1244--1251. IEEE, 2016.

\bibitem[SL14]{sl14}
Anshumali Shrivastava and Ping Li.
\newblock Asymmetric lsh (alsh) for sublinear time maximum inner product search
  (mips).
\newblock {\em Advances in Neural Information Processing Systems (NIPS)}, pages
  2321--2329, 2014.

\bibitem[SL15a]{sl15_www}
Anshumali Shrivastava and Ping Li.
\newblock Asymmetric minwise hashing for indexing binary inner products and set
  containment.
\newblock In {\em Proceedings of the 24th international conference on world
  wide web (WWW)}, pages 981--991, 2015.

\bibitem[SL15b]{sl15_uai}
Anshumali Shrivastava and Ping Li.
\newblock Improved asymmetric locality sensitive hashing (alsh) for maximum
  inner product search (mips).
\newblock In {\em Proceedings of the Thirty-First Conference on Uncertainty in
  Artificial Intelligence (UAI)}, pages 812--821, 2015.

\bibitem[SLH{\etalchar{+}}14]{slhd+14}
David Silver, Guy Lever, Nicolas Heess, Thomas Degris, Daan Wierstra, and
  Martin Riedmiller.
\newblock Deterministic policy gradient algorithms.
\newblock In {\em International conference on machine learning (ICML)}, pages
  387--395, 2014.

\bibitem[SSX21]{ssx21}
Anshumali Shrivastava, Zhao Song, and Zhaozhuo Xu.
\newblock Sublinear least-squares value iteration via locality sensitive
  hashing.
\newblock {\em arXiv preprint arXiv:2105.08285}, 2021.

\bibitem[SY21]{sy21}
Zhao Song and Zheng Yu.
\newblock Oblivious sketching-based central path method for solving linear
  programming problems.
\newblock In {\em 38th International Conference on Machine Learning (ICML)},
  2021.

\bibitem[SYZ21]{syz21_neurips}
Zhao Song, Shuo Yang, and Ruizhe Zhang.
\newblock Does preprocessing help training over-parameterized neural networks?
\newblock In {\em Thirty-Fifth Conference on Neural Information Processing
  Systems (NeurIPS)}, 2021.

\bibitem[SZZ21]{szz21}
Zhao Song, Lichen Zhang, and Ruizhe Zhang.
\newblock Training multi-layer over-parametrized neural networks in
  subquadratic time.
\newblock In {\em Manuscript}, 2021.

\bibitem[TLR21]{tlr21}
Paxton Turner, Jingbo Liu, and Philippe Rigollet.
\newblock A statistical perspective on coreset density estimation.
\newblock In {\em International Conference on Artificial Intelligence and
  Statistics (AISTATS)}, pages 2512--2520, 2021.

\bibitem[TZXL19]{tzxl19}
Shulong Tan, Zhixin Zhou, Zhaozhuo Xu, and Ping Li.
\newblock On efficient retrieval of top similarity vectors.
\newblock In {\em Proceedings of the 2019 Conference on Empirical Methods in
  Natural Language Processing and the 9th International Joint Conference on
  Natural Language Processing (EMNLP-IJCNLP)}, pages 5239--5249, 2019.

\bibitem[WZD{\etalchar{+}}20]{wpd+20}
Ruosong Wang, Peilin Zhong, Simon~S Du, Russ~R Salakhutdinov, and Lin~F Yang.
\newblock Planning with general objective functions: Going beyond total
  rewards.
\newblock In {\em Annual Conference on Neural Information Processing Systems},
  2020.

\bibitem[XCL{\etalchar{+}}21]{xcl+21}
Zhaozhuo Xu, Beidi Chen, Chaojian Li, Weiyang Liu, Le~Song, Yingyan Lin, and
  Anshumali Shrivastava.
\newblock Locality sensitive teaching.
\newblock {\em Advances in Neural Information Processing Systems}, 2021.

\bibitem[YHLD17]{yhld17}
Hsiang-Fu Yu, Cho-Jui Hsieh, Qi~Lei, and Inderjit~S Dhillon.
\newblock A greedy approach for budgeted maximum inner product search.
\newblock In {\em Proceedings of the 31st International Conference on Neural
  Information Processing Systems (NIPS)}, pages 5459--5468, 2017.

\bibitem[YLD{\etalchar{+}}18]{yldcc18}
Xiao Yan, Jinfeng Li, Xinyan Dai, Hongzhi Chen, and James Cheng.
\newblock Norm-ranging lsh for maximum inner product search.
\newblock {\em Advances in Neural Information Processing Systems (NeurIPS)},
  31:2952--2961, 2018.

\bibitem[YRS{\etalchar{+}}21]{yrkpds21}
Shuo Yang, Tongzheng Ren, Sanjay Shakkottai, Eric Price, Inderjit~S Dhillon,
  and Sujay Sanghavi.
\newblock Linear bandit algorithms with sublinear time complexity.
\newblock {\em arXiv preprint arXiv:2103.02729}, 2021.

\bibitem[ZLW{\etalchar{+}}18]{zlw+18}
Minjia Zhang, Xiaodong Liu, Wenhan Wang, Jianfeng Gao, and Yuxiong He.
\newblock Navigating with graph representations for fast and scalable decoding
  of neural language models.
\newblock {\em arXiv preprint arXiv:1806.04189}, 2018.

\bibitem[ZNV{\etalchar{+}}20]{znv+20}
Amir Zandieh, Navid Nouri, Ameya Velingker, Michael Kapralov, and Ilya
  Razenshteyn.
\newblock Scaling up kernel ridge regression via locality sensitive hashing.
\newblock In {\em International Conference on Artificial Intelligence and
  Statistics}, pages 4088--4097. PMLR, 2020.

\bibitem[ZTXL19]{ztxl19}
Zhixin Zhou, Shulong Tan, Zhaozhuo Xu, and Ping Li.
\newblock M{\"o}bius transformation for fast inner product search on graph.
\newblock {\em Advances in Neural Information Processing Systems (NeurIPS)},
  32, 2019.

\end{thebibliography}

\appendix
\newpage
\section*{Appendix}

\paragraph{Roadmap.} We provide supplementary materials for our work. Section~\ref{sec:preli} introduces the preliminary notations and definitions, Section~\ref{sec:data_structure} introduces the {\lsh} data structure in detail for $\maxip$, Section~\ref{sec:alg} presents our sublinear Frank-Wolfe algorithm, Section~\ref{sec:converge} presents the convergence analysis for sublinear Frank-Wolfe, Section~\ref{sec:herding} provides the algorithm and analysis on sublinear cost Herding algorithm, Section~\ref{sec:policy} provides the algorithm and analysis on sublinear cost policy gradient approach,  Section~\ref{sec:adaptive} shows how to handle adaptive queries in $\maxip$.

\section{Preliminary}\label{sec:preli}

\subsection{Notations}
We use $\Pr[]$ and $\E[]$ for probability and expectation. We denote $\max\{a,b\}$ as the maximum between $a$ and $b$. We denote $\min \{a,b\}$ (resp. $\max\{a,b\}$) as the minimum (reps. maximum) between $a$ and $b$. For a vector $x$, we denote $\| x \|_2 := ( \sum_{i=1}^n x_i^2 )^{1/2}$ as its $\ell_2$ norm. We denote $\| x \|_p := (\sum_{i=1}^n |x_i|^p)^{1/p}$ as its $\ell_p$ norm. For a square matrix $A$, we denote $\tr[A]$ as the trace of matrix $A$.

\subsection{\texorpdfstring{{\lsh}}{~} and \texorpdfstring{$\maxip$}{~}}

We start with the defining the Approximate  Nearest Neighbor ($\ann$) problem~\cite{am93,im98,diim04,ainr14,ailrs15,ar15,iw18,alrw17,air18,dirw19,ccd+20} as:
\begin{definition}[Approximate Nearest  Neighbor ($\ann$)]\label{def:ann:formal}
Let $\ov{c} >1$ and $r \in (0,2)$ denote two parameters.  Given an $n$-vector set $Y \subset \mathbb{S}^{d-1}$ on a unit sphere, the objective of the $(\ov{c},r)$-Approximate Nearest Neighbor ($\ann$) is to construct a data structure that, for any query $x \in \mathbb{S}^{d-1}$ such that $\min_{y\in Y}\| y - x \|_2 \leq r$, it returns a vector $z$ from $Y$ that satisfies $\| z - x \|_2 \leq \ov{c} \cdot r$.
\end{definition}

The $\ann$ problem can be solved via locality sensitive hashing ({\lsh})~\cite{im98,diim04,iw18}. In this paper, we use the standard definitions of {\lsh} (see Indyk and Motwani~\cite{im98}).
\begin{definition}[Locality Sensitive Hashing]
Let $\ov{c}>1$ denote a parameter. Let $p_1, p_2\in (0,1)$ denote two parameters and $p_1 > p_2 $. We say a function family $\mathcal{H}$ is $(r,\ov{c} \cdot r,p_1,p_2)$-sensitive if and only if, for any vectors $x,y \in \R^d$, for any $h$ chosen uniformly at random from $\mathcal{H}$, we have:
\begin{itemize}
    \item if $\| x-y\|_2 \leq r$, then $\Pr_{h\sim {\cal H}} [ h(x)=h(y) ] \geq p_1$,
    \item if $ \|x-y\|_2 \geq \ov{c} \cdot r$, then $\Pr_{h\sim {\cal H}} [ h(x)=h(y) ] \leq p_2$.
\end{itemize}
\end{definition}

Next, we show that {\lsh} solves $\ann$ problem with sublinear query time complexity.

\begin{theorem}[Andoni, Laarhoven, Razenshteyn and Waingarten \cite{alrw17}]\label{thm:ar17:formal}
Let $\ov{c} > 1$ and $r \in (0,2)$ denote two parameters. One can solve $(\ov{c},r)$-$\ann$ on a unit sphere in query time $O(d \cdot n^{\rho})$ using preprocessing time $O(dn^{1+o(1)})$ and space $O(n^{1+o(1)} + d n)$, where $\rho = \frac{2}{\ov{c}^2} -\frac{1}{\ov{c}^4}+o(1)$.
\end{theorem}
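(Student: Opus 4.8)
The plan is to realize the claimed structure as the data-dependent spherical space partition of Andoni--Laarhoven--Razenshteyn--Waingarten~\cite{alrw17}, splitting the argument into (i) a data-\emph{independent} core that solves $(\ov c,r)$-$\ann$ on ``pseudorandom'' instances on $\mathbb{S}^{d-1}$ and (ii) a data-\emph{dependent} preprocessing reduction that turns an arbitrary point set into a shallow recursion over pseudorandom ones. For (i), I would fix a one-shot spherical hash $h$ — the cross-polytope hash, or the Voronoi partition induced by $\poly(d)$ i.i.d.\ Gaussian directions — and estimate, via Gaussian-tail / spherical measure concentration, the quantities $p_1$ and $p_2$ of Definition~\ref{def:lsh}: the collision probability for pairs at distance $\le r$ and the one for pairs at distance $\ge \ov c\,r$. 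Writing $\rho_0 := \ln(1/p_1)/\ln(1/p_2)$ and optimizing the granularity of the partition, the extremal configuration left by step (ii) — where the far points are essentially orthogonal to the query and to one another — yields $\rho_0 \to \frac{2}{\ov c^2} - \frac{1}{\ov c^4}$.

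Given the core hash, the standard amplification (Indyk--Motwani) produces the structure: concatenate $k=\lceil \log_{1/p_2} n\rceil$ independent copies of $h$ into a bucket code and use $L=O(p_1^{-k})=n^{\rho_0+o(1)}$ repetitions; a first-moment bound gives $O(1)$ expected spurious far points examined per query, while the near point collides in some repetition with probability $\ge 1-1/e$, amplified to $1-1/\poly(n)$ by $O(\log n)$ independent instances. Since each hash evaluation costs $O(d)\cdot n^{o(1)}$, the query time is $O(d\,n^{\rho_0+o(1)})$. To obtain space $O(n^{1+o(1)}+dn)$ and preprocessing $O(dn^{1+o(1)})$ rather than the naive $n^{1+\rho_0}$, I would not store $L$ independent tables but realize the repetitions as a single depth-$k$ tree in which every data point descends into exactly one child at each node (so the tree has $n^{1+o(1)}$ nodes) while a query is copied into every child whose cell is close enough (so it reaches $n^{\rho_0}$ leaves in expectation); this query/data asymmetry is the whole point of the near-linear-space version.

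The data-dependent step (ii) is where I expect the real work, and the main obstacle. At each tree node holding a point subset $P$, test whether $P$ has a dense low-diameter cluster — a spherical cap of radius $\sqrt 2\,(1-\eta)$ containing at least $|P|^{1-\eta}$ points, say. If not, a second-moment / Gram-matrix concentration argument shows $P\cup\{x\}$ is spread enough that far pairs behave as near-orthogonal and the bound of (i) applies with exponent $\frac{2}{\ov c^2}-\frac{1}{\ov c^4}+o(1)$. If such a cap exists, recurse on it after the explicit re-centering-and-rescaling map that subtracts the cap center and reprojects onto a sphere; elementary spherical trigonometry shows this strictly widens the relative gap between $r$ and $\ov c\,r$, so recursion only improves $\rho$. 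The delicate parts are: showing the recursion bottoms out within $n^{o(1)}$ levels (so the accumulated multiplicative overhead from tree depth plus cluster recursion stays $n^{o(1)}$), verifying that the re-centering error is absorbed into the $o(1)$ term, and pushing the Gaussian / spherical-cap collision asymptotics through to the exact constant $\frac{2}{\ov c^2}-\frac{1}{\ov c^4}$ while choosing $k$, $\eta$, the partition granularity, and the recursion depth mutually compatibly; the remaining bookkeeping is the by-now-routine $\ann$/{\lsh} accounting recalled around Definition~\ref{def:ann:formal}.
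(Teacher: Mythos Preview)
This theorem is not proved in the paper: it is quoted from \cite{alrw17} and used as a black box. The only text following the statement is a one-line remark that the $o(1)$ term stands for $O(1/\sqrt{\log n})$, that dimension can be reduced to $n^{o(1)}$ via Johnson--Lindenstrauss, and that \cite{ar15} gives a better $\rho$ at the cost of more preprocessing. There is therefore no proof in the paper to compare your proposal against.

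Your sketch is a reasonable high-level reconstruction of the argument in the cited reference. The asymmetric tree in which each data point follows a single root-to-leaf path while the query branches into many children is indeed the mechanism that buys near-linear space at the price of the larger query exponent, and the data-dependent clustering / re-centering recursion is part of how \cite{alrw17} reaches its optimal time--space tradeoff curve, whose near-linear-space endpoint is precisely $\rho=\frac{2}{\ov c^2}-\frac{1}{\ov c^4}$. But turning this outline into a rigorous proof is the content of an entire separate paper; the present work makes no attempt to do so and simply invokes the result, so your proposal goes well beyond anything required here.
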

Here we write $o(1)$ is equivalent to $O(1/\sqrt{\log n})$. Note that we could reduce $d$ to $n^{o(1)}$ with Johnson–Lindenstrauss Lemma~\cite{jl84}. Besides, we could achieve better $\rho$ using {\lsh} in~\cite{ar15} if we allowed to have more proprocessing time.

In this work, we focus on a well-known problem in computational complexity: approximate $\maxip$. In this work, we follow the standard notation in \cite{c18} and define the approximate $\maxip$ problem as follows:

\begin{definition}[Approximate $\maxip$]\label{def:approximate_maxip}
Let $c \in (0,1)$ and $\tau \in (0,1)$ denote two parameters.
Given an $n$-vector dataset $Y \subset \mathbb{S}^{d-1}$ on a unit sphere, the objective of the $(c,\tau)$-{$\maxip$} is to construct a data structure that, given a query $x \in \mathbb{S}^{d-1}$ such that $\max_{y\in Y}\langle x , y \rangle \geq \tau$, it retrieves a vector $z$ from $Y$ that satisfies $\langle x , z \rangle \geq c \cdot \max_{y \in Y} \langle x,y \rangle$.
\end{definition}

In many applications, it is more convenient to doing inner product search in a transformed/projected space compared to doing inner product search in the original space. Thus, we propose the following definitions (Definition~\ref{def:projected_maxip} and Definition~\ref{def:proj_approximate_maxip})
\begin{definition}[Projected $\maxip$]\label{def:projected_maxip} 
Let $\phi, \psi: \R^d \rightarrow \R^k$ denote two transforms. Given a data set $Y\subseteq \R^d$ and a point $x\in\R^d$, we define $(\phi, \psi)$-$\maxip$ as follows:
\begin{align*}
    (\phi, \psi)\text{-}\maxip (x,Y) := \max_{y \in Y} \langle \phi(x),\psi(y) \rangle
\end{align*}
\end{definition}

\begin{definition}[Projected approximate $\maxip$]\label{def:proj_approximate_maxip}
Let $\phi, \psi: \R^d \rightarrow \R^k$ denote two transforms. Given an $n$-vector dataset $Y \subset \R^d $ so that $\psi(Y) \subset \mathbb{S}^{k-1}$, the goal of the $(c,\phi, \psi,\tau)$-{$\maxip$} is to construct a data structure that, given a query $x\in \R^d$ and $\phi(x) \in \mathbb{S}^{k-1}$ such that $\max_{y\in Y}\langle \phi(x) , \psi(y) \rangle \geq \tau$, it retrieves a vector $z \in Y$ that satisfies $\langle \phi(x) , \psi(z) \rangle \geq c \cdot (\phi, \psi)\text{-}\maxip (x,Y)$.
\end{definition}

Besides $\maxip$, We also define a version of the minimum inner product search problem.
\begin{definition}[Regularized $\minip$]\label{def:reg_minip} 
Given a data set $Y\subseteq \R^d$ and a point $x\in\R^d$. Let $\phi:\R^d\rightarrow \R^d$ denote a mapping. Given a constant $\alpha$, we define regularized $\minip$ as follows:
\begin{align*}
    (\phi,\alpha)\text{-}\minip (x,Y) := \min_{y \in Y} \langle y-x,\phi(x) \rangle + \alpha\|x-y\|.
\end{align*}
\end{definition}

\subsection{Definitions and  Properties for Optimization}

We start with listing definitions for optimization.
\begin{definition}[Convex hull and its diameter]\label{def:cvx_hull}
Given a set $A=\{x_i\}_{i\in [n]} \subset \R^d$, we define its convex hull $\mathcal{B}(A)$ to be the collection of all finite linear combinations $y$ that satisfies  $ y=\sum_{i\in [n]} a_i\cdot x_i$ where $a_i\in [0,1]$ for all $i\in[n]$ and $\sum_{i\in [n]}a_i= 1$.  Let $D_{\max}$ denote the maximum square of diameter of ${\cal B}(A)$ so that $\|x-y\|_2\leq D_{\max}$ for all $(x,y)\in {\cal B}(A)$.
\end{definition}

\begin{definition}[Smoothness]\label{def:smooth}
We say $L$ is $\beta$-smooth if 
\begin{align*}
L(y)\leq L(x)+\langle \nabla L(x),y-x \rangle+\frac{\beta}{2}\| y-x \|^2_2
\end{align*}
\end{definition}

\begin{definition}[Convex]\label{def:convex}
We say function $L$ is convex if 
\begin{align*}
L(x)\geq L(y)+\langle \nabla L(y),x-y \rangle
\end{align*}
\end{definition}

Next, we list properties for optimization.
\begin{corollary}\label{coro:hull_maxip}
For a set $A=\{x_i\}_{i\in [n]} \subset \R^d$, and its convex hull $\mathcal{B}(A)$, given a query $q\in \R^d$, if $x^*=\arg\max_{x\in A} q^\top x$. Then, $q^\top y \leq q^\top x^*$ for all $y\in \mathcal{B}(A)$.
\end{corollary}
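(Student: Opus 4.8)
The plan is to use the defining property of $x^*$ together with the nonnegativity and normalization of convex coefficients; this is just the elementary fact that a linear functional over a polytope attains its maximum at a vertex. First I would take an arbitrary $y \in \mathcal{B}(A)$ and, by Definition~\ref{def:cvx_hull}, write it as a convex combination $y = \sum_{i \in [n]} a_i x_i$ with $a_i \in [0,1]$ for all $i \in [n]$ and $\sum_{i \in [n]} a_i = 1$. Expanding the inner product by linearity gives $q^\top y = \sum_{i \in [n]} a_i \, q^\top x_i$.

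Next, by the definition $x^* = \arg\max_{x \in A} q^\top x$, we have $q^\top x_i \le q^\top x^*$ for every $i \in [n]$. Since each $a_i \ge 0$, multiplying these inequalities by $a_i$ and summing preserves the direction of the inequality, so
\begin{align*}
    q^\top y = \sum_{i \in [n]} a_i \, q^\top x_i \le \sum_{i \in [n]} a_i \, q^\top x^* = \Big( \sum_{i \in [n]} a_i \Big) q^\top x^* = q^\top x^*,
\end{align*}
where the last equality uses $\sum_{i \in [n]} a_i = 1$. Since $y \in \mathcal{B}(A)$ was arbitrary, this proves the claim.

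There is essentially no obstacle here: the only point requiring care is to invoke the nonnegativity of the convex coefficients $a_i$ when passing the pointwise inequality $q^\top x_i \le q^\top x^*$ through the weighted sum, and the normalization $\sum_i a_i = 1$ to collapse the right-hand side back to $q^\top x^*$. The argument is a two-line calculation once the convex-combination representation of $y$ is written down.
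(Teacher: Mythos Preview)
Your proof is correct and follows essentially the same approach as the paper's own proof: write $y$ as a convex combination of the $x_i$, expand $q^\top y$ by linearity, bound each term using $q^\top x_i \le q^\top x^*$ together with $a_i \ge 0$, and collapse the sum using $\sum_i a_i = 1$.
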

\begin{proof}
We can upper bound $q^\top y$ as follows:
\begin{align*}
    q^\top y 
    =&~q^\top (\sum_{i\in [n]}a_i\cdot x_i)  \\
    = &~ \sum_{i\in [n]}a_i\cdot q^\top x_i\\
    \leq &~ \sum_{i\in [n]}a_i\cdot q^\top x^*\\
    \leq &~ q^\top x^*
\end{align*}
where the first step follows from the definition of convex hull in Definition~\ref{def:cvx_hull}, the second step is an reorganization, the third step follows the fact that $a_i\in [0,1]$ for all $i\in[n]$ and $q^\top x_i \leq q^\top x^*$ for all $x_i\in A$, the last step follows that $\sum_{i\in [n]}a_i= 1$.
\end{proof}

\begin{lemma}[$\maxip$ condition] \label{lemma:max_ip_condition}
Let $g : \R^d \rightarrow \R$ denote a convex function. Let $S \subset \R^d$ denote a set of points. Given a vector $x\in {\cal B}(S)$, we have
\begin{align*}
\min_{s \in S } \langle \nabla g(x) ,  s-x \rangle  \leq 0, ~~~ \forall x\in \cal{B}(S).
\end{align*}
\end{lemma}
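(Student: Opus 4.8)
The plan is to exploit the fact that $x$ itself lies in the convex hull $\mathcal{B}(S)$, so $x$ is a convex combination of the points of $S$, and then apply linearity of the inner product. Concretely, write $x = \sum_{i\in[n]} a_i x_i$ with $a_i \in [0,1]$ and $\sum_i a_i = 1$, where $S = \{x_i\}_{i\in[n]}$. The quantity $\min_{s\in S}\langle \nabla g(x), s - x\rangle$ is at most the weighted average $\sum_{i\in[n]} a_i \langle \nabla g(x), x_i - x\rangle$, since a minimum is bounded above by any convex combination of the terms over which it is taken.

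Next I would simplify that weighted average. By linearity, $\sum_{i\in[n]} a_i \langle \nabla g(x), x_i - x\rangle = \langle \nabla g(x), \sum_{i\in[n]} a_i x_i - x\rangle = \langle \nabla g(x), x - x\rangle = 0$, using $\sum_i a_i x_i = x$ and $\sum_i a_i = 1$. Chaining the two observations gives $\min_{s\in S}\langle \nabla g(x), s - x\rangle \le 0$, which is exactly the claim. Note that convexity of $g$ is not actually needed for this particular inequality — only that $x\in\mathcal{B}(S)$ — though it is of course the hypothesis under which the lemma is invoked elsewhere; I would state the proof using just the convex-hull membership.

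There is essentially no main obstacle here: the only thing to be careful about is the direction of the averaging inequality (minimum $\le$ average, not $\ge$), and to make sure the indices and the defining properties of the convex hull from Definition~\ref{def:cvx_hull} are quoted correctly. A one-line alternative, if one prefers, is to invoke Corollary~\ref{coro:hull_maxip} with $q = -\nabla g(x)$: the maximizer $x^* = \arg\max_{s\in S}\langle -\nabla g(x), s\rangle$ satisfies $\langle -\nabla g(x), x\rangle \le \langle -\nabla g(x), x^*\rangle$ since $x\in\mathcal{B}(S)$, which rearranges to $\langle \nabla g(x), x^* - x\rangle \le 0$, and hence $\min_{s\in S}\langle \nabla g(x), s - x\rangle \le \langle \nabla g(x), x^* - x\rangle \le 0$. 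Either route is short; I would present the direct averaging argument as the primary proof.
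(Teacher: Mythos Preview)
Your proof is correct and takes essentially the same approach as the paper: both write $x$ as a convex combination of the points in $S$ and use linearity of the inner product. Your ``$\min \le \text{average} = 0$'' phrasing is slightly more streamlined than the paper's, which singles out the minimizer $s_{\min}$ and bounds each term $\langle \nabla g(x), s_{\min} - s_i\rangle \le 0$ separately; your remark that convexity of $g$ is not actually needed is also correct.
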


\begin{proof}
Let $s_{\min}=\arg\min_{s\in S} \langle \nabla g(x) ,  s \rangle $. Then, we upper bound $\langle \nabla g(x) ,  s_{\min}-x \rangle$ as
\begin{align}\label{eq:maxip_s_0}
    \langle \nabla g(x) ,  s_{\min}-x \rangle =&~\langle \nabla g(x) ,  s_{\min}-\sum_{s\in S} a_i\cdot s \rangle\notag\\
    \leq &~\langle \nabla g(x) ,  \sum_{s\in S}a_i( s_{\min}- s_i) \rangle \notag\\
    = &~\sum_{s_i\in S}a_i \langle \nabla g(x) ,   s_{\min}- s_i \rangle \notag\\
    = &~\sum_{s_i\in S}a_i (\langle \nabla g(x) ,   s_{\min} \rangle -\langle \nabla g(x) ,   s_i \rangle )\notag\\
    \leq&~ 0
\end{align}
where the first step follows from the definition of convex hull in Definition~\ref{def:cvx_hull}, the second and third steps are reorganizations, the final steps follows that $\langle \nabla g(x) ,   s_0 \rangle \leq \langle \nabla g(x) ,   s \rangle$ for all $s\in S$.

Next, we upper bound $\min_{s \in S } \langle \nabla g(x) ,  s-x \rangle  \leq 0, ~~~ \forall x\in \cal{B}(S)$ as
\begin{align*}
    \min_{s \in S } \langle \nabla g(x) ,  s-x \rangle  \leq  \langle \nabla g(x) ,  s_0-x \rangle  \leq 0
\end{align*}
where the first step follows from the definition of function $\min$ and the second step follows from Eq~\eqref{eq:maxip_s_0}.
\end{proof}

\section{Data Structures}\label{sec:data_structure}

In this section, we present a formal statement that solves $(c,\tau)$-$\maxip$ problem on unit sphere using {\lsh} for $(\ov{c},r)$-$\ann$.

\begin{corollary}[Formal statement of Corollary~\ref{coro:maxip_lsh_informal}]\label{coro:maxip_lsh_formal}
Let $c \in (0,1)$ and $\tau \in(0,1)$. Given a set of $n$-vector set $Y \subset {\cal S}^{d-1}$ on the unit sphere, there exists a data structure with $O(dn^{1+o(1)})$ preprocessing time and $O(n^{1+o(1)} + d n)$ space so that for any query $x \in {\cal S}^{d-1}$, we take $O(d\cdot n^{\rho})$ query time to retrieve the $(c,\tau)$-$\maxip$ of $x$ in $Y$ with probability at least $0.9$\footnote{It is obvious to boost probability from constant to $\delta$ by repeating the data structure $\log(1/\delta)$ times.}, where $\rho:=  \frac{2(1-\tau)^2}{(1-c\tau)^2}-\frac{(1-\tau)^4}{(1-c\tau)^4}+o(1)$
\end{corollary}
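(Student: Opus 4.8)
The plan is to reduce the $(c,\tau)$-$\maxip$ problem of Definition~\ref{def:approximate_maxip} to the $(\ov{c},r)$-$\ann$ problem of Definition~\ref{def:ann:formal} on the same point set, and then invoke the {\lsh} data structure of Theorem~\ref{thm:ar17:formal}. The bridge is the elementary identity $\|x-y\|_2^2=2-2\langle x,y\rangle$, valid for all $x,y\in\mathbb{S}^{d-1}$, which converts inner-product lower bounds into Euclidean-distance upper bounds and back. Concretely, I would set the $\ann$ radius to $r:=\sqrt{2-2\tau}$, so that the $\maxip$ promise $\max_{y\in Y}\langle x,y\rangle\geq\tau$ is literally the $\ann$ promise $\min_{y\in Y}\|x-y\|_2\leq r$, and then pick the $\ann$ approximation factor $\ov{c}>1$ so that any vector returned at distance at most $\ov{c}\,r$ maps back to inner product at least $c\tau$ with the query; that is, $\ov{c}$ is tied to $c$ and $\tau$ through a relation of the form $\ov{c}^2(1-\tau)\le 1-c\tau$. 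Building the structure of Theorem~\ref{thm:ar17:formal} on $Y$ with these parameters then answers $(c,\tau)$-$\maxip$: on a query $x$ meeting the promise it returns some $z\in Y$ with $\langle x,z\rangle\ge c\tau$ in time $O(d\,n^{\rho})$.

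The resource bounds are read off from Theorem~\ref{thm:ar17:formal}: preprocessing $O(dn^{1+o(1)})$, space $O(n^{1+o(1)}+dn)$, and query time $O(d\,n^{\rho})$ with $\rho=\frac{2}{\ov{c}^2}-\frac{1}{\ov{c}^4}+o(1)$; substituting the chosen $\ov{c}$ and simplifying yields the stated $\rho=\frac{2(1-\tau)^2}{(1-c\tau)^2}-\frac{(1-\tau)^4}{(1-c\tau)^4}+o(1)$. The constant success probability $0.9$ inherited from Theorem~\ref{thm:ar17:formal} is amplified to any $1-\delta$ by running $O(\log(1/\delta))$ independent copies and returning, among their answers, the one with the largest inner product against $x$, which multiplies all three resource bounds by $O(\log(1/\delta))$; this is the boosting step mentioned in the footnote. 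Since the applications actually invoke the projected variant (Definition~\ref{def:proj_approximate_maxip}), I would also record that the transforms $\phi_1,\psi_1$ of Eq.~\eqref{eq:asym_trans_mips} map into $\mathbb{S}^{k-1}$ and preserve the $\arg\max$, so the sphere reduction above applies verbatim to the pairs $(\phi(x),\psi(y))$; this is immediate from the definitions once the data already lie on the sphere.

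Two points need care, and I expect the second to be the real obstacle. First, Definition~\ref{def:approximate_maxip} asks for a vector with inner product at least $c$ times the \emph{true} maximum $\tau^\ast:=\max_{y\in Y}\langle x,y\rangle$, which may exceed the promise threshold $\tau$; I would handle this in the standard way, by instantiating a short geometric ladder of $\ann$ structures at thresholds $\tau=\tau_0<\tau_1<\cdots<\tau_L$ with consecutive ratios controlled by $c$, querying all of them, and returning the best candidate — the number of levels $L$ is polylogarithmic and is absorbed into the $n^{o(1)}$ factors. Second, and this is the crux, one must pin down the precise functional relation between $(\ov{c},r)$ and $(c,\tau)$ so that the reduction stays sound for \emph{every} admissible query (in particular across all ladder levels, under a single uniform $\ov{c}$ satisfying $\ov{c}>1$) while the expression $\frac{2}{\ov{c}^2}-\frac{1}{\ov{c}^4}$ collapses exactly to the claimed closed form; this is pure algebra, but it is where all the constants hide and where the argument could go wrong, so I would treat it as the main technical checkpoint rather than a routine step.
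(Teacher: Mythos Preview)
Your core reduction is exactly the paper's: it uses the identity $\|x-y\|_2^2=2-2\langle x,y\rangle$ on the sphere, sets $r^2=2(1-\tau)$, solves for $\ov{c}^2=\frac{1-c\tau}{1-\tau}$, and substitutes into $\rho=\frac{2}{\ov{c}^2}-\frac{1}{\ov{c}^4}+o(1)$ from Theorem~\ref{thm:ar17:formal} to obtain the stated exponent; that is the entire proof in the paper.

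The one substantive difference is your first ``point of care.'' The paper's proof, as written, only certifies $\langle x,z\rangle\ge c\tau$ (since the $\ann$ guarantee is $\|x-z\|_2\le \ov{c}\,r$ relative to the fixed threshold $r$, not to the true nearest-neighbor distance), whereas Definition~\ref{def:approximate_maxip} asks for $\langle x,z\rangle\ge c\cdot\max_{y}\langle x,y\rangle$. The paper simply does not address this gap; your geometric ladder is the standard way to close it and is strictly more rigorous than what the paper presents, at the cost of a polylogarithmic factor absorbed into $n^{o(1)}$. The remarks on boosting and on the projected variant are likewise correct but are handled by the paper separately (the footnote and Corollary~\ref{coro:proj_maxip_lsh}, respectively), not inside this proof.
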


\begin{proof}
We know that $\|x-y\|_2^2= 2 - 2\langle x , y\rangle$ for all $x,y\in {\cal S}^{d-1}$. In this way,  if we have a {\lsh} data-structure for $(\ov{c}, r)$-$\ann$. It could be used to solve $(c, \tau)$-$\maxip$ with $\tau = 1-0.5 r^2$ and $c = \frac{1-0.5 \ov{c}^2 r^2}{1 - 0.5 r^2}$. Next, we write $\ov{c}^2$ as
\begin{align*}
\ov{c}^2= \frac{ 1 - c(1-0.5 r^2) }{0.5r^2} = \frac{1 - c \tau }{1-\tau} .
\end{align*}

Next, we show that if  the {\lsh} is initialized following Theorem~\ref{thm:ar17:formal}, it takes query time $O(d \cdot n^{\rho})$, space $O(n^{1+o(1)} + d n)$ and preprocessing time $O(dn^{1+o(1)})$ to solve $(c,\tau)$-$\maxip$ through solving $(\ov{c}, r)$-$\ann$,   where
\begin{align*}
    \rho =  \frac{2}{\ov{c}^2} -\frac{1}{\ov{c}^4} +o(1) = \frac{2(1-\tau)^2}{(1-c\tau)^2}-\frac{(1-\tau)^4}{(1-c\tau)^4}+o(1). 
\end{align*}
\end{proof}

In practice, $c$ is increasing as we set parameter $\tau$ close to $\maxip(x,Y)$. 
There is also another {\lsh} data structure~\cite{ar15} with longer preprocessing time and larger space that could solve the $(c, \tau)$-$\maxip$ with similar query time complexity. We refer readers to Section 8.2 in~\cite{ssx21} for more details. Moreover, Corrolary~\ref{coro:maxip_lsh_formal} could be applied to projected $\maxip$ problem.

\begin{corollary}[]\label{coro:proj_maxip_lsh}
Let $c \in (0,1)$ and $\tau \in(0,1)$. Let $\phi, \psi: \R^d \rightarrow \R^k$ denote two transforms.   Let ${\cal T}_{\phi}$ denote the time to compute $\phi(x)$ and ${\cal T}_{\psi}$ denote the time to compute $\psi(y)$. Given a set of $n$-points $Y\in \R^d$ with $\psi(Y) \subset {\cal S}^{k-1}$ on the sphere, one can construct a data structure with $O(dn^{1+o(1)}+{\cal T}_{\psi}n)$ preprocessing time and $O(n^{1+o(1)} + d n)$ space so that for any query $x \in \R^d$ with $\phi(x)\in {\cal S}^{k-1}$, we take query time complexity $O(d\cdot n^{\rho}+{\cal T}_{\phi})$ to solve $(c,\phi,\psi,\tau)$-$\maxip$ with respect to $(x,Y)$ with probability at least $0.9$, where $\rho:=  \frac{2(1-\tau)^2}{(1-c\tau)^2}-\frac{(1-\tau)^4}{(1-c\tau)^4}+o(1)$.
\end{corollary}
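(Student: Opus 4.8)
The plan is to reduce the projected problem to the (un-projected) approximate $\maxip$ on the unit sphere, for which Corollary~\ref{coro:maxip_lsh_formal} already supplies a data structure, by applying the data-side transform $\psi$ once during preprocessing and the query-side transform $\phi$ once per query. Concretely, during preprocessing I would iterate over $Y=\{y_1,\dots,y_n\}$, compute the image set $\psi(Y)=\{\psi(y_1),\dots,\psi(y_n)\}$ at cost ${\cal T}_{\psi}$ per point (hence ${\cal T}_{\psi}n$ total), and note that by hypothesis $\psi(Y)\subset {\cal S}^{k-1}$, so this is a legitimate input to the sphere data structure. I would then invoke Corollary~\ref{coro:maxip_lsh_formal} on the $n$-point set $\psi(Y)$, building its {\lsh}-for-$\ann$-based $(c,\tau)$-$\maxip$ structure while also storing, alongside each hashed point $\psi(y_i)$, a pointer back to the original $y_i\in Y$ so the structure can report an element of $Y$.

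For the cost accounting: Corollary~\ref{coro:maxip_lsh_formal} gives preprocessing $O(k n^{1+o(1)})$ and space $O(n^{1+o(1)}+kn)$ on the transformed set; since in all our instantiations the post-transform dimension satisfies $k=d+O(1)$ (and in general one may first apply a Johnson--Lindenstrauss projection to bring the ambient dimension down to $n^{o(1)}$ without changing $\rho$, as noted after Theorem~\ref{thm:ar17:formal}), these become $O(dn^{1+o(1)})$ and $O(n^{1+o(1)}+dn)$. Adding the ${\cal T}_{\psi}n$ spent on the transforms yields total preprocessing $O(dn^{1+o(1)}+{\cal T}_{\psi}n)$ and space $O(n^{1+o(1)}+dn)$, matching the claim. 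For a query $x\in\R^d$ I would compute $\phi(x)$ in time ${\cal T}_{\phi}$, which lies on ${\cal S}^{k-1}$ by hypothesis, feed it as the query into the structure, and follow the stored pointer on the returned point; the query time of Corollary~\ref{coro:maxip_lsh_formal} is $O(k\cdot n^{\rho})=O(d\cdot n^{\rho})$, for a total of $O(d\cdot n^{\rho}+{\cal T}_{\phi})$.

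Correctness is then immediate from the definitions. By Definition~\ref{def:projected_maxip}, $(\phi,\psi)\text{-}\maxip(x,Y)=\max_{y\in Y}\langle\phi(x),\psi(y)\rangle$ is exactly an inner-product maximum over the sphere point set $\psi(Y)$ against the sphere query $\phi(x)$; so whenever this value is at least $\tau$, Corollary~\ref{coro:maxip_lsh_formal} guarantees with probability at least $0.9$ that the returned $\psi(z)$ satisfies $\langle\phi(x),\psi(z)\rangle\geq c\cdot\max_{y\in Y}\langle\phi(x),\psi(y)\rangle=c\cdot(\phi,\psi)\text{-}\maxip(x,Y)$, which is precisely the guarantee of Definition~\ref{def:proj_approximate_maxip}. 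The exponent $\rho=\frac{2(1-\tau)^2}{(1-c\tau)^2}-\frac{(1-\tau)^4}{(1-c\tau)^4}+o(1)$ is inherited unchanged.

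There is no deep obstacle here: the content lies entirely in Corollary~\ref{coro:maxip_lsh_formal}, and the only work is the observation that $\phi$ and $\psi$ commute with the $\maxip$-to-$\ann$ reduction provided their images are on the appropriate sphere. The points needing care are purely bookkeeping: (i) maintaining the correspondence between transformed points and originals; (ii) charging the transform costs correctly (once per point at preprocessing for $\psi$, once per query for $\phi$), which produces the additive ${\cal T}_{\psi}n$ and ${\cal T}_{\phi}$ terms; and (iii) controlling the dependence on $k$, handled by the $k=O(d)$ bound in our applications or a JL step. As with Corollary~\ref{coro:maxip_lsh_formal}, the success probability boosts from $0.9$ to any $1-\delta$ by keeping $O(\log(1/\delta))$ independent copies, and the adaptivity of the query sequence when this structure is used inside an iterative optimizer is addressed separately via quantization.
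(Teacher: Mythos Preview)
Your proposal is correct and follows essentially the same approach as the paper: apply $\psi$ to each data point and index the images with the sphere $\maxip$ structure of Corollary~\ref{coro:maxip_lsh_formal}, then at query time apply $\phi$ and query. Your write-up is in fact more careful than the paper's (you explicitly handle the $k$ vs.\ $d$ issue, the pointer back to the original $y$, and the correctness via Definitions~\ref{def:projected_maxip}--\ref{def:proj_approximate_maxip}), but the underlying argument is identical.
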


\begin{proof}

The preprocessing phase can be decomposed in two parts.
\begin{itemize}
    \item It takes $O({\cal T}_{\psi}n)$ time to transform every $y\in Y$ into $\psi(y)$.
    \item It takes $O(O(dn^{1+o(1)})$ time and $O(dn^{1+o(1)}+dn)$ to index every  $\psi(y)$ into {\lsh} using Corrolary~\ref{coro:maxip_lsh_formal}.
\end{itemize}

The query phase can be decomposed in two parts.
\begin{itemize}
    \item It takes $O({\cal T}_{\phi})$ time to transform every $x\in \R^d$ into $\phi(x)$.
    \item It takes $O(d\cdot n^{\rho})$ time perform query for  $\phi(x)$ in {\lsh} using Corrolary~\ref{coro:maxip_lsh_formal}.
\end{itemize}
\end{proof}

\section{Algorithms}\label{sec:alg}

\subsection{Problem Formulation}
In this section, we show how to use Frank-Wolfe Algorithm to solve the Problem~\ref{problem:frankwolfe}.

\begin{problem}\label{problem:frankwolfe}
\begin{align}
    \min_{w\in {\cal B}}&~ g(w)
\end{align}

We have the following assumptions:
\begin{itemize}
    \item $g:\R^d\rightarrow \R$ is a differentiable function.
    \item $S\subset \R^d$ is a finite feasible set. $|S|=n$.
    \item ${\cal B}={\cal B}(S)\subset \R^{d}$ is the convex hull of the finite set $S\subset \R^d$  defined in Definition~\ref{def:cvx_hull}.
    \item $D_{\max}$ is the maximum diameter of ${\cal B}(S)$ defined in Definition~\ref{def:cvx_hull}.
\end{itemize}
\end{problem}

In Problem~\ref{problem:frankwolfe}, function $g$ could have different proprieties about convexity and smoothness.

To solve this problem, we introduce a Frank-Wolfe Algorithm shown in Algorithm~\ref{alg:frank_wolfe_formal}.

\begin{algorithm}[H]\caption{Frank-Wolf algorithm for Problem~\ref{problem:frankwolfe}}\label{alg:frank_wolfe_formal}
\begin{algorithmic}[1]
\Procedure{FrankWolfe}{$S \subset \R^d$}
\State $T\leftarrow O(\frac{\beta D_{\max}^2}{\epsilon})$, $\forall t \in [T]$
\State $\eta \leftarrow \frac{2}{t+2}$
\State Start with $w^0 \in {\cal B}$. \Comment{${\cal B}={\cal B}(S) $(see Definition~\ref{def:cvx_hull}).}
\For{ $t=1 \to T-1$}
    \State $s^t \leftarrow \arg\min_{s\in S} \langle \nabla g(w^t),s \rangle  $\label{line:argmin}
    \State $w^{t+1} \leftarrow (1-\eta_t)w^{t}+\eta_t s^t$
\EndFor
\State \Return $w^{T}$
\EndProcedure
\end{algorithmic}
\end{algorithm}

One of the major computational bottleneck of Algorithm~\ref{alg:frank_wolfe_formal} is the cost paid in each iteration.  Algorithm~\ref{alg:frank_wolfe_formal} has to linear scan all the $s\in S$ in each iteration. To tackle this issue, we propose a Frank-Wolfe Algorithm with sublinear cost in each iteration. 

\subsection{Sublinear Frank-Wolfe Algorithm}

In this section, we present the Frank-Wolfe algorithm with sublinear cost per iteration using \textit{\lsh}. The first step is to formulate the line~\ref{line:argmin} in Algorithm~\ref{alg:frank_wolfe_formal} as a projected $\maxip$ problem defined in Definition~\ref{def:projected_maxip}. To achieve this, we present a general $\maxip$ transform.
\begin{proposition}[$\maxip$ transform]\label{prop:poly_ip_transform}
Let $\phi_1,\psi_1: \R^d \rightarrow \R^{k_1}$ and $\phi_2,\psi_2: \R^d \rightarrow \R^{k_2}$ to be the projection functions. Given the polynomial function $p(z) =\sum_{i=0}^D a_{i} z^{i}$, we show that
\begin{align}
  \langle \phi_1(x), \psi_1(y) \rangle + p(\| \phi_2(x)- \psi_2(y)\|_2^2)=\langle \phi(x),\psi(y) \rangle
\end{align}
where $\phi,\psi: \R^d \rightarrow \R^{k_1+k_2(D+1)^2}$ is the decomposition function.
\end{proposition}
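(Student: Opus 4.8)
The plan is to reduce the statement to two elementary facts: (i) an inner product of vectors, once expanded into coordinatewise products, is a sum of monomials, and likewise a squared Euclidean distance $\|\phi_2(x)-\psi_2(y)\|_2^2$ expands into a polynomial in the coordinates of $\phi_2(x)$ and $\psi_2(y)$; and (ii) any monomial $u^{a} v^{b}$ in which the $x$-variables and $y$-variables are separated can be realized as a single coordinate of an inner product $\langle \phi(x),\psi(y)\rangle$, by putting the $x$-dependent part into $\phi$ and the $y$-dependent part into $\psi$. Combining (i) and (ii) and concatenating coordinates (a direct sum of feature maps adds inner products) gives the result. I would first handle the base case $p(z)=z$, i.e.\ show $\|a-b\|_2^2 = \langle \Psi_{\mathrm{sq}}(a),\Psi_{\mathrm{sq}}'(b)\rangle$ for suitable maps into $\R^{(k_2+1)^2}$ or so, by writing $\|a-b\|_2^2 = \|a\|_2^2 - 2\langle a,b\rangle + \|b\|_2^2$ and encoding the three terms: the cross term is already an inner product, and the two squared-norm terms are inner products of $(\ldots,\|a\|_2^2,1,\ldots)$ with $(\ldots,1,\|b\|_2^2,\ldots)$, exactly as in the transforms $\phi_1,\psi_1$ of Eq.~\eqref{eq:asym_trans_mips}. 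Then I would bootstrap to $z^i$ by taking tensor (Kronecker) powers: if $z = \langle u, v\rangle$ then $z^i = \langle u^{\otimes i}, v^{\otimes i}\rangle$, which costs a feature blow-up of $(k_2+1)^{2i}$ coordinates, dominated by $(D+1)$ copies of dimension $\le k_2^2(D+1)$-type bounds; summing over $i=0,\dots,D$ with the coefficients $a_i$ absorbed into one side yields a map into dimension $O(k_2(D+1)^2)$ matching the claimed $k_1 + k_2(D+1)^2$ up to the bookkeeping of how one counts the tensor dimensions.

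Concretely, the steps in order are: (1) define $\zeta(a,b) := \|a-b\|_2^2$ and exhibit explicit maps $\alpha,\beta:\R^{k_2}\to\R^{m}$ with $\langle \alpha(a),\beta(b)\rangle = \zeta(a,b)$, via the norm-cross-term-norm decomposition; (2) for each power $i$, observe $\zeta(a,b)^i = \langle \alpha(a)^{\otimes i}, \beta(b)^{\otimes i}\rangle$ and hence $p(\zeta(a,b)) = \sum_{i=0}^D a_i \zeta(a,b)^i = \langle A(a), B(b)\rangle$ where $A(a) := (\sqrt{|a_0|}, a_1\alpha(a), \dots, a_D \alpha(a)^{\otimes D})$ with signs pushed onto $B$ as needed — being slightly careful about negative coefficients $a_i$, which I would handle by splitting $a_i = \mathrm{sign}(a_i)|a_i|$ and placing $|a_i|^{1/2}$ on each side with the sign on one side; (3) set $\phi(x) = [\phi_1(x)^\top,\ A(\phi_2(x))^\top]^\top$ and $\psi(y) = [\psi_1(y)^\top,\ B(\psi_2(y))^\top]^\top$; (4) conclude $\langle \phi(x),\psi(y)\rangle = \langle \phi_1(x),\psi_1(y)\rangle + \langle A(\phi_2(x)), B(\psi_2(y))\rangle = \langle \phi_1(x),\psi_1(y)\rangle + p(\|\phi_2(x)-\psi_2(y)\|_2^2)$, and (5) tally the output dimension.

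The main obstacle — really the only non-routine point — is keeping the dimension count honest while threading the sign issue for polynomials with negative coefficients through the tensor-power construction, since a naive real square-root split does not work when a coefficient is negative and one insists both feature maps be real; the clean fix is to keep all magnitudes symmetric across the two sides and carry a single global sign per block on the $\psi$ side, which is legitimate because the proposition only asserts an identity of scalars, not any positivity of the resulting inner products. Everything else is bilinear algebra: the direct-sum-adds-inner-products observation and the tensor-power identity $\langle u,v\rangle^i = \langle u^{\otimes i}, v^{\otimes i}\rangle$ do all the work, and no convexity, smoothness, or data-structure input is needed here.
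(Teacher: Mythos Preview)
Your tensor-power construction is sound as an \emph{existence} argument, but it cannot deliver the stated output dimension $k_1+k_2(D+1)^2$, and this is a genuine gap, not bookkeeping. If $\alpha(a)\in\R^{m}$ encodes $\|a\|_2^2-2\langle a,b\rangle+\|b\|_2^2$ via an inner product with $\beta(b)$, then $m\ge k_2+2$, and $\alpha(a)^{\otimes i}\in\R^{m^i}$; summing over $i=0,\dots,D$ gives total dimension $\Theta((k_2+2)^D)$, exponential in $D$. You even write ``$(k_2+1)^{2i}$'' at one point and then assert it is ``dominated by $(D+1)$ copies of dimension $\le k_2^2(D+1)$-type bounds''; that step is simply false. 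So your Steps~(1)--(4) are fine and do prove \emph{an} inner-product representation of the left-hand side, but Step~(5) does not go through, and no rearrangement of the Kronecker construction will compress it to $k_2(D+1)^2$ coordinates.

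For comparison, the paper does \emph{not} use tensor powers. It instead asserts at the outset that $\|\phi_2(x)-\psi_2(y)\|_2^{2i}=\sum_{j=1}^{k_2}(\phi_2(x)_j-\psi_2(y)_j)^{2i}$ and then applies the binomial theorem coordinate by coordinate, producing vectors $u^i,v^i\in\R^{k_2(2i+1)}$ with $\langle u^i,v^i\rangle=\sum_j(\phi_2(x)_j-\psi_2(y)_j)^{2i}$; concatenating over $i$ gives $\sum_{i=0}^{D}k_2(2i+1)=k_2(D+1)^2$, which is exactly where the dimension in the statement comes from. You should notice, though, that the opening identity is only valid when $i=1$ or $k_2=1$: in general $(\sum_j v_j^2)^i\neq\sum_j v_j^{2i}$. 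So the paper's argument achieves the advertised dimension by expanding a different quantity than $\|\phi_2(x)-\psi_2(y)\|_2^{2i}$; your approach is the mathematically honest one for the stated left-hand side, at the price of an exponential feature dimension that the proposition as written does not allow.
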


\begin{proof}
Because $\phi_2(x), \psi_2(y)\in \R^{k_2}$, $\| \phi_2(x)- \psi_2(y)\|_2^{2i}=\sum_{j=1}^{k_2} (\phi_2(x)_j- \psi_2(y)_j)^{2i}$. This is the sum over dimensions. Then, we have
\begin{align*}
    p(\| \phi_2(x)- \psi_2(y)\|_2^2)=&~\sum_{i=0}^D a_{i} \| \phi_2(x)- \psi_2(y)\|_2^{2i} \notag\\
    =&~\sum_{i=0}^D a_{i} \sum_{j=1}^{k_2} (\phi_2(x)_j- \psi_2(y)_j)^{2i}
\end{align*}
where the first follows from definition of polynomial $p$, and the second step follows from definition of $\ell_2$ norm. 

Here $\phi_2(x)_j$ means the $j$th entry of $\phi_2(x)$. Using the binomial theorem, we decompose $(\phi_2(x)_j- \psi_2(y)_j)^{2i}$ as:
\begin{align*}
    & ~(\phi_2(x)_j- \psi_2(y)_j)^{2i} \notag \\
    =
    &~ \sum_{l=0}^{2i} \tbinom{2i}{l} \phi_2(x)_j^{2i-l} \psi_2(y)_j^l\notag\\
    =&~\langle  \underbrace{[\phi_2(x)_j^{2i},\cdots,\phi_2(x)_j^{2i-l},\cdots,\phi_2(x)_j,1 ]}_{u_j} ,
    \underbrace{[1,\psi_2(y)_j,\cdots,\psi_2(y)_j^l,\cdots ,\psi_2(y)_j^{2i}]}_{v_j}
    \rangle 
\end{align*}

Then, we generate two vectors $u^i \in \R^{k_2(2i+1)}$ and $v^i \in \R^{k_2(2i+1)}$
\begin{align*}
    u^i = & \begin{bmatrix} u_1 & \cdots & u_j & \cdots & u_{k_2}  \end{bmatrix} \quad & u_j =& \begin{bmatrix} \phi_2(x)_j^{2i} & \cdots & \phi_2(x)_j^{2i-l} & \cdots & \phi_2(x)_j & 1 \end{bmatrix}^\top\\
    v^i = &  \begin{bmatrix} v_1 & \cdots & v_j & \cdots & v_{k_2} \end{bmatrix} \quad & v_j=& \begin{bmatrix} 1 & \psi_2(y)_j & \cdots & \psi_2(y)_j^l & \cdots & \psi_2(y)_j^{2i} \end{bmatrix}^\top
\end{align*}

Thus, $\sum_{j=1}^{k_2} (\phi_2(x)_j- \psi_2(y)_j)^{2i}$ can be rewrite with inner product by concatenating all the $u_j$ together and then concatenating all the $v_j$.
\begin{align*}
    \sum_{j=1}^{k_2} (\phi_2(x)_j- \psi_2(y)_j)^{2i}
    = \langle u^i, v^i \rangle.
\end{align*}

We make vectors $b \in \R^{k_2(D+1)^2}$ and $c \in \R^{k_2(D+1)^2}$ such as
\begin{align*}
    b=&~[u^0\cdots,u^i,\cdots,u_{D} ] \\
    c=&~[a_0v^0,\cdots,a_iv^i,\cdots,a_Dv^D ]
\end{align*}

So that
\begin{align*}
    \sum_{i=0}^D a_{i} \sum_{j=1}^{k_2} (\phi_2(x)_j- \psi_2(y)_j)^{2i} =\sum_{i=0}^D a_{i} \langle u^i,v^i \rangle=\sum_{i=0}^D  \langle u^i,a_{i}v^i \rangle= \langle b,c \rangle
\end{align*}

Finally, we have
\begin{align*}
     \langle \phi_1(x), \psi_1(y) \rangle +p(\| \phi_2(x)- \psi_2(y)\|_2^2)  
    =&~ \langle \phi_1(x), \psi_1(y) \rangle+\langle b, c \rangle\notag \\
    =&~\langle[\phi_1(x),b],[\psi_1(y),c]\rangle\notag\\
    =&~\langle\phi(x),\psi(y)\rangle
\end{align*}

Total projected dimension:
\begin{align*}
    k_1+\sum_{i=0}^Dk_2(2i+1)=&~k_1+(D+1)k_2+2k_2\sum_{i=1}^{D}i\notag\\
    =&~k_1+(D+1)k_2+2k_2\cdot \frac{D(D+1)}{2}\notag\\
    =&~ k_1+ k_2(D+1)^2
\end{align*}
\end{proof}

Therefore, any binary function with format 
$\langle \phi_1(x), \psi_1(y) \rangle + p(\| \phi_2(x)- \psi_2(y)\|_2^2)$ defined in Proposition~\ref{prop:poly_ip_transform} can be transformed as a inner product. 

Next, we show that a modified version of line~\ref{line:argmin} in Algorithm~\ref{alg:frank_wolfe_formal} can be formulated as a projected $\maxip$ problem.
\begin{corollary}[Equivalence between projected $\maxip$ and $\minip$]\label{coro:reg_ip_transform}
Let $g$ be a differential function defined on convex set $\mathcal{K} \subset \R^d$. Given $\eta\in (0,1)$ and $x,y\in \mathcal{K}$, we define  $\phi,\psi: \R^d \rightarrow \R^{d+3}$ as follows:
\begin{align*}
    \phi(x):= \begin{bmatrix}\frac{\phi_0(x)^\top}{D_x}  & 0 & \sqrt{1-\frac{\|\phi_0(x)\|_2^2}{D_x^2}} \end{bmatrix}^\top~~~\psi(y) := \begin{bmatrix} \frac{\psi_0(y)^\top}{D_y} & \sqrt{1-\frac{\|\psi_0(y)\|_2^2}{D_y^2}}  & 0 \end{bmatrix}^\top
\end{align*}
where 
\begin{align*}
\phi_0 (x) := &  [\nabla g(x) ^\top, x^\top\nabla g(x)]^\top ~~~
\psi_0(y) := [ -y^\top,1]^\top,
\end{align*}
$D_x$ is the maximum diameter of $\phi_0 (x)$ and $D_y$ is the maximum diameter of $\psi_0 (y)$.

Then, for all $x,y\in \R^d$, we transform them into unit vector $\phi(x)$ and $\psi(y)$ on ${\cal S}^{d+2}$. Moreover, we have
\begin{align*}
     \langle y-x,\nabla g(x) \rangle =-D_xD_y\langle  \phi(x) , \psi(y) \rangle 
\end{align*}

Further, the $(\phi, \psi)$-$\maxip$ (Definition~\ref{def:projected_maxip}) is equivalent to the $(\nabla g,0)$-$\minip$ (Definition~\ref{def:reg_minip}).
\begin{align*}
    \arg \max_{y\in\mathcal{K}} \langle  \phi(x) , \psi(y) \rangle =\arg \min_{y\in\mathcal{K}}  \langle y-x,\nabla g(x) \rangle 
\end{align*}

In addition, let ${\cal T}_{\psi}$ denote the time of evaluating at any point $y \in \R^d$ for function $\psi$, then we have ${\cal T}_{\psi} = O(1)$.

Let ${\cal T}_{\phi}$ denote the time of evaluating at any point $x \in \R^d$ for function $\phi$, then we have ${\cal T}_{\phi} = {\cal T}_{\nabla g} + O(d)$, where the ${\cal T}_{\nabla g}$ denote the time of evaluating function $\nabla g$ at any point $x \in \R^d$.
\end{corollary}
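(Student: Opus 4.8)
The plan is to verify each claim of Corollary~\ref{coro:reg_ip_transform} by direct computation, in the following order. First I would establish the norm claim: that $\phi(x)$ and $\psi(y)$ lie on the sphere ${\cal S}^{d+2}$. This follows immediately from the padding construction. Indeed, $\|\phi(x)\|_2^2 = \|\phi_0(x)/D_x\|_2^2 + 0 + (1 - \|\phi_0(x)\|_2^2/D_x^2) = 1$, and similarly for $\psi(y)$. Here I would note that the definition of $D_x$ as the maximum diameter of $\phi_0(\cdot)$ — more precisely, a bound making $\|\phi_0(x)\|_2 \le D_x$ — is exactly what is needed so that the square-root term is well-defined (the radicand is nonnegative); the same for $D_y$ and $\psi_0$.

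Next I would compute the inner product $\langle \phi(x), \psi(y)\rangle$. Because the extra coordinates are padded so that the $\sqrt{\cdot}$ term of $\phi$ multiplies the $0$ of $\psi$ and vice versa, the cross terms vanish and we get $\langle \phi(x),\psi(y)\rangle = \frac{1}{D_x D_y}\langle \phi_0(x),\psi_0(y)\rangle$. Then I would expand $\langle \phi_0(x),\psi_0(y)\rangle = \langle [\nabla g(x)^\top, x^\top \nabla g(x)]^\top, [-y^\top, 1]^\top\rangle = -\langle \nabla g(x), y\rangle + x^\top\nabla g(x) = -\langle y - x, \nabla g(x)\rangle$, which is exactly Eq.~\eqref{eq:asym_trans_direct_res} in the body of the paper restated with the normalization constants. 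Combining the two displays gives $\langle y-x,\nabla g(x)\rangle = -D_x D_y\langle \phi(x),\psi(y)\rangle$, as claimed.

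The $\arg\max$/$\arg\min$ equivalence is then an immediate consequence: since $D_x D_y > 0$ is a positive constant not depending on $y$, maximizing $\langle \phi(x),\psi(y)\rangle$ over $y \in \mathcal{K}$ is the same as minimizing $-D_x D_y\langle \phi(x),\psi(y)\rangle = \langle y-x,\nabla g(x)\rangle$ over $y \in \mathcal{K}$, which is the $(\nabla g, 0)$-$\minip$ of Definition~\ref{def:reg_minip} with $\alpha = 0$. Finally, for the running-time claims: $\psi_0(y) = [-y^\top, 1]^\top$ is a copy-and-append, costing $O(d)$, and $\psi(y)$ appends one more coordinate computed from $\|\psi_0(y)\|_2$, still $O(d)$ — but the corollary asserts ${\cal T}_\psi = O(1)$, so here I would point out that this is under the convention (used elsewhere in the paper) that the transformation coordinates are produced on the fly / the write cost is not counted, or simply absorb the $O(d)$; I would match whatever convention Corollary~\ref{coro:proj_maxip_lsh} uses so the later running-time bounds compose correctly. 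For $\phi$, evaluating $\phi_0(x)$ requires one gradient evaluation ${\cal T}_{\nabla g}$ plus an inner product $x^\top \nabla g(x)$ and the norm/append steps, all $O(d)$, giving ${\cal T}_\phi = {\cal T}_{\nabla g} + O(d)$.

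I do not expect a genuine obstacle here — the statement is essentially a bookkeeping lemma packaging the transformations from Section~\ref{sec:transform_intro} with explicit normalization constants. The one point requiring care is the consistency of the $O(1)$ versus $O(d)$ accounting for ${\cal T}_\psi$ (and making sure the radicands $1 - \|\phi_0(x)\|_2^2/D_x^2$ and $1-\|\psi_0(y)\|_2^2/D_y^2$ are nonnegative, i.e. that $D_x, D_y$ are chosen as valid norm bounds over the relevant domain $\mathcal{K}$, not merely "diameters"); beyond that it is direct algebra.
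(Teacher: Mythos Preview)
Your proposal is correct and follows essentially the same route as the paper: verify unit norms by the padding construction, compute $\langle\phi(x),\psi(y)\rangle=\frac{1}{D_xD_y}\langle\phi_0(x),\psi_0(y)\rangle=-\frac{1}{D_xD_y}\langle y-x,\nabla g(x)\rangle$, and read off the $\arg\max/\arg\min$ equivalence. You are in fact more careful than the paper's proof, which does not address the running-time claims or the well-definedness of the radicands; your observation about the ${\cal T}_\psi=O(1)$ versus $O(d)$ discrepancy is a genuine one (the paper simply asserts $O(1)$ without justification).
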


\begin{proof}
We start with showing that $\|\phi(x)\|_2=\|\psi(y)\|_2=1$. Next, we show that
\begin{align}
    \langle  \phi(x) , \psi(y) \rangle 
    =&~ \frac{\langle  \phi_0(x) , \psi_0(y) \rangle }{D_xD_y} \notag\\
    =&~ \frac{\langle -y,\nabla g(x) \rangle+\langle x,\nabla g(x) \rangle}{D_xD_y} \notag\\
    =&~ -\frac{\langle y-x,\nabla g(x) \rangle}{D_xD_y} \notag
\end{align}
where the first step follows from definition of $\phi$ and $\psi$, the second step follows from definition of $\phi_0$ and $\psi_0$,  the last step is a reorganization.

Based on the results above,
\begin{align}
    \arg \max_{y\in\mathcal{K}} \langle  \phi(x) , \psi(y) \rangle =\arg \min_{y\in\mathcal{K}}  \langle y-x,\nabla g(x) \rangle \notag
\end{align}
\end{proof}

Using Corollary~\ref{coro:reg_ip_transform}, the direction search in Frank-Wolfe algorithm iteration is equivalent to a $(\phi, \psi)$-$\maxip$ problem. In this way, we propose Algorithm~\ref{alg:frank_wolfe_lsh_formal}, an Frank-Wolfe algorithm with sublinear cost per iteration using {\lsh}.

\begin{algorithm}[H]\caption{Sublinear Frank-Wolfe for Problem~\ref{problem:frankwolfe}}\label{alg:frank_wolfe_lsh_formal}
\begin{algorithmic}[1]
\State {\bf data structure} \textsc{LSH} \Comment{Corollary~\ref{coro:proj_maxip_lsh}}
    \State \hspace{4mm} \textsc{Init}($S \subset \R^d$, $n \in \mathbb{N}$, $d \in \mathbb{N}$, $c\in (0,1)$) \State \Comment{$|S|=n$, $c\in(0,1)$ is {\lsh} parameter, and $d$ is the dimension of data}
    \State \hspace{4mm} \textsc{Query}($x \in \R^d$, $\tau\in(0,1)$) \Comment{$\tau\in(0,1)$ is {\lsh} parameter}
\State {\bf end data structure}
\State
\Procedure{SublinearFrankWolfe}{$S \subset \R^d$, $n \in \mathbb{N}$, $d\in \mathbb{N} $, $c\in (0,1)$, $\tau\in(0,1)$} \Comment{Theorem~\ref{thm:frank_wolfe_lsh_formal}}
\State Construct $\phi, \psi : \R^d \rightarrow \R^{d+1}$ as Corollary~\ref{coro:reg_ip_transform}
\State {\bf static} \textsc{LSH} \textsc{lsh}
\State \textsc{lsh}.\textsc{Init}($\psi(S),n,d+3,c$)
\State Start with $w^0 \in {\cal B}$. \Comment{${\cal B}={\cal B}(S) $(see Definition~\ref{def:cvx_hull}).}
\State $T\leftarrow O(\frac{\beta D_{\max}^2}{c^2\epsilon})$
\State $\eta \leftarrow \frac{2}{c(t+2)}$, $\forall t \in [T]$
 
\For{ $t=1 \to T-1$}
    \State {\color{blue}/* Query with $w^t$ and retrieve its $(c,\phi,\psi,\tau)$-$\maxip$ $s^t\in S$ from {\lsh} data structure */}
    \State $s^t \leftarrow \textsc{lsh}.\textsc{Query}(\phi(w^t),\tau)$ 
    \State {\color{blue}/* Update $w^t$ in the chosen direction*/}
    \State $w^{t+1} \leftarrow (1-\eta_t) \cdot w^{t} + \eta_t \cdot s^t$
\EndFor
\State \Return $w^{T}$
\EndProcedure
\end{algorithmic}
\end{algorithm}

\section{Convergence Analysis}\label{sec:converge}
In this Section~\ref{sec:converge}, analyze the convergence of our Sublinear Frank-Wolfe algorithm in Algorithm~\ref{alg:frank_wolfe_lsh_formal} when $g$  is  convex (see Definition~\ref{def:convex}) and $\beta$-smooth (see Definition~\ref{def:smooth}). Moreover, we compare  our sublinear Frank-Wolfe algorithm  with Frank-Wolfe algorithm in Algorithm~\ref{alg:frank_wolfe_formal} in terms of number of iterations and cost per iteration.

\subsection{Summary}
We first show the comparsion results in Table~\ref{tab:slow_compare}. We list the statement, preprocessing time, number of iterations and cost per iteration for our algorithm and original Frank-Wolfe algorithm to converge.  As shown in the table, with $O(d n^{1+o(1)}\cdot \kappa )$ preprocessing time,  Algorithm~\ref{alg:frank_wolfe_lsh_formal} achieves $O(dn^{\rho}\cdot \kappa +{\cal T}_{g})$ cost per iteration with $\frac{1}{c^2}$ more iterations.
\begin{table}[h]
    \centering
    \begin{tabular}{|l|l|l|l|l|l|} \hline
        {\bf Algorithm}&{\bf Statement} & {\bf Preprocessing} & {\bf \#iters} & {\bf cost per iter}  \\ \hline
        Algorithm~\ref{alg:frank_wolfe_formal} & \cite{j13}& 0 & $O(\beta D_{\max}^2/\epsilon)$ & $O(dn+{\cal T}_{g})$  \\ \hline
        Algorithm~\ref{alg:frank_wolfe_lsh_formal} &Theorem~\ref{thm:frank_wolfe_lsh_formal} & $O( dn^{1+o(1)}\cdot \kappa)$ & $O(c^{-2}\beta D_{\max}^2/\epsilon)$ & $O(dn^{\rho}\cdot \kappa +{\cal T}_{g})$ \\ \hline
    \end{tabular}
    \caption{Comparison between original Frank-Wolfe algorithm and our sublinear Frank-Wolfe algorithm. Here ${\cal T}_{g}$ denotes the time for computing gradient of $g$, $c\in (0,1)$ is the approximation factor of {\lsh}. We let $\kappa :=  \Theta ( \log(T/\delta) )$ where $T$ is the number of iterations and $\delta$ is the failure probability. $\rho\in(0,1)$ is a fixed parameter determined by {\lsh}.}
    \label{tab:slow_compare}
\end{table}

\subsection{Convergence of Sublinear Frank-Wolfe Algorithm}\label{sec:sublinear_fw}

The goal of this section is to prove Theorem~\ref{thm:frank_wolfe_lsh_formal}.
\begin{theorem}[Convergence result of Sublinear Frank-Wolfe, a formal version of Theorem~\ref{thm:frank_wolfe_lsh_informal}]\label{thm:frank_wolfe_lsh_formal}
Let $g : \R^d \rightarrow \R$ denote a convex (see Definition~\ref{def:convex}) and $\beta$-smooth function (see Definition~\ref{def:smooth}). Let the complexity of calculating $\nabla g(x)$ to be ${\cal T}_{g}$.  Let $\phi, \psi: \R^d \rightarrow \R^k$ denote two transforms in Corollary~\ref{coro:reg_ip_transform}.  Let  $S \subset \R^d$ denote a set of points with $|S| = n$, and ${\cal B} \subset \R^d$ is the convex hull of $S$ (see Definition~\ref{def:cvx_hull}). For any parameters $\epsilon, \delta$, there is an iterative algorithm with that takes $O(dn^{1+o(1)}\cdot\kappa)$  preprocessing time and $O((n^{1+o(1)}+dn)\cdot\kappa)$ space, takes $T = O(\frac{\beta D_{\max}^2}{\epsilon})$ iterations and $O (dn^{\rho}\cdot\kappa+{\cal T}_{g})$ cost per iteration, starts from a random $w^0$ from ${\cal B}$ as initialization point, updates the $w$ in each iteration as follows:
\begin{align*}
    s^{t} \leftarrow & ~  (c,\phi,\psi,\tau)\text{-}\maxip\ \text{of } w^t\ \text{with respect to}\ S \\
    w^{t+1} \leftarrow & ~ w^t +  \eta \cdot (s^t-w^t)
\end{align*}
and outputs $w^T \in \R^d$ from ${\cal B}$ such that
\begin{align*}
   g(w^T) - \min_{w\in \cal{B}} g(w) \leq \epsilon, 
\end{align*}
holds with probability at least $1-\delta$. Here $\kappa:= \Theta ( \log(T/\delta) )$ and $\rho:=  \frac{2(1-\tau)^2}{(1-c\tau)^2}-\frac{(1-\tau)^4}{(1-c\tau)^4}+o(1)$.
\end{theorem}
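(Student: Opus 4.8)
The plan is to combine three ingredients: (i) a reduction of the Frank-Wolfe direction search to a projected approximate $\maxip$ query on the unit sphere, (ii) a potential-function convergence argument that tolerates the multiplicative error $c$ introduced by the data structure, and (iii) a quantization argument that tames the dependence of successive queries on the data structure's internal randomness.

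First I would invoke Corollary~\ref{coro:reg_ip_transform} with the transforms $\phi,\psi:\R^d\to\R^{d+3}$ so that line~\ref{line:argmin} of Algorithm~\ref{alg:frank_wolfe_formal} becomes, up to the positive scalar $D_xD_y$, the problem $\arg\max_{s\in S}\langle\phi(w^t),\psi(s)\rangle$; since $\psi(S)\subset\mathbb{S}^{d+2}$ and $\phi(w^t)\in\mathbb{S}^{d+2}$, this is a genuine $\maxip$ instance on the unit sphere. By Lemma~\ref{lemma:max_ip_condition} the optimal value $\min_{s\in S}\langle s-w^t,\nabla g(w^t)\rangle\le 0$, hence the corresponding $\maxip$ value is non-negative and the $(c,\tau)$-$\maxip$ guarantee of Corollary~\ref{coro:maxip_lsh_formal} (through Corollary~\ref{coro:proj_maxip_lsh}) applies, giving query time $O(dn^\rho)$, preprocessing $O(dn^{1+o(1)})$, and the stated $\rho=\frac{2(1-\tau)^2}{(1-c\tau)^2}-\frac{(1-\tau)^4}{(1-c\tau)^4}+o(1)$. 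Combining the approximation guarantee with Corollary~\ref{coro:hull_maxip} (to pass from the discrete minimizer to the true minimizer $w^*\in\mathcal{B}$), the retrieved $s^t$ satisfies $\langle s^t-w^t,\nabla g(w^t)\rangle\le c\langle w^*-w^t,\nabla g(w^t)\rangle$.

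Next comes the convergence bound. Setting $h_t=g(w^t)-g(w^*)$, I would chain $\beta$-smoothness, the approximate direction inequality above, the diameter bound $\|s^t-w^t\|_2\le D_{\max}$, and convexity of $g$ to obtain $h_{t+1}\le(1-c\eta_t)h_t+\tfrac{\beta D_{\max}^2}{2}\eta_t^2$, exactly as in the proof sketch of Section~\ref{sec:prof_overview}. Taking $\eta_t=\frac{2}{c(t+2)}$ and weights $A_t=\frac{t(t+1)}{2}$, a one-line computation gives $A_{t+1}h_{t+1}-A_th_t<c^{-2}\beta D_{\max}^2$; induction on $t$ yields $A_th_t<c^{-2}t\beta D_{\max}^2$, i.e.\ $h_t<\frac{2\beta D_{\max}^2}{c^2(t+1)}$. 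For fixed $c$ this means $T=O(\beta D_{\max}^2/\epsilon)$ iterations suffice for $h_T\le\epsilon$, and the per-iteration cost is $\mathcal{T}_g$ for the gradient, $O(d)$ for applying $\phi$, and $O(dn^\rho)$ for the query.

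The main obstacle, and the source of the factor $\kappa=\Theta(\log(T/\delta))$, is that the queries $\phi(w^1),\phi(w^2),\dots$ are \emph{not} independent of the randomness used to build the $\maxip$ data structure, since $w^{t+1}$ is computed from the returned $s^t$; a naive union bound over the $T$ iterations fails. I would handle this with the lattice/net argument of Section~\ref{sec:prof_overview}: quantize the bounded query space by a lattice of cell-diameter $\lambda$ and query at each step with the center of the cell containing $\phi(w^t)$. The set of cell centers is fixed before the data structure's randomness is drawn, so the failure events become independent of the queries; boosting each $\maxip$ instance to success probability $1-\delta/T$ costs the $\log(T/\delta)$ repetition factor in time and space (per the footnote of Corollary~\ref{coro:maxip_lsh_formal}), and then a union bound gives total failure probability $\le\delta$, which is $1/\poly(n)$ for $\delta=1/\poly(n)$. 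Rounding to a cell center perturbs the retrieved inner product by at most an additive $\lambda$; choosing $\lambda$ polynomially small makes this term lower-order so that it can be absorbed into $c$ (degrading it only to $c-o(1)$) without changing the asymptotic iteration count. I expect verifying that this additive error is genuinely harmless to the potential-function recursion to be the most delicate bookkeeping in the argument.
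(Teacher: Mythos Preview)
Your convergence paragraph matches the paper's Case~2 argument almost line for line, so that part is fine. Two points of divergence are worth flagging.

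\textbf{Missing case.} You write that since Lemma~\ref{lemma:max_ip_condition} gives a non-negative $\maxip$ value, ``the $(c,\tau)$-$\maxip$ guarantee applies.'' Non-negativity is not enough: Definition~\ref{def:approximate_maxip} promises a $c$-approximate answer only when $\max_{s}\langle\phi(w^t),\psi(s)\rangle\geq\tau$. The paper splits on this promise as Case~1 vs.\ Case~2. In Case~1 (promise violated) it shows $\tau\geq\max_{s}\langle\phi(w^t),\psi(s)\rangle\geq (g(w^t)-g(w^*))/(D_xD_y)$, so choosing $\tau$ of order $\epsilon/(D_xD_y)$ means we have already converged. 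Your sketch skips this, so as written the approximation inequality $\langle s^t-w^t,\nabla g(w^t)\rangle\le c\langle w^*-w^t,\nabla g(w^t)\rangle$ is unjustified at iterations where the promise fails.

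\textbf{Probability argument.} Here you diverge from the paper. The paper's proof of Theorem~\ref{thm:frank_wolfe_lsh_formal} does \emph{not} quantize: it simply builds $\kappa=\Theta(\log(T/\delta))$ independent {\lsh} copies, gets per-query success $1-\delta/T$, and union-bounds over the $T$ iterations. Your adaptivity concern is legitimate, and the paper does address it---but separately, in Section~\ref{sec:adaptive} (Theorem~\ref{thm:frank_wolfe_lsh_formal_adaptive}), with the lattice quantization you describe and with $\kappa=d\log(ndD_X/(\lambda\delta))$. The point where your paragraph goes wrong is claiming quantization still yields $\kappa=\Theta(\log(T/\delta))$: once you quantize, you must union-bound over \emph{all} lattice centers (since which ones are visited is itself adaptive), not just $T$ of them, and that is what forces the larger $\kappa$ of Section~\ref{sec:adaptive}. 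Also, the additive $\lambda$ from rounding is not absorbed into $c$; in the paper it enters the recursion as an extra $+\eta_t\lambda$ term, and one sets $\lambda=\Theta(\epsilon/c^{2})$ to keep the iteration count unchanged. So for Theorem~\ref{thm:frank_wolfe_lsh_formal} as stated, drop the quantization and just do the repetition-plus-union-bound; save your quantization argument for the adaptive variant.
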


\begin{proof}

{\bf Convergence.}

Let $t$ denote some fixed iteration. We consider two cases: 
\begin{itemize}
    \item {\bf Case 1.} $\tau > \max_{s\in S }\langle \psi(s), \phi(w^t) \rangle$; 
    \item {\bf Case 2.} $\tau \leq \max_{s\in S }\langle \psi(s), \phi(w^t) \rangle$.
\end{itemize}

\paragraph{Case 1.}

In this case, we can show that 
\begin{align*}
   \tau \geq & ~ \max_{s\in S }\langle \psi(s), \phi(w^t)  \rangle \\
   \geq & ~ \frac{\langle \psi(w^*), \phi(w^t)   \rangle}{D_xD_y} \\
   = & ~ \frac{\langle w^t - w^*, \nabla g(w^t) \rangle}{D_xD_y} \\
   \geq & ~ \frac{g(w^t) - g(w^*)}{D_xD_y},
\end{align*}
where the first step follows from Corollary~\ref{coro:reg_ip_transform}, the second step follows from the Corollary~\ref{coro:hull_maxip}, the third step is a reorganization, the last step follows the convexity of $g$ (see Definition~\ref{def:convex}). 

Thus, as long as $\tau \geq D_xD_y\epsilon$, then we have
\begin{align*}
    g(w^t) - g(w^*) \leq \epsilon.
\end{align*}

This means we already converges to the $\epsilon$-optimal solution.

\paragraph{Case 2.}

We start with the upper bounding $\langle s^t-w^t,\nabla g(w^t) \rangle$ as

\begin{align}\label{eq:c_approx}
    \langle s^t-w^t,\nabla g(w^t) \rangle 
    =&~ -D_xD_y\langle \psi(s^t), \phi(w^t)   \rangle\notag\\
    \leq&~ -c\cdot D_xD_y\max_{s\in S }\langle \psi(s), \phi(w^t)   \rangle  \notag\\
    \leq&~ -c \cdot D_xD_y\langle \psi(w^*), \phi(w^t)   \rangle \notag \\
    =&~c \langle w^*-w^t,\nabla g(w^t) \rangle
\end{align}
where the first step follows from Corollary~\ref{coro:reg_ip_transform}, the second step follows from  Corollary~\ref{coro:proj_maxip_lsh} and $\maxip$ condition in Lemma~\ref{lemma:max_ip_condition}, the third step follows from Corollary~\ref{coro:hull_maxip}.

For convenient of the proof, for each $t$, we define $h_t$ as follows:
\begin{align}\label{eq:def_h_t_lsh}
    h_t = g(w^t) - g(w^*).
\end{align}

Next, we upper bound $h_{t+1}$ as
\begin{align}\label{eq:bound_h_t}
    h_{t+1}&=g(w^{t+1})-g(w^*)\notag\\
    &=g((1-\eta_t)w^{t}+\eta_t s^t)-g(w^*)\notag\\
    &\leq g(w^t)+\eta_t\langle s^t-w^t,\nabla g(w^t) \rangle+\frac{\beta}{2} \eta_t^2\|s^t-w^t\|_2^2 -g(w^*)\notag\\
    &\leq g(w^t)+\eta_t\langle s^t-w^t,\nabla g(w^t) \rangle+\frac{\beta D_{\max}^2}{2}\eta_t^2 -g(w^*)\notag\\
    &\leq g(w^t)+c\eta_t\langle w^*-w^t,\nabla g(w^t) \rangle+\frac{\beta D_{\max}^2}{2}\eta_t^2 -g(w^*) \notag\\
     &= (1-\eta_t)g(w^t)+c\eta_t\left(g(w^t)+\langle w^*-w^t,\nabla g(w^t) \rangle\right)+\frac{\beta D_{\max}^2}{2}\eta_t^2 -g(w^*) \notag\\
     &\leq (1-\eta_t)g(w^t)+c\eta_t g(w^*)+\frac{\beta D_{\max}^2}{2}\eta_t^2 -g(w^*) \notag\\
     &\leq (1-c\eta_t)g(w^t)-(1-c\eta_t) g(w^*)+\frac{\beta D_{\max}^2}{2}\eta_t^2 \notag\\
     &\leq (1-c\eta_t)h_{t}+\frac{\beta D_{\max}^2}{2}\eta_t^2 \notag\\
\end{align}
where the first step follows from definition of $h_{t+1}$ (see Eq.~\eqref{eq:def_h_t_lsh}), the second step follows from the update rule of Frank-Wolfe, the third step follows from the definition of $\beta$-smoothness in Definition~\ref{def:smooth}, the forth step follows from the definition of maximum diameter in Definition~\ref{def:cvx_hull}, the fifth step follows the Eq~\eqref{eq:c_approx}, the sixth step is a reorganization, the seventh step follows from the definition of convexity (see Definition~\ref{def:convex}), the eighth step follows from merging the coefficient of $g(w^*)$, and the last step follows from definition of $h_t$ (see Eq.~\eqref{eq:def_h_t_lsh}).

Let $e_t=A_th_t$, $A_t$ is a parameter and we will decide it later. we have:
\begin{align}
    e_{t+1}-e_t&=A_{t+1}\left( (1-c\eta_t)h_{t}+\frac{\beta D_{\max}^2}{2}\eta_t^2\right) -A_{t} h_{t} \notag \\
    &=\left(A_{t+1}(1-c\eta_t)-A_{t}\right)h_{t}+\sigma+ \frac{\beta D_{\max}^2}{2}A_{t+1}\eta_t^2
\end{align}

Let $A_t=\frac{t(t+1)}{2}$, $c\eta_t=\frac{2}{t+2}$. In this way we rewrite $A_{t+1}(1-\eta_t)-A_{t}$ and $A_{t+1}\frac{\eta_t^2}{2}$ as
\begin{itemize}
    \item $A_{t+1}(1-\eta_t)-A_{t}=0$
    \item $A_{t+1}\frac{\eta_t^2}{2}=\frac{t+1}{(t+2)c^2}<c^{-2}$
\end{itemize}

Next, we upper bound $e_{t+1}-e_t$ as:
\begin{align}\label{eq:original_fw_et_final_lsh}
    e_{t+1}-e_t
    <&~0+c^{-2}\frac{t+1}{t+2}\beta D_{\max}^2\notag\\
    <&~c^{-2}\beta D_{\max}^2
\end{align}
where the first step follows from $A_{t+1}(1-\eta_t)-A_{t}=0$ and $A_{t+1}\frac{\eta_t^2}{2}=\frac{t+1}{(t+2)c^2}$. The second step follows from $\frac{t+1}{t+2}<1$

Based on Eq~\eqref{eq:original_fw_et_final_lsh}, we upper bound $e_t$ using induction and have
\begin{align}
    e_t<c^{-2}t\beta D_{\max}^2
\end{align}

Using the definition of $e_t$, we have
\begin{align}
     h_t=\frac{e_t}{A_t}<\frac{2\beta D_{\max}^2}{c^{2}(t+1)}
\end{align}

To make $h_t\leq \epsilon$, $t$ should be in $O(\frac{\beta D_{\max}^2
}{c^2\epsilon})$. Thus, we complete the proof.

  {\bf Preprocessing time}
  According to Corrollary~\ref{coro:proj_maxip_lsh},  can construct $\kappa=\Theta ( \log(T/\delta) )$ {\lsh} data structures for $(c,\phi,\psi,\tau)$-$\maxip$ with $\phi,\psi$ defined in Corollary~\ref{coro:reg_ip_transform}. As transforming every $s\in S$ into $\psi(s)$ takes $O(dn)$. Therefore, the total the preprocessing time complexity is $O( d n^{1+o(1)}\cdot \kappa)$.
 
 {\bf Cost per iteration}
  Given each $w^t$, compute $\nabla g(w^t)$ takes ${\cal T}_{g}$. Next, it takes $O(d)$ time to generate $\phi(w^t)$ according to Corollary~\ref{coro:reg_ip_transform} based on $g(w^t)$ and $\nabla g(w^t)$. Next, according to Corrollary~\ref{coro:proj_maxip_lsh},  it takes $O(dn^\rho \cdot \kappa)$ to retrieve $s^t$ from $\kappa=\Theta ( \log(T/\delta) )$ {\lsh} data structures. After we select $s^t$, it takes $O(d)$ time to update the $w^{t+1}$. Combining the time for gradient calculation, {\lsh} query and $w^t$ update, the total complexity is $O(dn^{\rho}\cdot \kappa  +{\cal T}_{g})$ with $\rho:=  \frac{2(1-\tau)^2}{(1-c\tau)^2}-\frac{(1-\tau)^4}{(1-c\tau)^4}+o(1)$.
 \end{proof}

\newpage
\section{Herding Algorithm}\label{sec:herding}

\subsection{Problem Formulation}
In this section, we focus on the Herding algorithm a specific example of Problem~\ref{problem:frankwolfe}. We consider a finite set ${\cal X}\subset \R^d$ and a mapping $\Phi: \R^d \rightarrow \R^k$. Given a distribution $p(x)$ over ${\cal X}$, we denote $\mu \in \R^k$ as
\begin{align}\label{eq:herding_target}
    \mu=\E_{x\sim p(x)}[\Phi(x)] 
\end{align}
The goal of Herding algorithm~\cite{cws12} is to find $T$ elements $\{x_1, x_2, \cdots, x_T \}$ $ \subseteq {\cal X}$ such that $\|\mu-\sum_{t=1}^{T} v_t \Phi(x_t)\|_2$ is minimized. Where $v_t$ is a non-negative weight. The algorithm generates samples by the following:
\begin{align}\label{eq:herding_orginal}
    x_{t+1} =\arg\max_{x\in{\cal X}} \langle w_t , \Phi(x) \rangle\notag\\
    w_{t+1}=w_t+\mu-\Phi(x_{t+1}) 
\end{align}

Let ${\cal B}$ denote the convex hull of $X$. \cite{blo12} show that the recursive algorithm in Eq~\eqref{eq:herding_orginal} is equivalent to a Frank-Wolfe algorithm  Problem~\ref{prob:herding_fw}.

\begin{problem}[Herding]\label{prob:herding_fw}
\begin{align*}
     \min_{w\in {\cal B}} \frac{1}{2} \|w-\mu\|_2^2
\end{align*}

We have the following assumptions:
\begin{itemize}
    \item $S=\Phi({\cal X})\subset \R^d$ is a finite feasible set. $|S|=n$.
    \item ${\cal B}={\cal B}(S)\subset \R^{d}$ is the convex hull of the finite set $S\subset \R^d$  defined in Definition~\ref{def:cvx_hull}.
    \item $D_{\max}$ is the maximum diameter of ${\cal B}(S)$ defined in Definition~\ref{def:cvx_hull}
\end{itemize}
\end{problem}

Therefore, a frank-Wolfe algorithm~\cite{blo12} for Herding is proposed as

\begin{algorithm}[H]\caption{Herding Algorithm}\label{alg:herding_formal}
\begin{algorithmic}[1]
\Procedure{Herding}{$S \subset \R^k$}
\State $T\leftarrow O(\frac{ D_{\max}^2}{\epsilon})$, $\forall t \in [T]$
\State $\eta \leftarrow \frac{2}{t+2}$
\State Start with $w^0 \in {\cal B}$. 
\For{ $t=1 \to T-1$}
    \State $s^t \leftarrow \arg\max_{s\in S} \langle w^t-\mu, s \rangle $
    \State $w^{t+1}  \leftarrow (1-\eta)w^{t}+\eta s^t$
\EndFor
\State \Return $w^T$
\EndProcedure
\end{algorithmic}
\end{algorithm}

Algorithm~\ref{alg:herding_formal} takes $O(nd)$ cost per iteration.

To improve the efficiency of Algorithm~\ref{alg:herding_formal}, we propose a Herding algorithm with sublinear cost per iteration using \textit{\lsh}.
\begin{algorithm}[H]\caption{Sublinear Herding Algorithm}\label{alg:herding_lsh_formal}
\begin{algorithmic}[1]
\State {\bf data structure} \textsc{LSH} \Comment{Corollary~\ref{coro:proj_maxip_lsh}}
    \State \hspace{4mm} \textsc{Init}($S \subset \R^d$, $n \in \mathbb{N}$, $d \in \mathbb{N}$, $c\in (0,1)$) \State \Comment{$|S|=n$, $c\in(0,1)$ is {\lsh} parameter, and $d$ is the dimension of data}
    \State \hspace{4mm} \textsc{Query}($x \in \R^d$, $\tau\in(0,1)$) \Comment{$\tau\in(0,1)$ is {\lsh} parameter}
\State {\bf end data structure}
\State
\Procedure{SublinearHerding}{$S \subset \R^d$, $n \in \mathbb{N}$, $d\in \mathbb{N}$,$c\in(0,1)$ ,$\tau\in(0,1)$ } \State\Comment{Theorem~\ref{thm:herding_lsh_formal}}
\State Construct $\phi, \psi : \R^d \rightarrow \R^{d+1}$ as Corollary~\ref{coro:reg_ip_transform}
\State {\bf static} \textsc{LSH} \textsc{lsh}
\State \textsc{lsh}.\textsc{Init}($\psi(S),n,d+3,c$)
\State Start with $w^0 \in {\cal B}$. \Comment{${\cal B}={\cal B}(S) $(see Definition~\ref{def:cvx_hull}).}
\State $T\leftarrow O(\frac{\beta D_{\max}^2}{c^2\epsilon})$, $\forall t \in [T]$
\State $\eta \leftarrow \frac{2}{c(t+2)}$
\For{ $t=1 \to T-1$}
    \State {\color{blue}/* Query with $w^t$ and retrieve its $(c,\phi,\psi)$-$\maxip$ $s^t\in S$ from {\lsh} data structure */}
    \State $s^t \leftarrow \textsc{lsh}.\textsc{Query}(\phi(w^t),\tau)$ 
    \State {\color{blue}/* Update $w^t$ in the chosen direction*/}
    \State $w^{t+1} \leftarrow (1-\eta_t) \cdot w^{t} + \eta_t \cdot s^t$
\EndFor
\State \Return $w^{T}$
\EndProcedure
\end{algorithmic}
\end{algorithm}

\subsection{Convergence Analysis}
The goal of this section is to show the convergence analysis of our Algorithm~\ref{alg:herding_lsh_formal} compare it with Algorithm~\ref{alg:herding_formal} for Herding.

We first show the comparison results in Table~\ref{tab:herd_compare}. In this table, we list the statement, preprocessing time, number of iterations and cost per iteration for our algorithm and original Herding algorithm to converge.
\begin{table}[h]
    \centering
    \begin{tabular}{|l|l|l|l|l|l|} \hline
        {\bf Algorithm}&{\bf Statement} & {\bf Preprocessing} & {\bf \#iters} & {\bf cost per iter}  \\ \hline
        Algorithm~\ref{alg:herding_formal} &\cite{blo12}  & 0 & $O(D_{\max}^2/\epsilon)$ & $O(dn)$  \\ \hline
        Algorithm~\ref{alg:herding_lsh_formal} &Theorem~\ref{thm:herding_lsh_formal} & $O(dn^{1+o(1)}\cdot \kappa)$ & $O(c^{-2}D_{\max}^2/\epsilon)$ & $O(d n^{\rho}\cdot \kappa)$ \\ \hline
    \end{tabular}
    \caption{Comparison between Algorithm~\ref{alg:herding_lsh_formal} and Algorithm~\ref{alg:herding_formal}}
    \label{tab:herd_compare}
\end{table}

Next, we analyze the smoothness of $\frac{1}{2}\|w-\mu\|_2^2$.

\begin{lemma}\label{lemma:1smooth}
We show that $g(w)=\frac{1}{2}\|w^T-\mu\|_2^2$ is a convex  and $1$-smooth function.
\end{lemma}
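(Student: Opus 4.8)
The plan is to verify the two claimed properties of $g(w) = \frac{1}{2}\|w-\mu\|_2^2$ directly from the definitions given in Definition~\ref{def:convex} and Definition~\ref{def:smooth}. First I would compute the gradient: $\nabla g(w) = w - \mu$. With this in hand, both properties reduce to an elementary identity for quadratics.

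For convexity, I would expand $g(x) - g(y) - \langle \nabla g(y), x-y\rangle$ using $\nabla g(y) = y-\mu$. Writing everything out,
\begin{align*}
g(x) - g(y) - \langle \nabla g(y), x - y \rangle
= \tfrac12\|x-\mu\|_2^2 - \tfrac12\|y-\mu\|_2^2 - \langle y-\mu, x-y\rangle.
\end{align*}
Setting $u = x - \mu$ and $v = y - \mu$, the right-hand side becomes $\tfrac12\|u\|_2^2 - \tfrac12\|v\|_2^2 - \langle v, u-v\rangle = \tfrac12\|u\|_2^2 + \tfrac12\|v\|_2^2 - \langle u, v\rangle = \tfrac12\|u-v\|_2^2 \geq 0$, which is exactly the inequality $g(x) \geq g(y) + \langle \nabla g(y), x-y\rangle$ required by Definition~\ref{def:convex}.

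For $1$-smoothness, I would similarly compute $g(y) - g(x) - \langle \nabla g(x), y-x\rangle$ and show it equals $\tfrac12\|y-x\|_2^2$; by the same substitution $u = x-\mu$, $v = y-\mu$ this difference is $\tfrac12\|v-u\|_2^2 = \tfrac12\|y-x\|_2^2$, which meets the bound in Definition~\ref{def:smooth} with $\beta = 1$ (in fact with equality). Since the needed computations are a one-line quadratic expansion after substitution, there is no real obstacle here; the only thing to be careful about is bookkeeping the $\mu$ shift, which is handled cleanly by the change of variables. This lemma then feeds into the reduction of the Herding problem (Problem~\ref{prob:herding_fw}) to the convex $\beta$-smooth Frank-Wolfe setting of Theorem~\ref{thm:frank_wolfe_lsh_formal} with $\beta = 1$.
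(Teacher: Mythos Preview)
Your proposal is correct and essentially matches the paper's proof: both verify the exact identity $g(y) = g(x) + \langle \nabla g(x), y-x\rangle + \tfrac{1}{2}\|y-x\|_2^2$ by a direct quadratic expansion (the paper expands without the $u,v$ substitution, but that is purely cosmetic), and then read off convexity and $1$-smoothness from this equality.
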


\begin{proof}
\begin{align}\label{eq:herd_convex_smooth}
g(x)+\langle \nabla g(x),y-x \rangle+\frac{1}{2}\| y-x \|^2_2
=&~\frac{1}{2}\|x-\mu\|_2^2 +\langle x-\mu, y-x \rangle+\frac{1}{2} \|y-x\|_2^2 \notag\\
=&~\frac{1}{2} (x^\top x-2x^\top \mu+\mu^\top\mu)+ (x^\top y-y^\top\mu\notag\\
=&~\frac{1}{2} y^\top y -y^\top\mu +\frac{1}{2} \mu^\top \mu \notag\\
=&~\frac{1}{2} \|y-\mu\|_2^2\notag\\
=&~ g(y)
\end{align}
where all the steps except the last step are reorganizations. The last step follows $g(y)=\frac{1}{2} \|y-\mu\|_2^2$

Rewrite the Eq~\eqref{eq:herd_convex_smooth} above, we have
\begin{align}\label{eq:herd_strong_convex}
    g(y)
    =&~g(x)+\langle \nabla g(x),y-x \rangle+\frac{1}{2}\| y-x \|^2_2\\
    \geq &~g(x)+\langle \nabla g(x),y-x \rangle
\end{align}
$g(x)=\frac{1}{2} \|x-\mu\|_2^2$ is a  convex function.

Rewrite the Eq~\eqref{eq:herd_convex_smooth} above again, we have
\begin{align}\label{eq:herd_smooth}
    g(y)
    =&~g(x)+\langle \nabla g(x),y-x \rangle+\frac{1}{2}\| y-x \|^2_2\\
    \leq &~g(x)+\langle \nabla g(x),y-x \rangle+\frac{1}{2}\| y-x \|^2_2
\end{align}
$g(x)=\frac{1}{2} \|x-\mu\|_2^2$ is a $1$-smooth convex function.
\end{proof}

Next, we show the convergence results of Algorithm~\ref{alg:herding_lsh_formal}.
\begin{theorem}[Convergence result of Sublinear Herding, a formal version of Theorem~\ref{thm:herding_lsh_informal}]\label{thm:herding_lsh_formal}
For any parameters $\epsilon, \delta$, there is an iterative algorithm (Algorithm~\ref{alg:herding_lsh_formal}) for Problem~\ref{prob:herding_fw} that takes $ O( dn^{1+o(1)}\cdot \kappa)$ time in pre-processing and $O((n^{1+o(1)}+dn)\cdot\kappa)$ space, takes $T = O(\frac{D_{\max}^2}{c^2\epsilon})$ iterations and $O(dn^{\rho}\cdot \kappa)$ cost per iteration, starts from a random $w^0$ from ${\cal B}$ as initialization point, updates the $w$ in each iteration
and outputs $w^T \in \R^d$ from ${\cal B}$ such that
\begin{align*}
   \frac{1}{2} \|w^T-\mu\|_2^2 - \min_{w\in \cal{B}} \frac{1}{2} \|w-\mu\|_2^2 \leq \epsilon, 
\end{align*}
holds with probability at least $1-\delta$. Here $\rho:=  \frac{2(1-\tau)^2}{(1-c\tau)^2}-\frac{(1-\tau)^4}{(1-c\tau)^4}+o(1)$ and $\kappa:= \Theta ( \log(T/\delta) )$.
\end{theorem}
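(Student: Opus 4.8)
The plan is to reduce the statement to the already-proved convergence guarantee for the sublinear Frank-Wolfe algorithm, Theorem~\ref{thm:frank_wolfe_lsh_formal}, specialized to the Herding objective with smoothness parameter $\beta = 1$. \textbf{Step 1 (Recast Herding as a Frank-Wolfe instance).} First I would invoke Lemma~\ref{lemma:1smooth} to record that $g(w) := \frac12\|w-\mu\|_2^2$ is convex and $1$-smooth on the convex hull ${\cal B} = {\cal B}(\Phi({\cal X}))$. Following \cite{blo12}, Problem~\ref{prob:herding_fw} is exactly $\min_{w\in{\cal B}} g(w)$, and the recursion in Algorithm~\ref{alg:herding_lsh_formal} is the recursion of Algorithm~\ref{alg:frank_wolfe_lsh_formal} applied to this $g$: since $\nabla g(w^t) = w^t - \mu$, the direction-search step is precisely the projected $\maxip$ of $w^t$ with respect to $S$ under the transforms $\phi,\psi$ of Corollary~\ref{coro:reg_ip_transform}. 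Because $g$ is convex, Lemma~\ref{lemma:max_ip_condition} gives $\min_{s\in S}\langle\nabla g(w^t), s-w^t\rangle\le 0$, so the inner products handed to the data structure are non-negative and Corollary~\ref{coro:proj_maxip_lsh} applies to retrieve a $(c,\phi,\psi,\tau)$-$\maxip$ solution $s^t$ within the claimed query time and space.

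\textbf{Step 2 (Invoke the Frank-Wolfe theorem and adjust the gradient cost).} With this identification I would apply Theorem~\ref{thm:frank_wolfe_lsh_formal} with $\beta=1$. This immediately yields that after $T = O(D_{\max}^2/(c^2\epsilon))$ iterations with step sizes $\eta_t = \frac{2}{c(t+2)}$ the output $w^T\in{\cal B}$ satisfies $\frac12\|w^T-\mu\|_2^2 - \min_{w\in{\cal B}}\frac12\|w-\mu\|_2^2 \le \epsilon$ with probability at least $1-\delta$, that the preprocessing time is $O(dn^{1+o(1)}\cdot\kappa)$, and that the space is $O((n^{1+o(1)}+dn)\cdot\kappa)$, where $\kappa = \Theta(\log(T/\delta))$ and $\rho = \frac{2(1-\tau)^2}{(1-c\tau)^2} - \frac{(1-\tau)^4}{(1-c\tau)^4} + o(1)$. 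The only place the Herding instance improves on the generic bound is the per-iteration cost: the gradient $\nabla g(w^t) = w^t - \mu$ is formed in $O(d)$ time, so ${\cal T}_g = O(d)$ and the per-iteration cost $O(dn^\rho\cdot\kappa + {\cal T}_g)$ collapses to $O(dn^\rho\cdot\kappa)$, matching the statement and Table~\ref{tab:herd_compare}.

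\textbf{Step 3 (Adaptive queries).} Since the queries $\phi(w^t)$ are correlated across iterations, the per-query failure events are not independent; I would handle this exactly as in Section~\ref{sec:adaptive}, quantizing the query space by a lattice so that each $\phi(w^t)$ is served through its cell center, union-bounding the failure probability over the fixed (data-independent) set of cell centers, and absorbing the additive error $\lambda$ coming from the lattice diameter into the choice of $c$ and $\tau$. This is the source of the $\kappa = \Theta(\log(T/\delta))$ factor appearing in preprocessing, space, and per-iteration cost.

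\textbf{Main obstacle.} There is no genuinely new mathematical difficulty here; all the heavy lifting sits in Theorem~\ref{thm:frank_wolfe_lsh_formal}. The part that needs care is the bookkeeping of the reduction: checking that Algorithm~\ref{alg:herding_lsh_formal} is literally Algorithm~\ref{alg:frank_wolfe_lsh_formal} run on $g=\frac12\|\cdot-\mu\|_2^2$ (including the sign convention in the direction search, and the fact that $1$-smoothness makes the potential $A_t = \frac{t(t+1)}{2}$, $e_t = A_t h_t$ telescope to the $O(D_{\max}^2/(c^2\epsilon))$ iteration count), and verifying that the adaptive-query quantization of Section~\ref{sec:adaptive} is compatible with the particular transform pair $\phi,\psi$ used here.
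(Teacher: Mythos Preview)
Your proposal is correct and matches the paper's approach: the paper's proof is simply to invoke Lemma~\ref{lemma:1smooth} to get convexity and $1$-smoothness of $g(w)=\tfrac12\|w-\mu\|_2^2$, then apply Theorem~\ref{thm:frank_wolfe_lsh_formal} with $\beta=1$ and observe that ${\cal T}_g=O(d)$ so the per-iteration cost becomes $O(dn^\rho\cdot\kappa)$. Your Step~3 on adaptive queries is sound but actually goes beyond the paper's proof of this theorem; the paper defers that issue to Section~\ref{sec:adaptive} and states a separate theorem (Theorem~\ref{thm:herding_lsh_adaptive}) for the quantized version, so you can omit it here.
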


\begin{proof}

First,  we show that $g(w)=\frac{1}{2}\|w^T-\mu\|_2^2$ is a convex  and $1$-smooth function. using Lemma~\ref{lemma:1smooth}. Then, we could prove the theorem using Theorem~\ref{thm:herding_lsh_formal}. Following the fact that the computation of gradient is $O(d)$, we could also provide the query time, preprocesisng time and space complexities.
\end{proof}

\subsection{Discussion}
We show that our sublinear Frank-Wolfe algorithm demonstrated in  Algorithm~\ref{alg:herding_lsh_formal} breaks the linear cost per iteration of current Frank-Wolfe algorithm in Algorithm~\ref{alg:herding_formal} in the Herding algorithm. Meanwhile, the extra number of iterations Algorithm~\ref{alg:herding_lsh_formal} pay is affordable.  

Our results show the connection between the extra number of iterations and the cost reduction at each iteration. It represents a formal combination of {\lsh} data structures and Herding algorithm. We hope that this demonstration would provide insights for more applications of {\lsh} in kernel methods and graphical models.

\section{Policy Gradient Optimization}\label{sec:policy}

We present the our results on policy gradient in this section.

\subsection{Problem Formulation}
In this paper, we focus on the action-constrained  Markov Decision Process (ACMDP). In this setting, we are provided with a state ${\cal S}\in \R^{k}$ and action space ${\cal A}\in \R^{d}$, which is the convex hull of $n$-vector. 
However, at each step $t\in \mathbb{N}$, we could only access a finite subset of actions ${\cal C}(s) \subset {\cal A}$ with cardinality $n$. Let us assume the ${\cal C}(s)$ remains the same in each step.
Let us denote $D_{\max}$ as the maximum diameter of ${\cal A}$.

When you play with this ACMDP, the policy you choose is defined as $\pi_{\theta}(s):{\cal S} \rightarrow {\cal A}$ with parameter $\theta$. Meanwhile, there exists a reward function $r: {\cal S}\times {\cal A}  \in [0,1]$. Next, we define the Q function as below,
\begin{align*}
    Q(s,a|\pi_{\theta})&=\E\Big[\sum_{t=0}^{\infty} \gamma^t r(s_t,a_t)| s_0=s,a_0=a,\pi_{\theta}\Big].
\end{align*}
where $\gamma\in (0,1)$ is a discount factor.

Given a state distribution $\mu$, the objective of policy gradient is to maximize the expected value  $J(\mu,\pi_{\theta})=\E_{s\sim \mu,a\sim \pi_{\theta}}[Q(s,a|\pi_{\theta})]$ via policy gradient \cite{slhd+14} denoted as:
\begin{align*}
    \nabla_{\theta} J(\mu,\pi_{\theta})= \E_{s\sim d_{\mu}^{\pi}}\Big[\nabla_{\theta}\pi_{\theta}(s) \nabla_{a}Q(s,\pi_{\theta}(s)|\pi_{\theta})|\Big].
\end{align*}

\cite{lhy21} propose an iterative algorithm that perform $\maxip$ at each iteration $k$ over actions to find
\begin{align}\label{eq:rl_action_maxip_formal}
g_{k}(s)=\max_{a \in {\cal C}(s)} \langle a^k_s-\pi^k_{\theta}(s), \nabla_{a} Q(s,\pi^k_{\theta}(s)|\pi^k_{\theta})) \rangle.
\end{align}

Moreover, \cite{lhy21} also have the following statement
\begin{lemma}[\cite{lhy21}]\label{lem:bound_j_mu}
Given a ACMDP and the gap $g_{k}(s)$ in Eq.\eqref{eq:rl_action_maxip_formal}, we show that
\begin{align*}
    J(\mu,\pi^{k+1}_{\theta})
      \geq &~J(\mu,\pi^k_{\theta}(s))+\frac{(1-\gamma)^2\mu_{\min}^2}{2LD_{\max}^2}\sum_{s\in {\cal S}}g_{k}(s)^2
\end{align*}
\end{lemma}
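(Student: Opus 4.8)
The plan is to reconstruct the argument of \cite{lhy21}, which is at heart a one–step Frank--Wolfe ascent estimate on the action variable, lifted to the scalar objective $J(\mu,\pi_\theta)$ via the performance difference lemma. First I would fix an iteration $k$ and examine how the policy update changes the value at a single state $s$. Writing the Frank--Wolfe step as $\pi^{k+1}_\theta(s)=\pi^k_\theta(s)+\eta_k(s)\,(a^k_s-\pi^k_\theta(s))$, with $a^k_s$ the action attaining the maximum in Eq.~\eqref{eq:rl_action_maxip_formal}, and using that $Q(s,\cdot\,|\,\pi^k_\theta)$ is $L$-smooth in the action argument along the segment joining $\pi^k_\theta(s)$ and $\pi^{k+1}_\theta(s)$, I get
\begin{align*}
 Q(s,\pi^{k+1}_\theta(s)\,|\,\pi^k_\theta)\ \geq\ Q(s,\pi^k_\theta(s)\,|\,\pi^k_\theta)+\eta_k(s)\,g_k(s)-\tfrac{L}{2}\,\eta_k(s)^2\,\|a^k_s-\pi^k_\theta(s)\|_2^2 .
\end{align*}
Bounding $\|a^k_s-\pi^k_\theta(s)\|_2\leq D_{\max}$ and optimizing the quadratic in $\eta_k(s)$ (equivalently, plugging in the step size prescribed by the algorithm) produces a per-state improvement of order $g_k(s)^2/(L D_{\max}^2)$.

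Second, I would aggregate these per-state gains into a lower bound on $J(\mu,\pi^{k+1}_\theta)-J(\mu,\pi^k_\theta)$. The mechanism is the performance difference lemma: this difference equals $\tfrac{1}{1-\gamma}$ times the expectation, under the discounted state-visitation measure of $\pi^{k+1}_\theta$ started from $\mu$, of $Q(s,\pi^{k+1}_\theta(s)\,|\,\pi^k_\theta)-Q(s,\pi^k_\theta(s)\,|\,\pi^k_\theta)$, which is exactly the quantity lower bounded in the first step. Using that this discounted visitation density is at least $(1-\gamma)\mu(s)\geq(1-\gamma)\mu_{\min}$ pointwise turns the expectation into a sum over $s\in{\cal S}$; carrying the remaining factors of $(1-\gamma)$ and $\mu_{\min}$ that enter when the $\theta$-update is related back to the per-state picture then gives the advertised constant $\tfrac{(1-\gamma)^2\mu_{\min}^2}{2LD_{\max}^2}$.

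The delicate point --- and the place where a careless argument would lose the correct dependence on $1-\gamma$ and $\mu_{\min}$ --- is the mismatch between the measure in which $g_k(s)$ is defined (the state $s$ itself, i.e.\ effectively an unweighted sum over ${\cal S}$) and the measure in which the performance difference lemma delivers the improvement (the discounted visitation measure of the \emph{new} policy $\pi^{k+1}_\theta$, which itself depends on the update being analyzed). Reconciling these two, together with the coupling imposed by the policy parametrization, is what produces the extra $(1-\gamma)\mu_{\min}$ beyond the naive $1/(2LD_{\max}^2)$ estimate; the smoothness inequality, the scalar optimization in $\eta_k(s)$, and the final constant bookkeeping are routine, and for the precise constants I would simply defer to \cite{lhy21}.
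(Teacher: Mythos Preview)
The paper does not actually prove this lemma: it is stated with attribution to \cite{lhy21} and used as a black box in the proof of Theorem~\ref{thm:policy_gradient_lsh_formal}, so there is no paper-side proof to compare against. Your reconstruction --- a per-state Frank--Wolfe/smoothness inequality on $Q(s,\cdot\,|\,\pi^k_\theta)$, followed by the performance difference lemma and the pointwise lower bound $d_\mu^{\pi^{k+1}}(s)\geq(1-\gamma)\mu_{\min}$ --- is the standard route and matches the shape of the argument in \cite{lhy21}; in particular you are right that the step size $\alpha_k(s)=\tfrac{(1-\gamma)\mu_{\min}}{LD_s^2}\hat g_k(s)$ used in Algorithm~\ref{alg:policy_gradient_lsh_formal} (rather than the unconstrained per-state optimizer) is what supplies the extra $(1-\gamma)\mu_{\min}$ factor, and you correctly flag the measure-mismatch issue as the only nontrivial bookkeeping.
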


Therefore, \cite{lhy21} maximize the expected value via minimizing $g_{k}(s)$. 

In this work, we accelerate Eq.~\eqref{eq:rl_action_maxip} using $(c,\phi,\psi,\tau)$-$\maxip$. Here define $\phi: {\cal S}\times \R^d \rightarrow \R^{d+2}$ and $\psi: \R^d \rightarrow \R^{d+3}$ as follows:

\begin{corollary}[Transformation for policy gradient]\label{coro:reg_ip_transform_rl}
Let $g$ be a differential function defined on convex set $\mathcal{K} \subset \R^d$ with maximum diameter $D_\mathcal{K}$. For any $x,y\in \mathcal{K}$, we define  $\phi,\psi: \R^d \rightarrow \R^{d+3}$ as follows:
\begin{align*}
    \phi(x):= \begin{bmatrix}\frac{\phi_0(x)^\top}{D_x}  & 0 & \sqrt{1-\frac{\|\phi_0(x)\|_2^2}{D_x^2}} \end{bmatrix}^\top~~~\psi(y) := \begin{bmatrix} \frac{\psi_0(y)^\top}{D_y} & \sqrt{1-\frac{\|\psi_0(y)\|_2^2}{D_y^2}}  & 0 \end{bmatrix}^\top
\end{align*}
where 
\begin{align*}
\phi_0 (s,\pi^k_{\theta}) :=&~ [ \nabla_{a} Q(s,\pi^k_{\theta}(s)|\pi^k_{\theta})^\top, (\pi^k_{\theta})^\top Q(s,\pi^k_{\theta}(s)|\pi^k_{\theta})]^\top \\
\psi_0(a) =&~[ a^\top,-1]^\top
\end{align*}
and $D_x$ is the maximum diameter of $\phi_0(x)$ and $D_y$ is the maximum diameter of $\psi_0(y)$.

Then, for all $x,y\in \mathcal{K}$ we have $g_{k}(s) =D_xD_y\langle  \phi(s,\pi^k_{\theta}) , \psi(a) \rangle$. Moreover, $\phi(x)$ and $\psi(y)$ are unit vectors with norm $1$.
\end{corollary}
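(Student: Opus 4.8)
The plan is to mirror the two-stage argument used for Corollary~\ref{coro:reg_ip_transform}, since the transformation stated here has exactly the same shape: an asymmetric lift $(\phi_0,\psi_0)$ that rewrites the directional gap $\langle a-\pi^k_{\theta}(s),\nabla_a Q(s,\pi^k_{\theta}(s)|\pi^k_{\theta})\rangle$ as a plain inner product $\langle \phi_0(s,\pi^k_{\theta}),\psi_0(a)\rangle$, followed by the standard normalization of Eq.~\eqref{eq:asym_trans_mips} that pushes both vectors onto the unit sphere at the price of a single global scalar $D_x D_y$.

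First I would check the algebraic identity at the $(\phi_0,\psi_0)$ level. Expanding $\langle \phi_0(s,\pi^k_{\theta}),\psi_0(a)\rangle$ from the definitions in the statement, the first $d$ coordinates contribute $\langle a,\nabla_a Q(s,\pi^k_{\theta}(s)|\pi^k_{\theta})\rangle$ and the last coordinate contributes $-\langle \pi^k_{\theta}(s),\nabla_a Q(s,\pi^k_{\theta}(s)|\pi^k_{\theta})\rangle$, so the sum is exactly $\langle a-\pi^k_{\theta}(s),\nabla_a Q(s,\pi^k_{\theta}(s)|\pi^k_{\theta})\rangle$, i.e. the quantity being maximized in Eq.~\eqref{eq:rl_action_maxip_formal}. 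This is a per-action identity; taking $\max_{a\in{\cal C}(s)}$ on both sides gives $g_k(s)=\max_{a\in{\cal C}(s)}\langle \phi_0(s,\pi^k_{\theta}),\psi_0(a)\rangle$, and the argmax over $a$ is preserved.

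Second I would dispatch the normalization exactly as in Corollary~\ref{coro:reg_ip_transform}: the two extra appended coordinates of $\phi(x)$ and $\psi(y)$ are placed so that they never meet in the cross term, hence $\langle \phi(x),\psi(y)\rangle=\langle \phi_0(x),\psi_0(y)\rangle/(D_x D_y)$; multiplying by $D_x D_y$ and taking the maximum over $a$ yields the claimed $g_k(s)=D_x D_y\langle \phi(s,\pi^k_{\theta}),\psi(a)\rangle$. For the unit-norm claim I would compute $\|\phi(x)\|_2^2=\|\phi_0(x)\|_2^2/D_x^2+\big(1-\|\phi_0(x)\|_2^2/D_x^2\big)=1$ and symmetrically $\|\psi(y)\|_2^2=1$; this is where the hypothesis that $D_x$ (resp. $D_y$) dominates $\|\phi_0(x)\|_2$ (resp. $\|\psi_0(y)\|_2$) over ${\cal K}$ is used, so that the radicands are nonnegative and the square roots are well defined.

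None of this is hard — it is bookkeeping about coordinates and norms — so there is no genuine obstacle. The only subtlety worth recording, needed when this transformation is fed into the {\lsh} data-structure in the proof of Theorem~\ref{thm:policy_gradient_lsh_informal}, is that $(c,\phi,\psi,\tau)$-$\maxip$ only applies when $\max_{a\in{\cal C}(s)}\langle \phi(s,\pi^k_{\theta}),\psi(a)\rangle\ge 0$, i.e. $g_k(s)\ge 0$; I would note this follows from a Corollary~\ref{coro:hull_maxip}-style argument, writing $\pi^k_{\theta}(s)$ as a convex combination of points of ${\cal C}(s)$ so that $\langle \pi^k_{\theta}(s),\nabla_a Q(\cdot)\rangle\le\max_{a\in{\cal C}(s)}\langle a,\nabla_a Q(\cdot)\rangle$. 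Finally I would record the trivial evaluation costs — $\psi_0$ and its normalization run in $O(1)$ time given $a$, while $\phi_0$ and its normalization run in ${\cal T}_Q+O(d)$ time given $s$ and $\pi^k_{\theta}$ — which feed directly into the per-iteration accounting of Theorem~\ref{thm:policy_gradient_lsh_informal}.
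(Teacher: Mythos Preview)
Your proposal is correct and follows essentially the same route as the paper: the paper's own proof is a two-line direct expansion of $\langle\phi(s,\pi^k_\theta),\psi(a)\rangle$ into $D_x^{-1}D_y^{-1}\langle a-\pi^k_\theta(s),\nabla_a Q\rangle$, which is exactly your first and second steps combined. Your write-up is in fact more complete than the paper's, since you also spell out the unit-norm computation, the nonnegativity of $g_k(s)$ needed for the $\maxip$ reduction, and the evaluation costs---all of which the paper either omits from this proof or defers elsewhere.
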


\begin{proof}
We show that
\begin{align*}
    \langle  \phi(s,\pi^k_{\theta}) , \psi(a) \rangle &~= D_x^{-1}D_y^{-1}\langle \nabla_{a} Q(s,\pi^k_{\theta}(s)|\pi^k_{\theta}), a \rangle -\langle \nabla_{a} Q(s,\pi^k_{\theta}(s)|\pi^k_{\theta}), \pi^k_{\theta} \rangle\\
    &~=D_x^{-1}D_y^{-1}\langle a^k_s-\pi^k_{\theta}(s), \nabla_{a} Q(s,\pi^k_{\theta}(s)|\pi^k_{\theta})) \rangle
\end{align*}
where the first step follows the definition of $\phi$ and $\psi$, the second step is an reorganization.
\end{proof}
In this way, we propose a sublinear iteration cost algorithm for policy gradient in Algorithm~\ref{alg:policy_gradient_lsh_formal}.

\begin{algorithm}[H]\caption{Sublinear Frank-Wolfe Policy Optimization (SFWPO)}\label{alg:policy_gradient_lsh_formal}
\begin{algorithmic}[1]
\State {\bf data structure} \textsc{LSH}
\Comment{Corollary~\ref{coro:proj_maxip_lsh}}
    \State \hspace{4mm} \textsc{Init}($S \subset \R^d$, $n \in \mathbb{N}$, $d \in \mathbb{N}$, $c\in (0,1)$) \State \Comment{$|S|=n$, $c\in(0,1)$ is {\lsh} parameter, and $d$ is the dimension of data}
    \State \hspace{4mm} \textsc{Query}($x \in \R^d$, $\tau\in(0,1)$) \Comment{$\tau\in(0,1)$ is {\lsh} parameter}
\State {\bf end data structure}
\State 
\Procedure{SFWPO}{${\cal S} \subset \R^k$, $c\in(0,1)$,$\tau\in (0,1)$}
\State\Comment{Theorem~\ref{thm:policy_gradient_lsh_formal}}
\State {\bf Input:} Initialize the policy parameters as $\theta_0\in R^l$ that satisfies $\pi^0_{\theta}(s)\in {\cal C}(s)$ for all $s\in{\cal S}$
\For{each State $s\in{\cal S}$}
\State Construct $\phi, \psi : \R^d \rightarrow \R^{d+1}$ as Corollary~\ref{coro:reg_ip_transform_rl}
\State {\bf static} \textsc{LSH} $\textsc{lsh}_s$
\State $\textsc{lsh}_s$ \textsc{Init}($\psi({\cal C} (s),n,d+3,c$)
\EndFor

\State $T\leftarrow O(\frac{c^{-2}LD_{\max}^2}{\epsilon^2(1-\gamma)^3\mu_{\min}^2}$
\For{each iteration $k=0,1,\cdots,T$}
\For{each State $s\in{\cal S}$}
  \State Use policy $\pi^k_{\theta}$ and obtain $Q(s,\pi^k_{\theta}(s)|\pi^k_{\theta})$
  \EndFor
  \For{each State $s\in{\cal S}$}
  \State $\hat{a^k_s} \leftarrow \textsc{lsh}_s.\textsc{Query}(\phi(s,\pi^k_{\theta}(s),\tau))$\label{line:lsh_fwpo_query}
  \State $\hat{g_{k}}(s)=\langle \hat{a^k_s}-\pi^k_{\theta}(s), \nabla_{a} Q(s,\pi^k_{\theta}(s)|\pi^k_{\theta})) \rangle$\label{line:g_k_lsh}
  \State $\alpha_k(s)=\frac{(1-\gamma)\mu_{\min}}{L D_s^2}\hat{g_{k}}(s)$
  \State $\pi^{k+1}_{\theta}(s)=\pi^k_{\theta}(s)+\alpha_k(s) (\hat{a^k_s}-\pi^k_{\theta}(s))$\label{line:pi_update_lsh}
  \EndFor
\EndFor
\State \Return $\pi^{T}_{\theta}(s)$
\EndProcedure
\end{algorithmic}
\end{algorithm}

\subsection{Convergence Analysis}
The goal of this section is to show the convergence analysis of of Algorithm~\ref{alg:policy_gradient_lsh_formal} compare it with~\cite{lhy21}.We first show the comparison results in Table~\ref{tab:fwpo_compare}.
\begin{table}[h]
    \centering
    \begin{tabular}{|l|l|l|l|l|l|} \hline
        {\bf Algorithm}&{\bf Statement} & {\bf Preprocessing} & {\bf \#iters} & {\bf cost per iter}  \\ \hline
        \cite{lhy21} &\cite{lhy21} & 0 & $O(\frac{\beta D_{\max}^2}{\epsilon^2(1-\gamma)^3\mu_{\min}^2})$ & $O(dn+{\cal T}_Q)$  \\ \hline
        Algorithm~\ref{alg:policy_gradient_lsh_formal} &Theorem~\ref{thm:policy_gradient_lsh_formal} & $O( dn^{1+o(1)}\cdot \kappa)$ & $O(\frac{c^{-2}\beta D_{\max}^2}{\epsilon^2(1-\gamma)^3\mu_{\min}^2})$ & $O(  dn^{\rho}\cdot \kappa+{\cal T}_Q)$ \\ \hline
    \end{tabular}
    \caption{Comparison between our sublinear policy gradient (Algorithm~\ref{alg:policy_gradient_lsh_formal}) and \cite{lhy21}.}
    \label{tab:fwpo_compare}
\end{table}

The goal of this section is to prove Theorem \ref{thm:policy_gradient_lsh_formal}.

\begin{theorem}[Sublinear Frank-Wolfe Policy Optimization (SFWPO), a formal version of Theorem~\ref{thm:policy_gradient_lsh_informal}]\label{thm:policy_gradient_lsh_formal}
Let ${\cal T}_{Q}$ denote  the time for computing the policy graident. Let $D_{\max}$ denote the maximum diameter of action space and $\beta$ is a constant. Let $\gamma\in (0,1)$. Let $\rho\in(0,1)$ denote  a fixed parameter.  Let $\mu_{\min}$ denote the minimal density of states in ${\cal S}$. There is an iterative algorithm (Algorithm~\ref{alg:policy_gradient_lsh_formal}) that spends $  O(dn^{1+o(1)}\cdot \kappa)$ time in preprocessing and $O((n^{1+o(1)}+dn)\cdot\kappa)$ space, takes $O(\frac{ \beta D_{\max}^2}{\epsilon^2(1-\gamma)^3\mu_{\min}^2})$ iterations and $O( dn^{\rho}\cdot\kappa +{\cal T}_{Q})$ cost per iterations, start from a random point $\pi^0_{\theta}$ as initial point, and output policy $\pi^T_{\theta}$ that is have average gap $\sqrt{\sum_{s\in {\cal S}}g_{T}(s)^2}< \epsilon$ holds 
with probability at least $1-1/\poly(n)$, where $g_{T}(s)$ is defined in Eq.~\eqref{eq:rl_action_maxip_formal}  and $\kappa:= \Theta ( \log(T/\delta) )$.
\end{theorem}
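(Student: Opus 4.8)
I would organize the proof around three ingredients: reducing the per-iteration search over ${\cal C}(s)$ to a projected approximate $\maxip$ on the sphere and showing the retrieved action loses at most a factor $c$ in the gap; feeding this into the ascent analysis of \cite{lhy21}; and accounting for cost, space, failure probability, and query adaptivity. \textbf{Reduction to $\maxip$.} By Corollary~\ref{coro:reg_ip_transform_rl} the maps $\phi,\psi:\R^d\to\R^{d+3}$ place every action of ${\cal C}(s)$ and every query $\phi(s,\pi^k_\theta(s))$ on ${\cal S}^{d+2}$ and satisfy $g_k(s)=D_xD_y\max_{a\in{\cal C}(s)}\langle\phi(s,\pi^k_\theta(s)),\psi(a)\rangle$, where $D_x,D_y$ are fixed a priori bounds on $\|\phi_0\|_2,\|\psi_0\|_2$; these exist because ${\cal A}$ has finite diameter $D_{\max}$ and $Q(s,\cdot\,|\pi_\theta)$ is $\beta$-smooth, so $\nabla_aQ$ is bounded. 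In preprocessing I index $\psi({\cal C}(s))$ into $\kappa=\Theta(\log(T/\delta))$ independent copies of the structure of Corollary~\ref{coro:proj_maxip_lsh}, for every state $s$. Since the step sizes $\alpha_k(s)$ lie in $[0,1]$ (as in \cite{lhy21}), each $\pi^k_\theta(s)$ remains a convex combination of points of ${\cal C}(s)$, so Corollary~\ref{coro:hull_maxip} gives $g_k(s)=\max_{a\in{\cal C}(s)}\langle a-\pi^k_\theta(s),\nabla_aQ\rangle\ge0$, i.e. the target inner product is non-negative and the data structure applies. Querying with $\phi(s,\pi^k_\theta(s))$ returns $\hat{a^k_s}$ with $\langle\phi,\psi(\hat{a^k_s})\rangle\ge c\max_a\langle\phi,\psi(a)\rangle$, hence
\begin{align*}
\hat{g_k}(s)=\langle \hat{a^k_s}-\pi^k_\theta(s),\nabla_a Q(s,\pi^k_\theta(s)|\pi^k_\theta)\rangle\ \ge\ c\,g_k(s)\ \ge\ 0,
\end{align*}
which is Eq.~\eqref{eq:g_hat_g_k_intro}. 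If the threshold $\tau$ is not met for a state ($g_k(s)<D_xD_y\tau$) that state already has a negligible gap, and choosing $\tau=\Theta(\epsilon)$ small enough bounds the aggregate contribution of such states to $\sum_s g_k(s)^2$ by $\epsilon^2$, exactly as in Case~1 of the proof of Theorem~\ref{thm:frank_wolfe_lsh_formal}.

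\textbf{Convergence.} Running $\pi^{k+1}_\theta(s)=\pi^k_\theta(s)+\alpha_k(s)(\hat{a^k_s}-\pi^k_\theta(s))$ with $\alpha_k(s)\propto\hat{g_k}(s)$ and using $\beta$-smoothness of $Q$ along the segment, the computation that proves Lemma~\ref{lem:bound_j_mu} — now with the retrieved direction in place of the exact argmax — gives $J(\mu,\pi^{k+1}_\theta)\ge J(\mu,\pi^k_\theta)+\frac{(1-\gamma)^2\mu_{\min}^2}{2\beta D_{\max}^2}\sum_{s\in{\cal S}}\hat{g_k}(s)^2$. Since $\hat{g_k}(s)\ge c\,g_k(s)\ge0$, this is at least $J(\mu,\pi^k_\theta)+\frac{c^2(1-\gamma)^2\mu_{\min}^2}{2\beta D_{\max}^2}\sum_s g_k(s)^2$. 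Telescoping over $k=0,\dots,T$ and using $0\le J(\mu,\cdot)\le\frac1{1-\gamma}$ (rewards in $[0,1]$) yields $\sum_{k=0}^{T}\sum_s g_k(s)^2\le\frac{2\beta D_{\max}^2}{c^2(1-\gamma)^3\mu_{\min}^2}$, so taking the best iterate among $k\le T$ (or by the monotonicity argument of \cite{lhy21}) $\sum_s g_T(s)^2\le\frac1{T+1}\cdot\frac{2\beta D_{\max}^2}{c^2(1-\gamma)^3\mu_{\min}^2}$; thus $\sqrt{\sum_s g_T(s)^2}<\epsilon$ once $T=O\!\left(\frac{\beta D_{\max}^2}{c^2\epsilon^2(1-\gamma)^3\mu_{\min}^2}\right)$, the claimed iteration count (a constant $c$ absorbs the $c^{-2}$).

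\textbf{Complexity.} Preprocessing transforms each ${\cal C}(s)$ in $O(dn)$ and indexes it into $\kappa$ copies, so by Corollary~\ref{coro:proj_maxip_lsh} it takes $O(dn^{1+o(1)}\cdot\kappa)$ time and $O((n^{1+o(1)}+dn)\cdot\kappa)$ space. Each iteration costs ${\cal T}_Q$ to evaluate $Q$ and $\nabla_aQ$, $O(d)$ to form $\phi(s,\pi^k_\theta(s))$, $O(dn^\rho\cdot\kappa)$ to query the $\kappa$ LSH copies, and $O(d)$ to update, for a total $O(dn^\rho\cdot\kappa+{\cal T}_Q)$ with $\rho=\frac{2(1-\tau)^2}{(1-c\tau)^2}-\frac{(1-\tau)^4}{(1-c\tau)^4}+o(1)$. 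Boosting each query to failure probability $\le\delta/T$ with $\kappa=\Theta(\log(T/\delta))$ copies and union-bounding over the $T$ iterations keeps the total failure probability $\le\delta=1/\poly(n)$.

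\textbf{Main obstacle.} The queries are adaptive: $\phi(s,\pi^k_\theta(s))$ depends on $\hat{a}^{0}_s,\dots,\hat{a}^{k-1}_s$, so the LSH failure events across iterations are not independent and the naive union bound is not justified. I would handle this exactly as in Section~\ref{sec:adaptive}: quantize the bounded query space by a lattice of cell diameter $\lambda$, issue each query from the center of the cell containing $\phi(s,\pi^k_\theta(s))$, and union-bound over the finitely many, data-independent cell centers; the $\lambda$ additive error this introduces in every retrieved inner product is absorbed into the approximation guarantee (it perturbs $\tau$ and $c$ by $O(\lambda)$), so it only changes constants in $\rho$ and in the iteration bound. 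The two spots that still need care are verifying that this perturbation does not flip the sign $\hat{g_k}(s)\ge0$ — needed to keep the iterates feasible through $\alpha_k(s)\in[0,1]$ — and checking the a priori boundedness of $\phi_0$ used to define the sphere transform; everything else is a transcription of the proofs of Theorem~\ref{thm:frank_wolfe_lsh_formal} and of \cite{lhy21}.
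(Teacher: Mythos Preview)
Your proposal is correct and follows essentially the same route as the paper's proof: reduce the per-state direction search to projected approximate $\maxip$ via Corollary~\ref{coro:reg_ip_transform_rl}, derive $\hat{g_k}(s)\ge c\,g_k(s)$, plug this into Lemma~\ref{lem:bound_j_mu}, telescope, use $J\le(1-\gamma)^{-1}$, and read off the complexity from Corollary~\ref{coro:proj_maxip_lsh}. You are in fact more careful than the paper on a few points the paper glosses over or defers (non-negativity of the target inner product, feasibility of iterates, and query adaptivity, the last of which the paper treats only in the separate Theorem~\ref{thm:policy_gradient_lsh_adaptive} rather than in the proof of the present statement).
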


\begin{proof}
Let $\hat{a^k_s}$ denote the action retrieved by {\lsh}. Note that similar to Case 1 of Theorem~\ref{thm:frank_wolfe_lsh_formal}, the algorithms convergences if parameter $\tau$ is greater than maximum inner product. Therefore, we could direct focus on Case 2 and lower bound $\hat{g_{k}}(s)$ as 
\begin{align}\label{eq:g_hat_g_k}
    \hat{g_{k}}(s)=&~\langle \hat{a^k_s}-\pi^k_{\theta}(s), \nabla_{a} Q(s,\pi^k_{\theta}(s)|\pi^k_{\theta})) \rangle\notag\\
    =&~D_xD_y\langle  \phi(s,\pi^k_{\theta}) , \psi(\hat{a^k_s}) \rangle \notag\\
    \geq&~cD_xD_y \max_{a\in {\cal C}(a)} \langle  \phi(s,\pi^k_{\theta}) , \psi(a) \rangle\notag\notag\\
    =&~c \langle a^k_s, \nabla_{a} Q(s,\pi^k_{\theta}(s)|\pi^k_{\theta})) \rangle-c\langle \pi^k_{\theta}(s), \nabla_{a} Q(s,\pi^k_{\theta}(s)|\pi^k_{\theta})) \rangle\notag\\
    =&~ cg_k(s)
\end{align}
where the first step follows from the line~\ref{line:g_k_lsh} in Algorithm~\ref{alg:policy_gradient_lsh_formal}, the second step follows from Corollary~\ref{coro:reg_ip_transform_rl}, the third step follows from Corollary~\ref{coro:proj_maxip_lsh}, the forth step follows from Corollary~\ref{coro:reg_ip_transform_rl}  and the last step is a reorganization.

Next, we upper bound $J(\mu,\pi^{k+1}_{\theta})$ as
\begin{align}\label{eq:bound_j_mu_lsh}
    J(\mu,\pi^{k+1}_{\theta})
      \geq &~J(\mu,\pi^k_{\theta}(s))+\frac{(1-\gamma)^2\mu_{\min}^2}{2LD_{\max}^2}\sum_{s\in {\cal S}}\hat{g_{k}}(s)^2 \notag\\
      \geq&~J(\mu,\pi^k_{\theta}(s))+\frac{c^2(1-\gamma)^2\mu_{\min}^2}{2LD_{\max}^2}\sum_{s\in {\cal S}}g_{k}(s)^2 
\end{align}
where the first step follows from Lemma~\ref{lem:bound_j_mu}, the second step follows from Eq.~\eqref{eq:g_hat_g_k}

Using induction from $1$ to $T$, we have
\begin{align}\label{eq:j_mu_induction_lsh}
    J(\mu,\pi^{T}_{\theta})=J(\mu,\pi^{1}_{\theta})+\frac{c^2(1-\gamma)^2\mu_{\min}^2}{2LD_{\max}^2}\sum_{k=0}^{T}\sum_{s\in {\cal S}}g_{k}(s)^2
\end{align}

Let $G=\sum_{k=0}^{T}\sum_{s\in {\cal S}}g_{k}(s)^2$, we upper bound $G$ as 


\begin{align}\label{eq:bound_G_k^2_lsh}
    G \leq &~\frac{2LD_{\max}^2}{c^2(1-\gamma)^2\mu_{\min}^2}(J(\mu,\pi^{T}_{\theta})-J(\mu,\pi^{0}_{\theta}))\notag\\
    \leq &~\frac{2LD_{\max}^2}{c^2(1-\gamma)^2\mu_{\min}^2}J(\mu,\pi^{*}_{\theta}))\notag\\
    \leq &~\frac{2LD_{\max}^2}{c^2(1-\gamma)^3\mu_{\min}^2}\notag\\
\end{align}
where the first step follows from Eq~\eqref{eq:j_mu_induction_lsh}, the second step follows from $J(\mu,\pi^{*}_{\theta})\geq J(\mu,\pi^{T}_{\theta})$, last step follows from $J(\mu,\pi^{*}_{\theta}) \leq (1-\gamma)^{-1}$.

Therefore, we upper bound $\sum_{s\in {\cal S}}g_{T}(s)^2$ as
\begin{align}
    \sum_{s\in {\cal S}}g_{T}(s)^2
    \leq &~ \frac{1}{T+1}G\notag\\
    \leq &~ \frac{1}{T+1}\frac{2LD_{\max}^2}{c^2(1-\gamma)^3\mu_{\min}^2}
\end{align}
where the first step is a reorganization, the second step follows that $\sum_{s\in {\cal S}}g_{T}(s)^2$ is non-increasing, the second step follows from Eq~\eqref{eq:bound_G_k^2_lsh}. 

If we want $\sum_{s\in {\cal S}}g_{T}(s)^2< \epsilon^2$, $T$ should be $O(\frac{c^{-2}LD_{\max}^2}{\epsilon^2(1-\gamma)^3\mu_{\min}^2})$

 {\bf Preprocessing time}
  According to Corrollary~\ref{coro:proj_maxip_lsh},  can construct $\kappa=\Theta ( \log(T/\delta) )$ {\lsh} data structures for $(c,\phi,\psi,\tau)$-$\maxip$ with $\phi,\psi$ defined in Corollary~\ref{coro:reg_ip_transform_rl}. As transforming every $a\in {\cal A}$ into $\psi(a)$ takes $O(dn)$. Therefore, the total the preprocessing time complexity is $O( d n^{1+o(1)}\cdot \kappa)$.
 
 {\bf Cost per iteration}
  Given each $w^t$, compute the policy gradient takes ${\cal T}_{Q}$. Next, it takes $O(d)$ time to generate $\phi(s,\pi_{\theta}^k)$ according to Corollary~\ref{coro:reg_ip_transform} based on policy gradient. Next, according to Corrollary~\ref{coro:proj_maxip_lsh},  it takes $O(dn^\rho \cdot \kappa)$ to retrieve action from $\kappa=\Theta ( \log(T/\delta) )$ {\lsh} data structures. After we select action, it takes $O(d)$ time to compute the gap the update the value. Thus, the total complexity is $O(dn^{\rho}\cdot \kappa  +{\cal T}_{Q})$ with $\rho:=  \frac{2(1-\tau)^2}{(1-c\tau)^2}-\frac{(1-\tau)^4}{(1-c\tau)^4}+o(1)$.

\end{proof}

\subsection{Discussion}
We show that our sublinear Frank-Wolfe based policy gradient algorithm demonstrated in  Algorithm~\ref{alg:policy_gradient_lsh_formal} breaks the linear cost per iteration of current Frank-Wolfe based policy gradient algorithm algorithm. Meanwhile, the extra number of iterations Algorithm~\ref{alg:policy_gradient_lsh_formal} pay is affordable.

Our results extends the {\lsh} to policy gradient optimization and characterize the relationship between the more iterations we paid and the cost we save at each iteration. These results indicates a formal combination of {\lsh} data structures and policy gradient optimization with theoretical gurantee. We hope that this demonstration would provide new research directions for more applications of {\lsh} in robotics and planning.

\section{More Data Structures: Adaptive \texorpdfstring{$\maxip$}{~} Queries}\label{sec:adaptive}

In optimization, the gradient at each iteration is not independent from the previous gradient. Therefore, it becomes a new setting for using $(c,\tau)$-$\maxip$. If we take the gradient as query and the feasible set as the data set, the queries in each step forms an adaptive sequence. In this way, the failure probability of {\lsh} or other $(c,\tau)$-$\maxip$ data-structures could not be union bounded. To extend $(c,\tau)$-$\maxip$ data-structures such as {\lsh} and graphs into this new setting, we use a query quantization method. This method is standard in various machine learning tasks~\cite{yrkpds21,ssx21}.

We start with relaxing the $(c,\tau)$-$\maxip$ with a inner product error. 

\begin{definition}[Relaxed approximate $\maxip$]\label{def:quantized_approximate_maxip}
Let approximate factor $c\in (0,1)$ and threshold $\tau\in(0,1)$. Let $\lambda\geq 0$ denote an additive error. Given an $n$-vector set $Y \subset \mathbb{S}^{d-1}$, the objective of $(c,\tau,\lambda)$-{$\maxip$} is to construct a data-structure that, for a query $x\in \mathbb{S}^{d-1}$ with conditions that $\max_{y\in Y}\langle x , y \rangle \geq \tau$, it retrieves vector $z \in Y$ that $\langle x , z \rangle \geq c \cdot \max_{y\in Y}\langle x , y \rangle - \lambda$. 
\end{definition}

Then, we present a query quantization approach to solve 
$(c,\tau,\lambda)$-{$\maxip$} for adaptive queries. We assume that the $Q$ is the convex hull of all queries. For any query $x\in Q$, we perform a quantization on it and locate it to the nearest lattice with center $\hat{q}\in Q$. Here the lattice has maximum diameter $2\lambda$. Then, we query $\hat{q}$ on data-structures e.g., {\lsh}, graphs, alias tables. This would generate a $\lambda$ additive error to the inner product. Because the lattice centers are independent, the cumulative failure probability for adaptive query sequence could be union bounded. Formally, we present the corollary as

\begin{corollary}[A query quantization version of Corollary~\ref{coro:maxip_lsh_formal}]\label{coro:maxip_lsh_adaptive} 
Let approximate factor $c\in (0,1)$ and threshold $\tau\in(0,1)$. Given a $n$-vector set $Y \subset \mathbb{S}^{d-1}$, there exits a data-structure with $O(dn^{1+o(1)}\cdot\kappa)$  preprocessing time and $O((n^{1+o(1)}+dn)\cdot\kappa)$ space so that for every query $x$ in an adaptive sequence $X=\{x_1,x_2,\cdots,x_T\}\subset \mathbb{S}^{d-1}$, we take $O(dn^\rho\cdot\kappa)$ query time to solve $(c,\tau,\lambda)$-$\maxip$ with respect to $(x,Y)$ with probability at least $1-\delta$, where $\rho=   \frac{2(1-\tau)^2}{(1-c\tau)^2}-\frac{(1-\tau)^4}{(1-c\tau)^4}+o(1)$, $\kappa:= d\log (nd D_X / (\lambda \delta ) ) $ and $D_X$ is the maximum diameter of all queries in $X$.

\end{corollary}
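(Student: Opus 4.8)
The plan is to combine the single-query, constant-probability LSH data structure of Corollary~\ref{coro:maxip_lsh_formal} with two standard modifications: probability amplification by independent repetition, and quantization of the query space so that the failure events form a fixed finite family independent of the (possibly adaptively chosen) query sequence $X$; this technique is standard, see e.g.~\cite{yrkpds21,ssx21}. Concretely, I would fix a lattice $\Lambda\subset\R^d$ (say a scaled integer lattice $(\lambda/\sqrt d)\Z^d$, following~\cite{gs88}) whose covering radius is at most $\lambda$ and for which the nearest lattice point to any $x\in\R^d$ is computable in $O(d)$ time. Restricting to a ball of radius $D_X$ containing $X$ (such a ball is known at preprocessing time, and in the worst case one takes the whole sphere with $D_X=2$), the number of lattice centers that matter is $N\le(\poly(d)\,D_X/\lambda)^d$, so $\log(N/\delta)=O(d\log(dD_X/\lambda)+\log(1/\delta))$, which is dominated by $\kappa=\Theta(d\log(ndD_X/(\lambda\delta)))$. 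I would then build $\kappa$ independent copies of the Corollary~\ref{coro:maxip_lsh_formal} data structure, each with threshold parameter $\tau-\lambda$ (so that rounding cannot violate the threshold hypothesis) and approximation factor $c$, yielding the claimed $\rho$. On any query point $\hat q$ one runs all $\kappa$ copies and returns the retrieved vector with the largest inner product with $\hat q$; since the copies are independent and each succeeds with probability at least $0.9$, this returns a genuine $c$-approximate maximizer for $\hat q$ except with probability at most $0.1^{\kappa}\le\delta/N$.

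For a query $x_i$ in the adaptive sequence, the procedure is: round $x_i$ to its nearest lattice center $\hat q_i$ in $O(d)$ time, query the amplified structure with $\hat q_i$, and output the returned $z\in Y$. Correctness rests on the following observation, which is the crux of the argument. Let $E$ be the event that \emph{every} lattice center in the radius-$D_X$ ball is answered correctly by the amplified structure; by a union bound over the $N$ centers, $\Pr[E]\ge 1-\delta$, and crucially $E$ depends only on the internal randomness chosen at preprocessing, not on $X$. Conditioning on $E$, fix any $i$. Since $\|x_i-\hat q_i\|_2\le\lambda$ and $\|y\|_2=1$ for all $y\in Y$, we get $\max_{y\in Y}\langle\hat q_i,y\rangle\ge\max_{y\in Y}\langle x_i,y\rangle-\lambda\ge\tau-\lambda$, so the amplified structure returns $z$ with $\langle\hat q_i,z\rangle\ge c\max_{y\in Y}\langle\hat q_i,y\rangle$; hence $\langle x_i,z\rangle\ge\langle\hat q_i,z\rangle-\lambda\ge c\max_{y\in Y}\langle\hat q_i,y\rangle-\lambda\ge c\max_{y\in Y}\langle x_i,y\rangle-2\lambda$, which is the relaxed guarantee of Definition~\ref{def:quantized_approximate_maxip} after rescaling $\lambda$ by a constant. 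Because we conditioned on $E$ before any of $x_1,\dots,x_T$ is revealed, this holds simultaneously for the entire adaptive sequence, giving overall success probability at least $1-\delta$.

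It then remains to tally resources: $\kappa$ copies of the Corollary~\ref{coro:maxip_lsh_formal} structure cost $O(dn^{1+o(1)}\cdot\kappa)$ preprocessing time and $O((n^{1+o(1)}+dn)\cdot\kappa)$ space, and a query costs $O(dn^{\rho}\cdot\kappa)$ for the $\kappa$ LSH lookups plus $O(d)$ for the lattice rounding, with $\rho=\frac{2(1-\tau)^2}{(1-c\tau)^2}-\frac{(1-\tau)^4}{(1-c\tau)^4}+o(1)$ inherited from Corollary~\ref{coro:maxip_lsh_formal}. The main obstacle is precisely the adaptivity: a naive union bound over the $T$ queries fails because a later query may depend on earlier answers, so the per-query failure events are not under our control. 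The quantization step is what resolves this — it replaces the unbounded, query-dependent family of failure events by the fixed family indexed by the $N$ lattice centers, and it is exactly this that forces $\kappa$ to scale with $\log N=\Theta(d\log(D_X/\lambda))$. A secondary technical point is keeping the threshold hypothesis intact under rounding, handled above by building for threshold $\tau-\lambda$ (equivalently, by assuming $\lambda$ is small relative to the gap $\tau$, so that the resulting change in $\rho$ is absorbed into the $o(1)$ term).
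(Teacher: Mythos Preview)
Your proposal is correct and follows essentially the same approach as the paper: quantize the query space by a lattice with covering radius $\lambda$, amplify the constant success probability of Corollary~\ref{coro:maxip_lsh_formal} by $\kappa$ independent repetitions, and union bound over the $(\poly(d)\,D_X/\lambda)^d$ lattice centers rather than over the adaptive queries themselves. The paper's own proof is considerably terser---it writes the union bound $n\cdot(dD_X/\lambda)^d\cdot(1/10)^\kappa\le\delta$ and asserts the $\lambda$ additive error without further justification---so your explicit treatment of the threshold shift to $\tau-\lambda$ and of the two-sided rounding error (yielding $2\lambda$, absorbed by a constant rescaling) actually fills in details the paper omits, but the underlying argument is the same.
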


\begin{proof}
The probability that at least one query $x\in X$ fails is equivalent to the probability that at least one query $\hat{q}\in \hat{Q}$ fails. Therefore, we could union bound the probability as:

\begin{align*}
    \Pr[\exists \hat{q}\in \hat{Q}~~~\textrm{s.t all } ~ (c,\tau)\textsc{-}{\maxip}(q,\hat{Q})~ \mathsf{fail} ]=n \cdot (dD_X/\lambda)^d \cdot (1/10)^{\kappa}\leq \delta
\end{align*}
where the second step follows from $\kappa:= d\log (nd D_X / (\lambda \delta ) ) $.

The results of $\hat{q}$ has a $\lambda$ additive error to the original query. Thus, our results is a $(c,\tau,\lambda)$-{$\maxip$} solution. The time and space complexty is obtained via Corollary~\ref{coro:maxip_lsh_formal}. Thus we finish the proof.
\end{proof}

\begin{definition}[Quantized projected approximate $\maxip$]\label{def:quant_proj_approximate_maxip}
Let approximate factor $c\in (0,1)$ and threshold $\tau\in(0,1)$. Let $\lambda\geq 0$ denote an additive error. Let $\phi, \psi: \R^d \rightarrow \R^k$ denote two transforms. Given an $n$-point dataset $Y \subset \R^d $ so that $\psi(Y) \subset \mathbb{S}^{k-1}$, the goal of the $(c,\phi, \psi,\tau,\lambda)$-{$\maxip$} is to build a data structure that, given a query $x\in \R^d$ and $\phi(x) \in \mathbb{S}^{k-1}$ with the promise that $\max_{y\in Y}\langle \phi(x) , \psi(y) \rangle \geq \tau-\lambda$, it retrieves a vector $z \in Y$ with $\langle \phi(x) , \psi(z) \rangle \geq c \cdot (\phi, \psi)\text{-}\maxip (x,Y)$.
\end{definition}

Next, we extend Corollary~\ref{coro:maxip_lsh_adaptive} to adaptive queries.
\begin{corollary}[]\label{coro:proj_maxip_lsh_adaptive}
Let $c \in (0,1)$, $\tau \in(0,1)$, $\lambda\geq 0$ and $\delta\geq 0$. Let $\phi, \psi: \R^d \rightarrow \R^k$ denote two transforms.   Let ${\cal T}_{\phi}$ denote the time to compute $\phi(x)$ and ${\cal T}_{\psi}$ denote the time to compute $\psi(y)$. Given a set of $n$-points $Y\in \R^d$ with $\psi(Y) \subset {\cal S}^{k-1}$ on the sphere, there exists a data structure with $O(dn^{1+o(1)}\cdot\kappa+{\cal T}_{\psi}n)$ preprocessing time and $O((dn^{1+o(1)} + d n)\cdot\kappa)$ space so that for any query $x \in \R^d$ with $\phi(x)\in {\cal S}^{k-1}$, we take  $O(dn^{\rho}\cdot\kappa+{\cal T}_{\phi})$ query time to solve $(c,\phi,\psi,\tau,\lambda)$-$\maxip$ with respect to $(x,Y)$ with probability at least $1-\delta$, where $\rho:=  \frac{2(1-\tau)^2}{(1-c\tau)^2}-\frac{(1-\tau)^4}{(1-c\tau)^4}+o(1)$, $\kappa:= d\log (nd D_X / (\lambda \delta ) ) $ and $D_X$ is the maximum diameter of all queries in $X$.
\end{corollary}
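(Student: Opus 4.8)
The plan is to derive Corollary~\ref{coro:proj_maxip_lsh_adaptive} by composing the two results already in hand: the projected LSH construction of Corollary~\ref{coro:proj_maxip_lsh}, which shows how to absorb the transforms $\phi,\psi$ into an LSH data structure for $(c,\tau)$-$\maxip$ on the unit sphere, and the query-quantization argument of Corollary~\ref{coro:maxip_lsh_adaptive}, which shows how $\kappa$ independent copies together with a lattice net over the query space convert an adaptive query stream into a static union bound. Concretely, in the preprocessing phase I would first map every $y\in Y$ to $\psi(y)\in\mathbb{S}^{k-1}$ at cost $O(\mathcal{T}_\psi n)$, and then build $\kappa=d\log(ndD_X/(\lambda\delta))$ independent copies of the LSH data structure of Corollary~\ref{coro:maxip_lsh_formal} (equivalently Theorem~\ref{thm:ar17:formal} under the $\ann$-to-$\maxip$ reduction, with the stated $\rho$) on the point set $\psi(Y)$. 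Summing the per-copy bounds $O(dn^{1+o(1)})$ time and $O(n^{1+o(1)}+dn)$ space over the $\kappa$ copies, together with the $O(\mathcal{T}_\psi n)$ transform cost, aggregates to the preprocessing time and space claimed in the statement.

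For a query $x$, I would compute $\phi(x)\in\mathbb{S}^{k-1}$ at cost $\mathcal{T}_\phi$, snap $\phi(x)$ to the center $\hat q$ of the lattice cell containing it, where the lattice (following the construction of~\cite{gs88}) covers the query hull $\phi(X)$ of diameter $D_X$ and has cells of diameter at most $\lambda$; locating the cell costs $O(d)$. I then query all $\kappa$ copies with $\hat q$, at cost $O(dn^\rho)$ each, and return the candidate maximizing $\langle \hat q,\psi(\cdot)\rangle$, for a total query time $O(dn^\rho\kappa+\mathcal{T}_\phi)$. For correctness, the crucial observation --- identical to Corollary~\ref{coro:maxip_lsh_adaptive} --- is that the set of centers $\widehat Q$ is a \emph{fixed} finite net, of size at most $n(dD_X/\lambda)^d$, that does not depend on the adaptively generated stream $X$; hence the bad event ``the data structure errs on some $x_i\in X$'' is contained in ``it errs on some $\hat q\in\widehat Q$''. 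For each fixed $\hat q$, all $\kappa$ copies fail simultaneously with probability at most $(1/10)^\kappa$, so a union bound over $\widehat Q$ bounds the total failure probability by $n(dD_X/\lambda)^d(1/10)^\kappa\le\delta$ by the choice of $\kappa$. Finally, since $\|\phi(x)-\hat q\|_2\le\lambda$ and every $\psi(y)$ is a unit vector, $|\langle\phi(x),\psi(y)\rangle-\langle\hat q,\psi(y)\rangle|\le\lambda$ for all $y\in Y$, so the promise $\max_y\langle\phi(x),\psi(y)\rangle\ge\tau-\lambda$ passes to $\max_y\langle\hat q,\psi(y)\rangle\ge\tau$, a genuine $(c,\tau)$-$\maxip$ answer for $\hat q$ is returned, and translating back to $\phi(x)$ costs only the $\lambda$ additive slack allowed by Definition~\ref{def:quant_proj_approximate_maxip}.

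\textbf{Main obstacle.} None of the inequalities above is delicate once the quantization idea is in place; the entire content of the statement is the adaptivity, since along a run of Frank--Wolfe the queries $\phi(w^t)$ depend on the previously returned $s^{t-1}$, so the per-iteration failure events are dependent and cannot be union-bounded directly over $t$. The quantization step is exactly what decouples them --- replacing the uncountable, adaptively chosen query set by the static net $\widehat Q$ --- and the two things to verify carefully are (i) the net-size bound $|\widehat Q|\le n(dD_X/\lambda)^d$, inherited from Corollary~\ref{coro:maxip_lsh_adaptive}, and (ii) that the $\lambda$-error from snapping $\phi(x)$ to $\hat q$ is absorbed by the relaxed guarantee of Definition~\ref{def:quant_proj_approximate_maxip} rather than corrupting it. Everything else is bookkeeping carried over unchanged from Corollary~\ref{coro:proj_maxip_lsh} and Corollary~\ref{coro:maxip_lsh_adaptive}.
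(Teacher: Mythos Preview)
Your proposal is correct and matches the paper's intended approach: the paper does not give a standalone proof of this corollary but simply presents it as the immediate extension of Corollary~\ref{coro:maxip_lsh_adaptive} to the projected setting, and your argument---apply $\psi$ once, build $\kappa$ LSH copies as in Corollary~\ref{coro:proj_maxip_lsh}, then snap $\phi(x)$ to a lattice center and union-bound over the fixed net exactly as in Corollary~\ref{coro:maxip_lsh_adaptive}---is precisely that extension. One small remark: in the step where you pass the promise from $\phi(x)$ to $\hat q$, the inequality $\|\phi(x)-\hat q\|_2\le\lambda$ only gives $\max_y\langle\hat q,\psi(y)\rangle\ge\tau-2\lambda$, not $\ge\tau$; this is a sign/constant slip that the paper itself is loose about (compare Definitions~\ref{def:quantized_approximate_maxip} and~\ref{def:quant_proj_approximate_maxip}) and is absorbed by adjusting $\lambda$ by a constant factor, so it does not affect the argument.
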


Finally, we present a modified version of Theorem~\ref{thm:frank_wolfe_lsh_formal}.
\begin{theorem}[Convergence result of Frank-Wolfe via {\lsh} with adaptive input]\label{thm:frank_wolfe_lsh_formal_adaptive}
Let $g : \R^d \rightarrow \R$ denote a convex (see Definition~\ref{def:convex}) and $\beta$-smooth function (see Definition~\ref{def:smooth}). Let the complexity of calculating $\nabla g(x)$ to be ${\cal T}_{g}$. Let  $S \subset \R^d$ denote a set of points with $|S| = n$, and ${\cal B} \subset \R^d$ is the convex hull of $S$ defined in Definition~\ref{def:cvx_hull}. For any parameters $\epsilon, \delta$, there is an iterative algorithm with $(c,\phi,\psi,\tau,c^{-2}\epsilon/4)$-$\maxip$ data structure that takes $O(dn^{1+o(1)}\cdot\kappa)$ preprocessing time and $O((n^{1+o(1)}+dn)\cdot\kappa)$ space, takes $T = O(\frac{\beta D_{\max}^2}{\epsilon})$ iterations and $O (d n^{\rho}\cdot\kappa+{\cal T}_{g})$ cost per iteration, starts from a random $w^0$ from ${\cal B}$ as initialization point, updates the $w$ in each iteration as follows:
\begin{align*}
    s^{t} \leftarrow & ~  (c,\phi,\psi,\tau,c^{-2}\epsilon/4)\text{-}\maxip\ \text{of } w^t\ \text{with respect to}\ S \\
    w^{t+1} \leftarrow & ~ w^t +  \eta \cdot (s^t-w^t)
\end{align*}
and outputs $w^T \in \R^d$ from ${\cal B}$ such that
\begin{align*}
   g(w^T) - \min_{w\in \cal{B}} g(w) \leq \epsilon, 
\end{align*}
holds with probability at least $1-\delta$. Here $\kappa:= d\log (nd D_X / (\lambda \delta ) ) $ and $\rho:=  \frac{2(1-\tau)^2}{(1-c\tau)^2}-\frac{(1-\tau)^4}{(1-c\tau)^4}+o(1)$.
\end{theorem}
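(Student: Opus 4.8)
The plan is to rerun the convergence analysis of Theorem~\ref{thm:frank_wolfe_lsh_formal}, with two modifications: (i) replace the exact $(c,\phi,\psi,\tau)$-$\maxip$ oracle by the quantized, relaxed oracle of Corollary~\ref{coro:proj_maxip_lsh_adaptive} instantiated with additive error $\lambda = c^{-2}\epsilon/4$ and repetition parameter $\kappa = d\log(ndD_X/(\lambda\delta))$, and (ii) carry the resulting additive error through the potential-function recursion and absorb it into the final accuracy. The point of (i) is that, after lattice quantization of the query space $Q = {\cal B}(S)$ (whose diameter is $D_X = D_{\max}$) into cells of diameter $2\lambda$, every query $w^t$ is routed to its cell center $\hat w^t$, and the cell centers form a fixed set of size at most $n\cdot(dD_{\max}/\lambda)^d$ that does not depend on the adaptively generated iterates. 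Hence a union bound over the cell centers — rather than over the correlated sequence $w^1,\dots,w^T$ — shows that with probability at least $1-\delta$ every query in the run is answered correctly up to the additive error $\lambda$; this is exactly the guarantee of Corollary~\ref{coro:proj_maxip_lsh_adaptive}, and it is what makes the analysis legitimate for adaptive inputs.

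\textbf{Per-iteration inequality.} Fix an iteration $t$ and condition on the global success event above. As in Theorem~\ref{thm:frank_wolfe_lsh_formal} I would split into two cases according to whether the promise $\max_{s\in S}\langle\phi(w^t),\psi(s)\rangle \ge \tau - \lambda$ holds. If it fails, then combining Corollary~\ref{coro:reg_ip_transform}, Corollary~\ref{coro:hull_maxip} and convexity of $g$ gives $g(w^t) - g(w^*) \le D_xD_y(\tau-\lambda) \le D_xD_y\tau$, so for $\tau$ chosen small enough the iterate is already $\epsilon$-optimal and we are done. Otherwise the oracle returns $s^t$ with $\langle\phi(w^t),\psi(s^t)\rangle \ge c\cdot(\phi,\psi)\text{-}\maxip(w^t,S) - \lambda$; passing this through the identity $\langle y-x,\nabla g(x)\rangle = -D_xD_y\langle\phi(x),\psi(y)\rangle$ of Corollary~\ref{coro:reg_ip_transform} and using $(\phi,\psi)\text{-}\maxip(w^t,S) \ge \langle\phi(w^t),\psi(w^*)\rangle$ yields
\begin{align*}
\langle s^t - w^t, \nabla g(w^t)\rangle \le c\,\langle w^* - w^t, \nabla g(w^t)\rangle + D_xD_y\lambda .
\end{align*}

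\textbf{Modified recursion and iteration count.} Plugging this bound into $\beta$-smoothness exactly as in Eq.~\eqref{eq:bound_h_t}, using $\|s^t-w^t\|_2 \le D_{\max}$ and convexity of $g$, and writing $h_t = g(w^t) - g(w^*)$, I would obtain
\begin{align*}
h_{t+1} \le (1 - c\eta_t)\,h_t + \frac{\beta D_{\max}^2}{2}\,\eta_t^2 + D_xD_y\,\lambda\,\eta_t .
\end{align*}
With the same choices $\eta_t = \frac{2}{c(t+2)}$ and $A_t = \frac{t(t+1)}{2}$ as in Theorem~\ref{thm:frank_wolfe_lsh_formal}, setting $e_t = A_t h_t$ gives $e_{t+1} - e_t \le c^{-2}\beta D_{\max}^2 + \frac{t+1}{c}D_xD_y\lambda$, since $A_{t+1}\eta_t = (t+1)/c$. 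Telescoping from $1$ to $t$ and dividing by $A_t = \Theta(t^2)$ then gives
\begin{align*}
h_t \le \frac{2\beta D_{\max}^2}{c^2(t+1)} + O\!\left(\frac{D_xD_y\lambda}{c}\right) ,
\end{align*}
so the additive error contributes only a constant floor, independent of $t$. Taking $T = O(\beta D_{\max}^2/(c^2\epsilon))$ drives the first term below $\epsilon/2$, and the choice $\lambda = c^{-2}\epsilon/4$ (after absorbing the $D_xD_y$ and $c$ constants) makes the floor at most $\epsilon/2$, so $h_T \le \epsilon$. The preprocessing time $O(dn^{1+o(1)}\kappa)$, space $O((n^{1+o(1)}+dn)\kappa)$ and per-iteration cost $O(dn^\rho\kappa + {\cal T}_g)$ are read off from Corollary~\ref{coro:proj_maxip_lsh_adaptive} together with the $O(d)$ cost of evaluating $\phi,\psi$ and the ${\cal T}_g$ cost of one gradient.

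\textbf{Main obstacle.} The substantive point — beyond bookkeeping — is verifying that the additive-error term telescopes to a genuinely $t$-independent constant rather than a growing one: because $\lambda$ enters the recursion multiplied by $\eta_t$, the summed contribution is $\frac{D_xD_y\lambda}{c}\sum_{i\le t}(i+1) = \Theta(t^2\lambda/c)$, which exactly cancels the $\Theta(t^2)$ normalization $A_t$, leaving an $O(\lambda/c)$ floor; this is what allows $\lambda$ to be chosen proportionally to $\epsilon$ without inflating the iteration count. A secondary subtlety is checking consistency of the quantization parameters — that the cell diameter $2\lambda$, the query-space diameter $D_X = D_{\max}$, and the failure budget $\delta$ combine to give $\kappa = d\log(ndD_{\max}/(\lambda\delta))$ — and confirming that, once the promise is stated with the relaxed threshold $\tau-\lambda$, the two cases genuinely partition all possibilities.
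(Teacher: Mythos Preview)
Your proposal is correct and follows essentially the same route as the paper: both modify the recursion of Theorem~\ref{thm:frank_wolfe_lsh_formal} by inserting an $\eta_t\lambda$ term, use the same potential $e_t = A_t h_t$ with $A_t = t(t+1)/2$ and $\eta_t = 2/(c(t+2))$, and then choose $\lambda$ proportional to $\epsilon$ so the additive contribution stays $O(\epsilon)$. The only organizational difference is that the paper fixes $\lambda = \beta D_{\max}^2/(T+1)$ upfront, so that $A_{t+1}\eta_t\lambda \le (t+1)\lambda/c$ is bounded by a constant for all $t\le T$ and each increment $e_{t+1}-e_t$ is uniformly at most $2c^{-2}\beta D_{\max}^2$; you instead let the increment grow linearly in $t$, telescope to a $\Theta(t^2\lambda/c)$ sum, and then divide by $A_t$ to expose the same $O(\lambda/c)$ floor --- these are interchangeable reorderings of one computation. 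Your tracking of the $D_xD_y$ scaling from Corollary~\ref{coro:reg_ip_transform} is actually more careful than the paper's, which silently drops it.
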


\begin{proof}
{\bf Convergence.}

We start with modifying Eq.~\eqref{eq:bound_h_t} with additive $\maxip$ error $\lambda$ and get
\begin{align*}
     h_{t+1}&=(1-c\eta_t)h_{t}+\frac{\beta D_{\max}^2}{2}\eta_t^2+\eta_t\lambda
\end{align*}

Let $e_t=A_th_t$ with $A_t=\frac{t(t+1)}{2}$. Let $\eta_t=\frac{2}{c(t+2)}$. Let $\lambda=\frac{\beta D_{\max}^2}{T+1}$ Following the proof in Theorem~\ref{thm:frank_wolfe_lsh_formal}, we upper bound $e_{t+1}-e_t$ as 
\begin{align}
    e_{t+1}-e_t\leq&~\left(A_{t+1}(1-c\eta_t)-A_{t}\right)h_{t}+ \frac{\beta D_{\max}^2}{2}A_{t+1}\eta_t^2+A_{t+1}\eta_t\lambda
\end{align}
where
\begin{itemize}
    \item $A_{t+1}(1-\eta_t)-A_{t}=0$
    \item $A_{t+1}\frac{\eta_t^2}{2}=\frac{t+1}{(t+2)c^2}<c^{-2}$
    \item $A_{t+1}\eta_t\lambda= (t+1)\lambda<\beta D_{\max}^2$.
\end{itemize}

Therefore, 
\begin{align}\label{eq:original_fw_et_final_lsh_ada}
    e_{t+1}-e_t<&~2c^{-2}\beta D_{\max}^2
\end{align}

Based on Eq~\eqref{eq:original_fw_et_final_lsh_ada}, we upper bound $e_t$ using induction and have
\begin{align}
    e_t<2c^{-2}t\beta D_{\max}^2
\end{align}
Using the definition of $e_t$, we have
\begin{align}
     h_t=\frac{e_t}{A_t}<\frac{4\beta D_{\max}^2}{c^{2}(t+1)}
\end{align}

To make $h_T\leq \epsilon$, $T$ should be in $O(\frac{\beta D_{\max}^2
}{c^2\epsilon})$. Moreover, $\lambda=\frac{\beta D_{\max}^2}{T+1}=\frac{\epsilon}{4c^2}$. 

{\bf Preprocessing time}
  According to Corrollary~\ref{coro:proj_maxip_lsh_adaptive},  can construct $\kappa= d\log (nd D_X / (\lambda \delta )$ {\lsh} data structures for $(c,\phi,\psi,\tau,c^{-2}\epsilon/4)$-$\maxip$ with $\phi,\psi$ defined in Corollary~\ref{coro:reg_ip_transform}. As transforming every $s\in S$ into $\psi(s)$ takes $O(dn)$. Therefore, the total the preprocessing time complexity is $O( d n^{1+o(1)}\cdot \kappa)$ and space complexity is $O((n^{1+o(1)}+dn)\cdot\kappa)$.
 
 {\bf Cost per iteration}
  Given each $w^t$, compute $\nabla g(w^t)$ takes ${\cal T}_{g}$. Next, it takes $O(d)$ time to generate $\phi(w^t)$ according to Corollary~\ref{coro:reg_ip_transform} based on $g(w^t)$ and $\nabla g(w^t)$. Next, according to Corrollary~\ref{coro:proj_maxip_lsh_adaptive},  it takes $O(dn^\rho \cdot \kappa)$ to retrieve $s^t$ from $\kappa$ {\lsh} data structures. After we select $s^t$, it takes $O(d)$ time to update the $w^{t+1}$. Combining the time for gradient calculation, {\lsh} query and $w^t$ update, the total complexity is $O(dn^{\rho}\cdot \kappa  +{\cal T}_{g})$ with $\rho:=  \frac{2(1-\tau)^2}{(1-c\tau)^2}-\frac{(1-\tau)^4}{(1-c\tau)^4}+o(1)$.
 \end{proof}

Similarly, we could extend the results to statements of Herding algorithm and policy gradient.

\begin{theorem}[Modified result of Sublinear Herding]\label{thm:herding_lsh_adaptive}
For any parameters $\epsilon, \delta$, there is an iterative algorithm (Algorithm~\ref{alg:herding_lsh_formal}) for Problem~\ref{prob:herding_fw}  with $c^{-2}\epsilon/4$ query quantization that takes $ O( dn^{1+o(1)}\cdot \kappa)$ time in pre-processing and $O((n^{1+o(1)}+dn)\cdot\kappa)$ space, takes $T = O(\frac{D_{\max}^2}{c^2\epsilon})$ iterations and $O(dn^{\rho}\cdot \kappa)$ cost per iteration, starts from a random $w^0$ from ${\cal B}$ as initialization point, updates the $w$ in each iteration based on Algorithm~\ref{alg:herding_lsh_formal}
and outputs $w^T \in \R^d$ from ${\cal B}$ such that
\begin{align*}
   \frac{1}{2} \|w^T-\mu\|_2^2 - \min_{w\in \cal{B}} \frac{1}{2} \|w-\mu\|_2^2 \leq \epsilon, 
\end{align*}
holds with probability at least $1-\delta$. Here $\rho:=  \frac{2(1-\tau)^2}{(1-c\tau)^2}-\frac{(1-\tau)^4}{(1-c\tau)^4}+o(1)$ and $\kappa:= d\log (nd D_X / (\lambda \delta ) )$.
\end{theorem}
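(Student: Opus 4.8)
The plan is to obtain this statement as a direct specialization of the adaptive Frank--Wolfe convergence result, Theorem~\ref{thm:frank_wolfe_lsh_formal_adaptive}, to the Herding objective, exactly mirroring how Theorem~\ref{thm:herding_lsh_formal} was derived from Theorem~\ref{thm:frank_wolfe_lsh_formal} in the non-adaptive case. First I would recall Lemma~\ref{lemma:1smooth}, which shows that $g(w)=\frac12\|w-\mu\|_2^2$ is convex and $1$-smooth, so we may take $\beta=1$; and the observation of \cite{blo12} that the recursion in Eq.~\eqref{eq:herding_orginal} is exactly the Frank--Wolfe iteration applied to $g$ over ${\cal B}={\cal B}(\Phi({\cal X}))$ (Problem~\ref{prob:herding_fw}). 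Hence Algorithm~\ref{alg:herding_lsh_formal} is a particular instance of the adaptive sublinear Frank--Wolfe scheme of Theorem~\ref{thm:frank_wolfe_lsh_formal_adaptive}, the only structural difference being that here the gradient $\nabla g(w^t)=w^t-\mu$ is available in $O(d)$ time rather than in some generic ${\cal T}_g$.

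Next I would invoke Theorem~\ref{thm:frank_wolfe_lsh_formal_adaptive} with $\beta=1$, using the $(c,\phi,\psi,\tau,c^{-2}\epsilon/4)$-$\maxip$ data structure built via Corollary~\ref{coro:proj_maxip_lsh_adaptive} on $\psi(S)$, with $\phi,\psi$ the transforms of Corollary~\ref{coro:reg_ip_transform}. This gives $T=O(D_{\max}^2/(c^2\epsilon))$ iterations, $O(dn^{1+o(1)}\cdot\kappa)$ preprocessing time, and $O((n^{1+o(1)}+dn)\cdot\kappa)$ space with $\kappa=d\log(ndD_X/(\lambda\delta))$, matching the general case verbatim because $\beta=1$. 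For the per-iteration cost, the three contributions are computing $\nabla g(w^t)=w^t-\mu$ in $O(d)$, forming $\phi(w^t)$ in $O(d)$, and performing $\kappa$ \lsh\ queries in $O(dn^\rho\cdot\kappa)$; the first two are absorbed, so the total is $O(dn^\rho\cdot\kappa)$ with no additive ${\cal T}_g$ term. The accuracy guarantee $\frac12\|w^T-\mu\|_2^2-\min_{w\in{\cal B}}\frac12\|w-\mu\|_2^2\le\epsilon$ with probability at least $1-\delta$ is then read off directly from the conclusion of Theorem~\ref{thm:frank_wolfe_lsh_formal_adaptive} applied to this $g$.

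The one place requiring care is checking that the adaptive-query bookkeeping closes under the substitution $\beta=1$ and that $D_X$ is finite. On the recursion side, the additive error $\lambda$ enters Eq.~\eqref{eq:bound_h_t} as an extra $\eta_t\lambda$ term; choosing $\lambda$ consistent with the $(c,\phi,\psi,\tau,c^{-2}\epsilon/4)$-$\maxip$ guarantee keeps the $A_t$-weighted contribution $O(D_{\max}^2)$ per step, so $e_{t+1}-e_t=O(c^{-2}D_{\max}^2)$ and $h_T=O(D_{\max}^2/(c^2(T+1)))\le\epsilon$ for $T=O(D_{\max}^2/(c^2\epsilon))$. On the probability side, since every iterate $w^t$ remains in the bounded hull ${\cal B}$ (being a convex combination of $w^{t-1}$ and $s^t\in S$), the derived queries $\phi(w^t)$ lie in a bounded region, so $D_X<\infty$; the lattice centers covering the query hull $Q$ are independent, so Corollary~\ref{coro:maxip_lsh_adaptive} and Corollary~\ref{coro:proj_maxip_lsh_adaptive} yield a union bound pushing the cumulative failure probability below $\delta$ with $\kappa=d\log(ndD_X/(\lambda\delta))$. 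I do not expect a genuine obstacle here: the content is essentially verifying that $\beta=1$, the $O(d)$ gradient oracle, and the quantization parameter are mutually consistent, so the ``hard part'' is careful bookkeeping rather than any new idea.
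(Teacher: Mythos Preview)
Your proposal is correct and matches the paper's approach exactly. The paper does not give an explicit proof of Theorem~\ref{thm:herding_lsh_adaptive}; it simply states after proving Theorem~\ref{thm:frank_wolfe_lsh_formal_adaptive} that ``Similarly, we could extend the results to statements of Herding algorithm and policy gradient,'' which is precisely your plan of specializing the adaptive Frank--Wolfe theorem via Lemma~\ref{lemma:1smooth} with $\beta=1$ and absorbing ${\cal T}_g=O(d)$ into the $O(dn^\rho\cdot\kappa)$ term, just as Theorem~\ref{thm:herding_lsh_formal} was derived from Theorem~\ref{thm:frank_wolfe_lsh_formal}.
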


\begin{theorem}[Modified result of Sublinear Frank-Wolfe Policy Optimization (SFWPO)]\label{thm:policy_gradient_lsh_adaptive}
Let ${\cal T}_{Q}$ denote the time for computing the policy graident. Let $D_{\max}$ denote the maximum diameter of action space and $\beta$ is a constant. Let $\gamma\in (0,1)$. Let $\rho\in(0,1)$ denote a fixed parameter.  Let $\mu_{\min}$ denote the minimal density of sates in ${\cal S}$. There is an iterative algorithm (Algorithm~\ref{alg:policy_gradient_lsh_formal}) with $c^{-2}\epsilon/4$ query quantization that spends $  O(dn^{1+o(1)}\cdot \kappa)$ time in preprocessing and $O((n^{1+o(1)}+dn)\cdot\kappa)$ space, takes $O(\frac{ \beta D_{\max}^2}{\epsilon^2(1-\gamma)^3\mu_{\min}^2})$ iterations and $O( dn^{\rho}\cdot \kappa +{\cal T}_{Q})$ cost per iterations, start from a random point $\pi^0_{\theta}$ as initial point, and output policy $\pi^T_{\theta}$ that has average gap $\sqrt{\sum_{s\in {\cal S}}g_{T}(s)^2}< \epsilon$ holds 
with probability at least $1-1/\poly(n)$, where $g_{T}(s)$ is defined in Eq.~\eqref{eq:rl_action_maxip_formal} and $\kappa:= d\log (nd D_X / (\lambda \delta ) )$.
\end{theorem}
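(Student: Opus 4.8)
The plan is to graft the query-quantization device of Theorem~\ref{thm:frank_wolfe_lsh_formal_adaptive} onto the convergence analysis of Theorem~\ref{thm:policy_gradient_lsh_formal}. The obstruction being handled is that the query $\phi(s,\pi^k_\theta(s))$ fed to the {\lsh} structure at iteration $k$ depends on $\pi^k_\theta$, hence on all earlier queries, so the sequence of queries is adaptive and the per-iteration failure probabilities are not independent. To repair this I would, for every state $s\in{\cal S}$, instantiate the quantized projected approximate $\maxip$ structure of Corollary~\ref{coro:proj_maxip_lsh_adaptive}: each query $\phi(s,\pi^k_\theta(s))$ is first snapped to the center $\hat q$ of the lattice cell of diameter $2\lambda$ containing it, and the structure is queried at $\hat q$. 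Because the lattice centers form a fixed $(dD_X/\lambda)^d$-net that is independent of the algorithm's trajectory, a single union bound over those centers, over all states in ${\cal S}$, and over the $\kappa=\Theta(d\log(ndD_X/(\lambda\delta)))$ independent repetitions of the structure pushes the total failure probability below $\delta$; taking $\delta=1/\poly(n)$ yields the stated success probability.

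Conditioned on no failure, the next step is to quantify how lattice snapping perturbs the Frank--Wolfe gap $g_k(s)$ of Eq.~\eqref{eq:rl_action_maxip_formal}. Since $\|\phi(s,\pi^k_\theta(s))-\hat q\|_2\le 2\lambda$ and, after the transforms of Corollary~\ref{coro:reg_ip_transform_rl}, all vectors live on the unit sphere, the retrieved action $\hat a^k_s$ obeys $\langle\phi(s,\pi^k_\theta(s)),\psi(\hat a^k_s)\rangle\ge c\cdot(\phi,\psi)\text{-}\maxip(s,{\cal C}(s))-O(\lambda)$. Multiplying by $D_xD_y$ (Corollary~\ref{coro:reg_ip_transform_rl}) and folding the scaling constant into $\lambda$, this is the quantized analogue of Eq.~\eqref{eq:g_hat_g_k}: $\hat g_k(s)\ge c\,g_k(s)-O(\lambda)$, where $\hat g_k(s)$ is the gap produced in line~\ref{line:g_k_lsh}. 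As in Case~1 of Theorem~\ref{thm:frank_wolfe_lsh_formal} there is the harmless alternative that the threshold $\tau$ already exceeds the relevant inner product, in which event $g_k(s)$ is already $O(\epsilon)$; so I would restrict attention to the complementary case and carry the additive-$\lambda$ estimate forward.

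Plugging this into the improvement lemma finishes the convergence part. Running line~\ref{line:pi_update_lsh} with step size $\alpha_k(s)=\frac{(1-\gamma)\mu_{\min}}{LD_s^2}\hat g_k(s)$ gives, exactly as in \cite{lhy21} and Lemma~\ref{lem:bound_j_mu}, $J(\mu,\pi^{k+1}_\theta)\ge J(\mu,\pi^k_\theta)+\frac{(1-\gamma)^2\mu_{\min}^2}{2LD_{\max}^2}\sum_{s\in{\cal S}}\hat g_k(s)^2$; the elementary bound $(ca-\lambda)^2\ge\frac{1}{2}c^2a^2-\lambda^2$ then lower-bounds the right-hand side by $\frac{c^2(1-\gamma)^2\mu_{\min}^2}{4LD_{\max}^2}\sum_s g_k(s)^2$ minus an $O(\lambda^2)$ term. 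Telescoping from $k=0$ to $T$ and using $J(\mu,\pi^T_\theta)-J(\mu,\pi^0_\theta)\le J(\mu,\pi^*_\theta)\le(1-\gamma)^{-1}$ as in Eq.~\eqref{eq:bound_G_k^2_lsh} gives $\sum_{k=0}^T\sum_s g_k(s)^2\le\frac{4LD_{\max}^2}{c^2(1-\gamma)^3\mu_{\min}^2}+O(\lambda^2 T/c^2)$; since $\sum_s g_k(s)^2$ is non-increasing in $k$ (the monotonicity of \cite{lhy21}), dividing by $T+1$ bounds $\sum_s g_T(s)^2$, and choosing $\lambda=\Theta(\epsilon)$ and $T=O(\frac{\beta D_{\max}^2}{c^2\epsilon^2(1-\gamma)^3\mu_{\min}^2})$ makes $\sqrt{\sum_s g_T(s)^2}<\epsilon$. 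The preprocessing time $O(dn^{1+o(1)}\kappa)$, the space $O((n^{1+o(1)}+dn)\kappa)$, and the per-iteration cost $O(dn^\rho\kappa+{\cal T}_Q)$ (with ${\cal T}_Q$ for the policy gradient, $O(d)$ for the transforms, $O(dn^\rho\kappa)$ for the $\kappa$ queries, and $\kappa=d\log(ndD_X/(\lambda\delta))$) are then read off from Corollary~\ref{coro:proj_maxip_lsh_adaptive} exactly as in Theorem~\ref{thm:policy_gradient_lsh_formal}.

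The part I expect to be delicate is that, in contrast with the clean linear-in-$\lambda$ correction of Theorem~\ref{thm:frank_wolfe_lsh_formal_adaptive}, here the quantization error enters the squared-gap potential $\sum_s g_k(s)^2$ quadratically, so one must check that the accumulated term $O(\lambda^2 T)$ --- after division by $T+1$ --- stays below $\epsilon^2$; this is what pins down $\lambda=\Theta(\epsilon)$ and forces the $\epsilon^{-2}$ (and the extra $c^{-2}$) dependence in $T$. A secondary point to keep honest is the adaptivity union bound: the net must be taken over lattice centers, which are trajectory-independent, rather than over the actual adaptive queries, and it must simultaneously cover the separate per-state data structures $\textsc{lsh}_s$.
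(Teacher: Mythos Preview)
Your proposal is correct. The paper does not actually write out a proof of this theorem: after proving Theorem~\ref{thm:frank_wolfe_lsh_formal_adaptive} it simply states ``Similarly, we could extend the results to statements of Herding algorithm and policy gradient'' and then lists Theorems~\ref{thm:herding_lsh_adaptive} and~\ref{thm:policy_gradient_lsh_adaptive} without argument. Your proposal is exactly the extension that ``similarly'' points to --- transplanting the lattice-quantization union bound of Corollary~\ref{coro:proj_maxip_lsh_adaptive} into the convergence skeleton of Theorem~\ref{thm:policy_gradient_lsh_formal} --- and you have in fact been more careful than the paper on the one point it glosses over: whereas in the Frank--Wolfe case the additive error $\lambda$ enters the recurrence for $h_t$ linearly, here it enters the squared-gap potential $\sum_s\hat g_k(s)^2$ quadratically, and your inequality $(ca-\lambda)^2\ge\tfrac12 c^2a^2-\lambda^2$ together with the choice $\lambda=\Theta(\epsilon)$ (consistent with the paper's asserted $\lambda=c^{-2}\epsilon/4$) is what keeps the accumulated $O(\lambda^2 T)$ term below $\epsilon^2$ after telescoping and dividing by $T+1$.
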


\end{document}